\documentclass{article} 
\usepackage[numbers]{natbib}
\usepackage[preprint]{neurips_2026}
\usepackage{macros}
\usepackage{hyperref}
\usepackage{url}
\usepackage[toc,page,header]{appendix}
\usepackage{geometry}

\usepackage{minitoc}

\title{A Unifying Framework for Concept-Based Representational Similarity}

\newcommand{\aut}[1]{\textbf{#1}}
\newcommand{\af}[1]{{\small #1}}

\newcommand{\afn}[1]{\textcolor{primary}{$^{#1}$}}

\author{
\aut{Grégoire Dhimoïla}{\afn{a,b}}\thanks{\textit{Correspondence to:} \textcolor{secondary}{gregoire.dhimoila@ens-paris-saclay.fr}\\ \indent ~~\textit{Source code:} \href{lelien.fr}{\raisebox{-2.8pt}{\includegraphics[height=10pt]{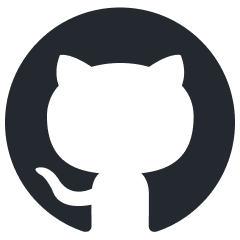}}
\texttt{\textcolor{secondary}{https://github.com/Parabrele/CoSAE}}}} \quad
\aut{Victor Boutin}{\afn{c}} \quad 
\aut{Agustin Picard}\afn{d} \quad
\aut{Thomas Fel}{\afn{e}} \quad 
\aut{Thomas Serre}{\afn{a}} 
\vspace{0mm}\\
\afn{a}\af{Brown University} \quad 
\afn{b}\af{ENS Paris Saclay} \quad 
\afn{c}\af{CNRS} \quad
\afn{d}\af{DEEL - IRT Saint Exupéry} \quad
\afn{e}\af{Goodfire}
\vspace{-4mm}
}

\begin{document}

\doparttoc %
\faketableofcontents %

\maketitle

\begin{abstract}
Learned representations across models and modalities often exhibit striking structural similarities, suggesting shared underlying concept decompositions. However, concept alignment remains poorly defined: existing approaches optimize different objectives under the same terminology, obscuring what is actually aligned.

We propose a unifying framework that decomposes alignment along two axes: \emph{what} is aligned (representations vs.\ concepts) and \emph{at what level} (instance-wise vs.\ distributional). 
This induces four corresponding properties---instance-wise and distributional variants of translation and concept consistency---and reveals precisely which of these guarantees existing methods provide.
We further introduce \InterVenchA, an intervention-based benchmark that separately measures extraction quality, translation quality, and concept consistency. 
Through theory and experiments, we show that commonly assumed equivalences between alignment objectives fail in practice: optimizing one property does not reliably recover the others, and purely unsupervised %
objectives fail to recover meaningful instance-level alignment. We then propose the Coupled Sparse Autoencoder (CoSAE), which jointly enforces complementary alignment objectives. Strong alignment emerges only in this regime. Surprisingly, as little as 0.1\% paired data is sufficient to recover instance-level alignment when anchoring distributional objectives.

Overall, our results show that concept alignment is fundamentally multi-objective: it must be defined, measured, and optimized as such.
\end{abstract}

\section{Introduction}

\paragraph{}A recurring finding in modern deep learning is that internal representations learned by different models are often far less arbitrary than their parameterizations suggest. 
Across architectures, random initializations, and training objectives, independently trained networks frequently exhibit substantial geometric similarity~\citep{kornblith2019similarity,bansal2021revisiting}.
This phenomenon extends across modalities: models such as CLIP~\citep{radford2021}, LiT~\cite{zhai2022lit}, and SigLIP~\cite{zhai2023sigmoid} learn representations that align modalities strongly enough to support zero-shot transfer.
Together, these results suggest that learned representations share non-trivial structure across systems, including across modalities.
\vspace{-2mm}

\paragraph{}If learned representations do share non-trivial structure, the key question is how to characterize it.
Representational similarity metrics (e.g., CCA~\citep{hoteling1936cca}, CKA~\cite{kornblith2019similarity}, and variants~\cite{gretton2007kernel,raghu2017svcca,ding2021grounding}) provide a coarse geometric view of representational overlap. They do not, however, identify which features are shared across systems, nor whether those features admit a common interpretable decomposition.
Sparse autoencoders (SAEs)~\cite{cunningham2023sparse,bricken2023monosemanticity,fel2023holistic} provide one such decomposition by expressing neural activations in terms of sparse, interpretable features---\emph{concepts}---and recent work has begun extending this approach across models and modalities~\cite{lindsey2024sparse,thasarathan2025universal,dhimoila2026cross}.
This line of work naturally raises the question of \emph{concept alignment}, which we use informally to mean that different systems capture the same underlying structure in their representations. In practice, however, this notion splits along two distinct axes. On the one hand, \emph{latent alignment} asks whether two system's latent spaces can be characterized by a shared concept dictionary---i.e., whether one’s representations can be mapped into another’s by swapping the decoder's dictionaries. On the other hand, \emph{concept consistency} asks whether both systems recover different compressed views of a shared underlying concept space.
Different methods optimize different objectives under the same terminology (see~\cite{klabunde2024survey} for a survey), leaving it unclear which properties are enforced, and when apparent alignment reflects genuinely shared structure.
\vspace{-2mm}

\begin{figure}[t]
    \centering
    \includegraphics[width=0.99\textwidth]{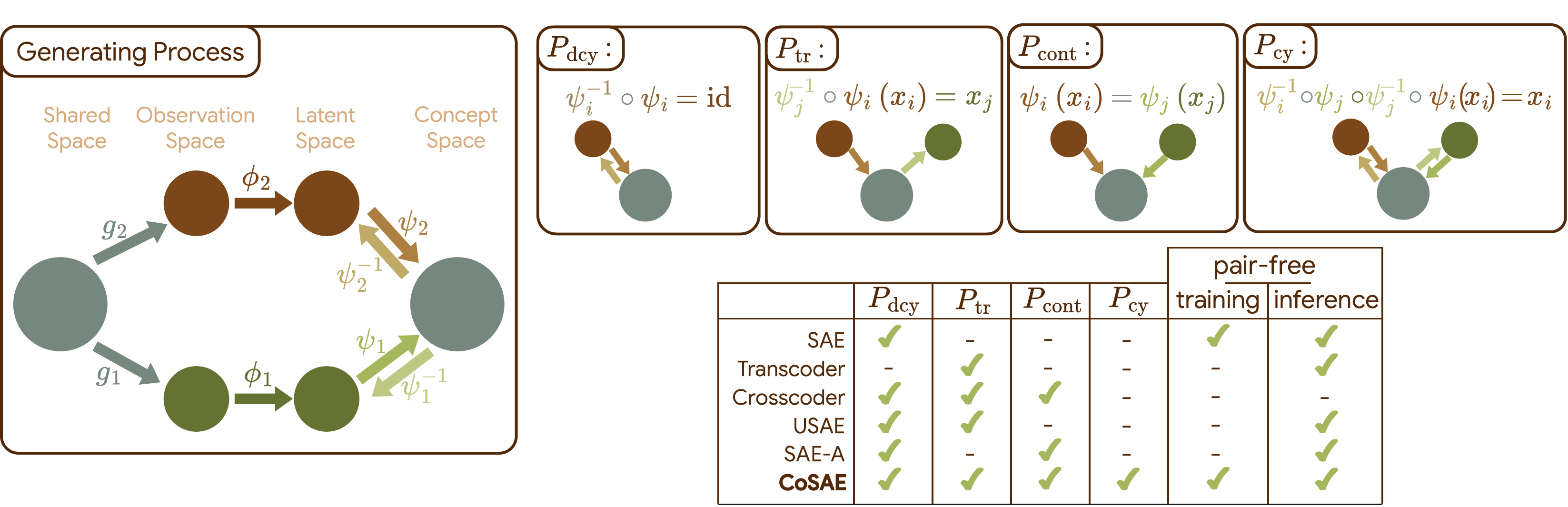}
    \caption{Summary of the proposed framework. \textbf{Left :} Illustration of a generative process where observations stem from a shared space through modality-specific generators $g_i$. Feature extractors ($\phi_i$) learn to invert this generative process up to some transform $\psi_i$. \textbf{Top~right~:} Different properties of $\psi_i$ that achieve different types of alignment. \textbf{Bottom~right~:} summary of previous SAE-based methods for concept extraction, along with which property they are trained to satisfy. \textit{Pair-free} indicates whether instance-wise pairs of inputs are required for the alignment during training and inference of each SAE.}
    \label{fig:framework}
\end{figure}

\paragraph{}In this paper, we argue that these distinctions define a complete framework for concept alignment. Any alignment objective must specify both \emph{what} is being aligned---representations or concepts---and \emph{at what level} alignment is required---for individual instances or in distribution.
This yields four fundamental alignment properties. When alignment is defined over representations, the relevant property is the ability to map one representation to another, which we call \textbf{translation}; when alignment is defined over concepts, the relevant property is the recovery of shared features across systems, which we call \textbf{concept consistency}. Each property can, in turn, be required either instance-wise or distributionally.
Existing methods instantiate different parts of this space. Our framework makes these choices explicit, providing a rigorous language for comparing existing methods---from standard SAEs~\citep{cunningham2023sparse} and transcoders~\citep{dunefsky2024transcoders} to crosscoders~\citep{lindsey2024sparse}, Universal SAEs (USAEs)~\citep{thasarathan2025universal}, and SAE-Aligned (SAE-A)~\citep{dhimoila2026cross}. This decomposition provides a precise language for comparing existing methods, analyzing how the corresponding alignment properties relate to one another, and clarifying the assumptions they implicitly make (\Cref{fig:framework}).

Building on this framework, we make the following contributions:
\vspace{-2mm}
\begin{itemize}[leftmargin=5mm]
    \item \textbf{A unifying framework for concept alignment.} We formalize concept alignment with four distinct properties---translation, concept consistency, instance-wise and distributional alignment---and use this decomposition to contextualize existing methods in a common design space.\vspace{-1mm}
    
    \item \textbf{A diagnostic benchmark for concept alignment.} We introduce \InterVenchA, which naturally extends SAEBench~\citep{karvonen2025saebench} evaluation to independently measure extraction quality, translation quality, and concept consistency.\vspace{-1mm}

    \item \textbf{An empirical audit of alignment objectives.} On both synthetic and real embeddings, we show that several commonly assumed relationships between alignment properties do not hold in practice: translation does not reliably yield concept consistency, cycle consistency is not a faithful proxy for alignment, and distributional matching does not guarantee instance-level alignment.\vspace{-1mm}
    
    \item \textbf{A practical recipe under scarce supervision.} We introduce the \emph{Coupled SAE} (CoSAE), a modular sparse autoencoder framework that combines complementary regularizers, and show that anchoring distributional objectives with as little as 0.1\% paired data is sufficient to recover strong instance-level alignment.\vspace{-1mm}

    \item \textbf{State-of-the-art performance for concept alignment.} We show that CoSAE outperforms prior state-of-the-art alignment methods, including crosscoders~\citep{lindsey2024sparse}, Universal SAEs (USAEs)~\citep{thasarathan2025universal}, and SAE-Aligned (SAE-A)~\citep{dhimoila2026cross}, across our alignment metrics.
\end{itemize}
\vspace{-1mm}

Taken together, these findings are consistent with a parsimonious explanatory framework: concept alignment is a multi-dimensional problem; common surrogate objectives only capture part of it; and the strongest practical performance arises from combining complementary regularizers with minimal anchoring supervision. Beyond the specific SAE instantiation studied here, we hope this framework provides a clearer basis for defining, evaluating, and optimizing concept alignment.

\section{Related Work}
\label{sec:RW}

\paragraph{Representation Alignment.} A large body of work has studied representation similarity, aiming to characterize whether two neural representations encode similar information. Early approaches rely on statistical or geometric comparisons between representation spaces, such as Canonical Correlation Analysis (CCA)~\citep{hoteling1936cca,raghu2017svcca}, Centered Kernel Alignment (CKA)~\citep{kornblith2019similarity}, and related methods~\citep{kriegeskorte2008representational,gretton2007kernel,ding2021grounding}. These methods provide a coarse-grained view of similarity by identifying shared linear subspaces or measuring invariance under orthogonal transformations. While these approaches have been instrumental in establishing that independently trained models often converge to similar representations~\citep{ding2021grounding,huh2024platonic,koepke2026back}, they remain limited in their interpretability~\citep{ding2021grounding,groger2026revisitingplatonicrepresentationhypothesis}. In particular, they characterize the degree to which representations are similar, but not \emph{how} or \emph{why}. %
These limitations have motivated a shift from coarse, single-score metrics toward finer-grained analyses that operate at the level of individual features or concepts~\citep{lindsey2024sparse}.

\paragraph{Interpretability: Towards Concept-based Analysis.} The interpretability field aims at understanding the internal workings of neural networks. Many tools have been developed to dissect learned representations~\citep{bau2017network,olah2017feature,gilpin2018explaining}. The overall effort is currently shifting from feature visualization~\citep{olah2017feature,fel2023unlocking} and attribution methods~\citep{zeiler2013visualizing,petsiuk2018rise,sundararajan2017axiomatic} to concept-based explanations~\citep{ghorbani2019towards,zhang2021invertible,elhage2022superposition,fel2023holistic}. This shift exemplifies the trend of moving from input-based explanations to internal explanations. Concept superposition and extraction originated in sparse coding~\citep{olshausen1996emergence,olshausen1997sparse,olshausen2004sparse,lee2006efficient,rentzeperis2023beyond}, and are grounded in compressed sensing and random projection theory~\citep{johnson1984extensions,larsen2017optimality}. It is therefore typically framed as a dictionary learning problem~\citep{rubinstein2010dictionaries,elad2010sparse,tovsic2011dictionary,dumitrescu2018dictionary}: Sparse autoencoders (SAEs) have now become the default approach for concept extraction~\citep{gao2024scaling,rajamanoharan2024jumping,bussmann2024batchtopk,fel2025archetypal,costa2025flat} following their empirical success in interpretability for both vision~\citep{gorton2024missing,fel2025archetypal,fel2025into} and language modeling~\citep{cunningham2023sparse,bricken2023monosemanticity,surkov2024unpacking}.

\paragraph{Concept-based Representation Alignment.} Recent work moved from global geometric similarity to concept-level similarity. Instead of comparing representations as monolithic objects, these approaches aim to decompose them into interpretable features and study how these features align across models or modalities. SAEs have emerged as a central tool in this setting, enabling the extraction of structured, often interpretable feature dictionaries. Crosscoders~\citep{lindsey2024sparse} enforce a shared coding space across representations through architectural constraints, while Universal SAEs (USAEs)~\citep{thasarathan2025universal} (resp. Aligned SAEs~\citep{dhimoila2026cross}) rely on regularization through translation (resp. concept consistency) objectives to align feature spaces across models. Despite sharing a common goal, these methods differ significantly in the properties they enforce and in the assumptions they implicitly rely on. More details on how these methods fit into our framework are provided in \cref{app:extended_RW}.

\section{Framework Formulation}

\begin{wrapfigure}{r}{0.6\textwidth}
\vspace{-15mm}
    \centering
    \begin{subfigure}[h]{0.49\linewidth}
        \centering
        \includegraphics[width=0.99\linewidth]{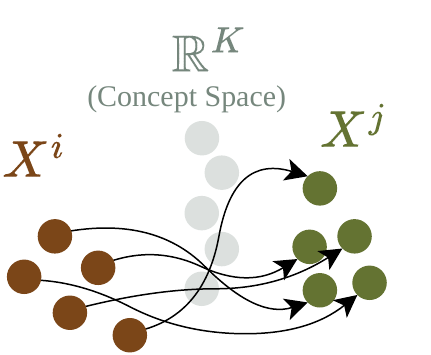}
        \caption{$\Ptr : \, (\psi^{-1}_j \circ \psi_i) (\xvec_i) = \xvec_j$}
    \end{subfigure}
    \hfill
    \begin{subfigure}[h]{0.49\linewidth}
    \centering
        \includegraphics[width=0.99\linewidth]{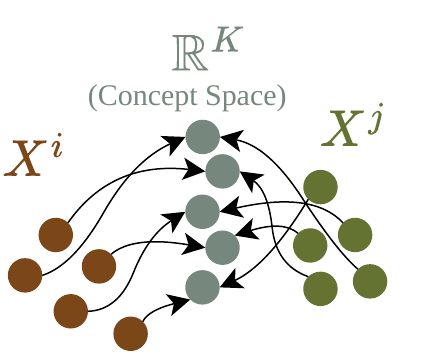}
        \caption{$\Pcont: \, \psi_i(\xvec_i) = \psi_j(\xvec_j)$}
    \end{subfigure}
    \caption{Element-wise alignment.}
    \label{fig:element_wise}
\end{wrapfigure}

\paragraph{Concept Alignment.} Let $I$ be an index set whose elements correspond to representation instances. Each $i \in I$ may, for example, index the same modality embedded in different models, different layers of a single model, checkpoints of a single model at different training steps, modality-specific instantiations of an abstract idea, etc.

Let $X$ be a set of data points equipped with a measure $\mu$, and $\phi_i : X \to \R^d$ be a representation function for each $i \in I$. Denote by $X^i \coloneq \phi_i(X)$ the set of representations of $X$ under $\phi_i$, and by $\mu_i \coloneq \phi_i \# \mu$ the pushforward measure on $X^i$ induced by $\phi_i$. Let $\psi_i : X^i \to \R^K$ be concept extraction functions with corresponding decoders $\psi_i^{-1} : \R^K \to X^i$ abusively denoted with the inverse notation for readability.

Under this setting, and with $\xvec_i = \phi_i(x)$ and $\xvec_j = \phi_j(x)$, we can define the following alignment properties of \emph{translation} and \emph{concept consistency} at the instance level (\Cref{fig:element_wise})

\begin{wrapfigure}{l}{0.6\textwidth}
\vspace{0mm}
    \centering
    \begin{subfigure}[h]{0.49\linewidth}
        \centering
        \includegraphics[width=0.95\linewidth]{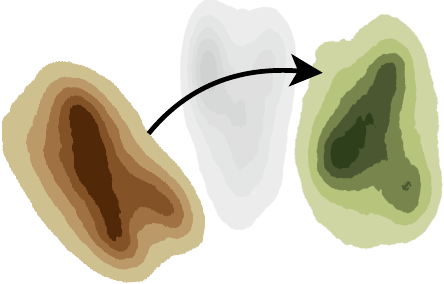}
        \caption{$\Ptrdist\!: \! (\psi^{\hspace{-1mm}-1}_j\!\circ \psi_i) \# \mu_i \!=\!\mu_j$}
    \end{subfigure}
    \hfill
    \begin{subfigure}[h]{0.49\linewidth}
    \centering
        \includegraphics[width=0.95\linewidth]{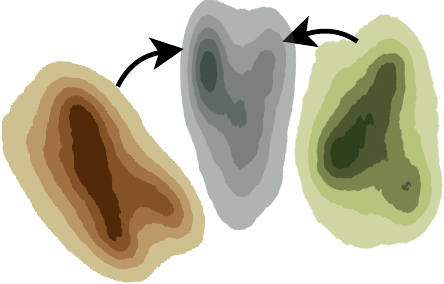}
        \caption{$\Pcontdist\!: \! \psi_i \# \mu_i = \psi_j \# \mu_j$}
    \end{subfigure}
    \caption{Distributional alignment.}
    \label{fig:distributional}
\end{wrapfigure}

\paragraph{}If the $X^i$ are such that instance-wise correspondences are not available, i.e., we do not have access to the underlying $X$, we can translate these into distributional alignment properties as follows.
Let $(\psi^{-1}_j \circ \psi_i) \# \mu_i$ define a measure on $X^j$, and $\psi_i \# \mu_i$ define a measure on the concept space $\R^K$. We define the distributional alignment properties corresponding to translation and concept consistency in \Cref{fig:distributional}.

\begin{wrapfigure}{r}{0.7\textwidth}
\vspace{-7mm}
    \centering
    \begin{subfigure}[h]{0.49\linewidth}
        \centering
        \includegraphics[width=0.8\linewidth]{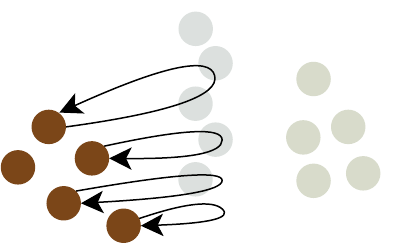}
        \caption{$\Pdcy: \, (\psi^{-1}_i \circ \psi_i) (\xvec_i) = \xvec_i$}
    \end{subfigure}
    \hfill
    \begin{subfigure}[h]{0.49\linewidth}
    \centering
        \includegraphics[width=0.8\linewidth]{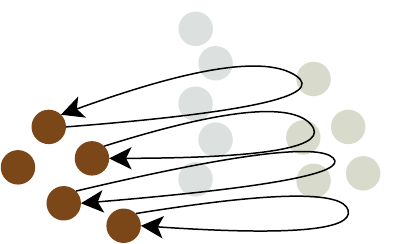}
        \caption{$\Pcy\!: \! (\psi^{\!\!-1}_i\hspace{-0.0mm}\circ\hspace{-0.0mm}\psi_j\hspace{-0.0mm}\circ\hspace{-0.0mm}\psi_j^{\!\!-1}\hspace{-0.0mm}\circ\hspace{-0.0mm}\psi_i) (\xvec_i) = \xvec_i$}
    \end{subfigure}
    \caption{Self reconstruction: demi and full cycle.}
    \label{fig:reconstruction}
\end{wrapfigure}

\paragraph{}In addition to these alignment properties, we can also define standard autoencoding properties, effectively stating that $\psi^{-1}_i$ is indeed the inverse of $\psi_i$ on the support of $\mu_i$.
The \emph{demi-cycle} property $\Pdcy$ is the standard reconstruction objective of autoencoders, while the full \emph{cycle} $\Pcy$ is an alternative way to couple autoencoders without requiring instance-wise correspondences through reconstruction only. These properties are defined in \Cref{fig:reconstruction}.

\paragraph{Alignment Dualities.} \Cref{prop:PtrPcont} (see also \cite{devillers2024semi}) shows that, under reconstruction and injectivity assumptions on the decoder over the shared concept support, translation and concept consistency are equivalent :
\vspace{-4mm}
\begin{align}
\label{eq:duality}
\Ptr \Leftrightarrow \Pcont
\end{align}
In \cref{sec:translation_vs_concept_consistency}, we show that this equivalence breaks down empirically, even in controlled synthetic settings. A second asymmetry arises between instance-wise and distributional alignment: the instance-wise properties imply their distributional counterparts (\Cref{prop:PtrPtrdist}), but the converse generally fails without strong assumptions on the distributions and function class. In \cref{sec:anchoring}, we show that a few anchor pairs can partially restore this bridge in practice. Finally, while cycle consistency has been used as a surrogate in pair-free translation settings~\citep{zhu2017unpaired,artetxe2017unsupervised}, we show in \cref{sec:cycle_consistency} that it is not a reliable proxy for alignment in our setting.

Recent works on joint concept recovery focus only on subsets of these properties, with little to no discussion of the others and their interactions~\citep{lindsey2024sparse,thasarathan2025universal,dhimoila2026cross}. This motivates the need for a unifying framework to understand the properties and assumptions underlying the different methods used in the literature, and to design new methods that better enforce these properties for future concept alignment studies.

\subsection{Operationalizing the framework}
\label{sec:operationalizing}

\vspace{-2mm}
\paragraph{Regularization.} Each of the properties defined above can be operationalized as regularization terms in the training loss~:
\vspace{-1mm}
\begin{align}
\label{eq:loss}
\mathcal{L}_{\mathrm{SAE}} = \dcycolor{\alpha_{\mathrm{dcy}}} \Ldcy + \cycolor{\alpha_{\mathrm{cy}}} \Lcy + \trcolor{\alpha_{\mathrm{tr}}} \Ltr + \trdistcolor{\alpha^{\mathrm{dist}}_{\mathrm{tr}}} \Ltrdist + \contcolor{\alpha_{\mathrm{cont}}} \Lcont + \contdistcolor{\alpha^{\mathrm{dist}}_{\mathrm{cont}}} \Lcontdist
\vspace{-4mm}
\end{align}

\paragraph{Linear Models.} In the linear regime, where both the encoders and decoders are linear maps, the different objectives and their combinations collapse to a few analytical solutions. \textbf{(A)} Without coupling, i.e., with only reconstruction, we recover PCA solutions for each domain independently (\Cref{lem:pca}). \textbf{(B)} Introducing coupling through concept consistency (\Cref{lem:cca,lem:cont_dcy}) or translation (\Cref{lem:tr_rrr,lem:tr_dcy}) both lead to the same solution of the CCA for the encoders, while decoders are some variation of optimal linear readouts of the CCA projections. This perfectly illustrates the duality between $\Ptr$ and $\Pcont$ in the linear case. \textbf{(C)} Distributional alignment fails to introduce any meaningful coupling, and only enforces covariance matching \cref{app:linear_dist}. \textbf{(D)} Finally, although cycle consistency appears to introduce coupling, it actually fails to do so in the linear case, and only enforces PCA solutions for each domain independently~\cref{app:linear_cy}.

\vspace{-2mm}
\paragraph{SAE.} As discussed in \cref{sec:RW}, over the recent years, the SAE framework for sparse coding has become the default concept extraction in the community. We adopt the \emph{batchtopk} architecture throughout this work, with details in \cref{app:training_details}.

\vspace{-2mm}
\paragraph{Reconstruction.} As is standard practice~\citep{gao2024scaling}, the demi-cycle consistency property $\Pdcy$ can simply be operationalized as an $\ell_2$ reconstruction loss (\Cref{eq:reconstruction_loss}). The translation $\Ptr$ and cycle properties $\Pcy$ can also be operationalized as an $\ell_2$ loss on the translated representations (\Cref{eq:translation_loss,eq:cycle_loss}).
\begin{align}
\label{eq:reconstruction_loss}
\Ldcy &\coloneqq \E_{\xvec_i \sim \mu_i} \| (\psi^{-1}_i \circ \psi_i) (\xvec_i) - \xvec_i \|^2\\
\label{eq:translation_loss}
\Ltr &\coloneqq \E_{\xvec \sim \mu} \| (\psi^{-1}_j \circ \psi_i) (\xvec_i) - \xvec_j \|^2\\
\label{eq:cycle_loss}
\Lcy &\coloneqq \E_{\xvec_i \sim \mu_i} \| (\psi^{-1}_i \circ \psi_j \circ \psi_j^{-1} \circ \psi_i) (\xvec_i) - \xvec_i \|^2
\vspace{-6mm}
\end{align}
\paragraph{Concept Consistency.} Crosscoders \citep{lindsey2024sparse} enforce $\Pcont$ not through regularization of disjoint encoders, but through summation of all encoders' outputs to create a common code: $\psi_i'(\xvec_i) = \sum_{j \in I} \psi_j(\xvec_j)$. This is a very strong constraint, since interpretability is imposed only on the aggregated code rather than on the individual encoder outputs. The duality between $\Ptr$ and $\Pcont$ is here afforded for free by the fact that the code of $\xvec_i$ already contains all information about $\xvec_j$. We show in \cref{sec:recipe} that the standalone $\psi_i$ are not able to perform translation in general, and do not even satisfy $\Pcont$. %
To resolve this tension, we replace the summation in $\psi_i'$ with a regularization term for $\Pcont$. %
We simply use an $\ell_2$ loss on the difference between the codes (\Cref{eq:contrastive_loss}).
\begin{align}
\label{eq:contrastive_loss}
\Lcont \coloneqq \E_{x \sim \mu} \left\| \psi_i(\xvec_i) - \psi_j(\xvec_j) \right\|^2
\vspace{-2mm}
\end{align}

\paragraph{Distributional alignment.}
When instance-wise correspondences are unavailable, enforcing $\Ptrdist$ or $\Pcontdist$ requires matching distributions rather than individual samples. A natural approach is to minimize a discrepancy between pushforward measures, for instance, using optimal transport. However, distributional distances like the Wasserstein are both computationally expensive and poorly behaved in high dimensions and mini-batch settings \citep{fournier2015rate,peyre2018computational}, making them ill-suited for large-scale SAE training.

A practical alternative is provided by the Cramer-Wold theorem, which states that two distributions are equal if and only if their one-dimensional projections along all directions are equal. We take inspiration from \citet{balestriero2025lejepa} and use empirical characteristic function (ECF) based tests for our distributional losses, as defined in \cref{app:L_dist}. These losses are equivalent to sliced Maximum Mean Discrepancies (MMDs):
\begin{equation}
    \Ltrdist \coloneqq \mathrm{MMD}\left((\psi^{\hspace{-1mm}-1}_j\!\circ \psi_i) \# \mu_i, \mu_j\right) \quad \Lcontdist = \mathrm{MMD}\left(\psi_i \# \mu_i, \psi_j \# \mu_j\right)
\end{equation}

\section{Framework Validation}
\vspace{-1mm}

\subsection{Experimental Protocol}
\label{sec:protocol}
\vspace{-1mm}

\subsubsection{Metrics}
\vspace{-1mm}

\paragraph{\InterVench for Concept Extraction.} For SAE quality in real embeddings, we do not have access to ground truth concepts. Therefore, we rely on standard metrics, derived from SAEBench \citep{karvonen2025saebench}, as a proxy for concept extraction quality. SAEBench is a battery of tests designed to evaluate the quality of concepts extracted by SAEs in the context of language models. We adapt the relevant metrics to our setting and use them to evaluate the quality of the concepts extracted by our CoSAE under different regularization regimes.

Specifically, we focus on reconstruction and intervention-based metrics from SAEBench, which we further extend and factorize into the following classes.
First, \InterVenchLatent metrics evaluate the quality of the dictionary to characterize latent geometry. For example, probes are parameterized as sparse combinations of dictionary atoms. An overall high score on these metrics means that \textbf{the dictionary} is both \textbf{a good descriptor of latent geometry} and \textbf{an actionable tool} for downstream task uses.
Second, \InterVenchConcept metrics evaluate \textbf{the quality of the concept space as its own representation space.}
We report a single aggregate \emph{extraction score} $\Mextract$. More details in \cref{app:InterVench}.
\vspace{-1mm}

\paragraph{\InterVenchA for Concept Alignment.} We extend \InterVench from concept extraction to concept alignment by turning self-reconstruction metrics into transfer metrics. Concretely, we replace self-reconstruction with cross-domain translation, and reinterpret intervention-based metrics as tests of whether concepts learned in one domain transfer meaningfully to another. This yields two complementary views of alignment: \InterVenchLatent measures translation quality independently of concept consistency (with an aggregate score $\Mtr$), while \InterVenchConcept measures concept consistency independently of translation (through the aggregate score $\Mcont$). The mean of $\Mtr$ and $\Mcont$ is denoted by the \emph{alignment score} $\Malign$.
More details can be found in \cref{app:InterVenchA}.
\vspace{-1mm}

\paragraph{Uncertainty.} For readability, all tabulated values are rounded to $10^{-3}$. The corresponding uncertainties are consistently below this precision, so no significant information is lost. Uncertainties are obtained by repeating all experiments 10 times and computing the standard deviation across runs.
\vspace{-1mm}

\subsubsection{Datasets and Models}
\label{sec:datasets_models}

\paragraph{Synthetic DGP.} We design a synthetic data-generating process (DGP) that follows precisely a generative formulation of the platonic representation hypothesis (\cref{hyp:generative_process,hyp:sparse_inversion}). More details can be found in \cref{app:synthetic_DGP}. This allows us to have full control over the underlying data structure and to evaluate the ability of different methods to recover this structure under different regularization settings.
\vspace{-1mm}

\paragraph{Vision Encoders.} We conduct a battery of cross-model alignment experiments on vision models. We select a ViT ({\verb "google/vit-base-patch16-224"}), \citep{dosovitskiy2021}, DinoV2 ({\verb "facebook/dinov2-base"}) \citep{oquab2023dinov2}, and SigLIP ({\verb "google/siglip-base-patch16-224"}) \citep{zhai2023sigmoid} from Hugging Face \citep{wolf2019huggingface}. The choice of these models was primarily to ensure a clean comparison with \citet{thasarathan2025universal}. These models are used to extract embeddings for the ImageNet training and validation splits. SAEs are trained on these embeddings (details in \cref{app:training_details}). For evaluation of both concept extraction and alignment quality, we rely on our \InterVenchA.

\paragraph{Multimodal Encoders.} Finally, we also conduct cross-modal alignment experiments on multimodal encoders. We select CLIP ({\verb "openai/clip-vit-base-patch32"}) \citep{radford2021} and OpenCLIP-L ({\verb "laion/CLIP-ViT-L-14-laion2B-s32B-b82K"}) \citep{cherti2023reproducible}. These models are used to extract embeddings for the COCO dataset \citep{coco2014}, for both images and captions. Here, we focus on SAEs coupled to the vision and text towers of the same model, as the scope of this setting is cross-modal rather than cross-model alignment. Only the vision encoder's settings results are reported in the main body of this paper. See \cref{app:results_ablation} for the other settings.
\vspace{-2mm}

\subsection{Alignment Duality in Practice}
\vspace{-2mm}

\subsubsection{Translation vs Concept Consistency}
\label{sec:translation_vs_concept_consistency}
\vspace{-2mm}

\paragraph{Question.} Does the duality between translation and concept consistency hold under approximate satisfaction of the properties and assumptions in practice? In other words, is regularization for one sufficient to get the other for free, or do we need to explicitly regularize both?
\vspace{-2mm}
\paragraph{Setup.} In both synthetic and real embedding settings, we train SAEs with instance-wise losses. We then validate the consistency of the quality of concept extraction before studying that of concept alignment.
\vspace{-2mm}
\paragraph{Results.}
(\emph{i}) We first notice that the $\Lcont$ regularization alone seems to significantly decrease the quality of concept extraction, while $\Ltr$ alone leaves it mostly unchanged.
(\emph{ii}) In the synthetic DGP setting, regularizing with $\Lcont$ leads to both satisfaction of $\Pcont$ and $\Ptr$, while regularizing with $\Ltr$ only leads to satisfaction of $\Ptr$. It would thus appear that even in controlled settings, USAE-style regularization is not sufficient to guarantee alignment, with only the $\Pcont \Rightarrow \Ptr$ side of the duality being satisfied.
(\emph{iii}) In real embeddings, neither direction of the duality holds in practice: optimizing translation improves translation, and optimizing concept consistency improves concept consistency, but neither reliably recovers the other (see \Cref{tab:translation_vs_concept_consistency_real} and \cref{app:results_ablation}). The practical implication is clear: both properties must be explicitly regularized.

\begin{table*}[h]
\centering
\begin{subtable}{0.58\textwidth}
  \centering
  \scalebox{0.99}{\begin{tabular}{lcccccc}
    \toprule
    Regularization ~~ & $\Mextract$ ($\uparrow$) & $\Mtr$ ($\uparrow$) & $\Mcont$ ($\uparrow$)~~~~ \\
    \midrule
    $\Ldcy$              & 0.913  & 0.018  & 0.000~~~~ \\
    \midrule
    $\Ldcy + \Ltr$       & 0.902  & \best{0.728}  & 0.217~~~~ \\
    $\Ldcy + \Lcont$     & 0.576  & 0.055  & \best{0.602}~~~~ \\
    $\Ldcy + \Lcy$       & \best{0.918} &    0.072   & 0.000~~~~ \\
    \bottomrule
  \end{tabular}}
  \caption{\textbf{In practice, neither translation nor concept consistency is sufficient}. Cycle consistency alone is useless.}
  \label{tab:translation_vs_concept_consistency_real}
  \vspace{-2mm}
\end{subtable}
\hfill
\begin{subtable}{0.38\textwidth}
  \centering
  \scalebox{0.99}{\begin{tabular}{ccccc}
    \toprule
    ~~~~$\Malign$ ($\uparrow$) & $R^2_{\mathrm{cy}}$ ($\uparrow$) \\
    \midrule
    ~~~~0.009  & 0.561 \\
    \midrule
    ~~~~\best{0.473}  & 0.121 \\
    ~~~~0.329  & \best{0.564} \\
    ~~~~0.036  & 0.390 \\
    \bottomrule
  \end{tabular}}
  \caption{There is no empirical link between \textbf{cycle consistency} and \textbf{alignment quality.} }
  \label{tab:cycle_consistency_real}
  \vspace{-2mm}
\end{subtable}
\caption{SAE quality on \textbf{vision embeddings} under different regularization regimes.}
\vspace{-4mm}
\end{table*}

\subsubsection{On the effect of cycle consistency}
\label{sec:cycle_consistency}
\vspace{-1mm}

\paragraph{} The second duality we investigate is that of alignment and cycle consistency. As mentioned above, \citet{zhu2017unpaired,artetxe2017unsupervised} use cycle consistency as a surrogate for translation in NLP, enabling pair-free training.
\vspace{-2mm}
\paragraph{Question.} Is cycle consistency a faithful surrogate for alignment in our framework?
\vspace{-2mm}
\paragraph{Setup.} Again, we train SAEs on both synthetic and real embedding settings with different regularization regimes. Specifically, we focus here on $\Ltr$ only, $\Lcont$ only, and $\Lcy$ only. We then evaluate the quality of concept extraction and alignment as before, with the additional measure of the $R^2$ associated with the cycle consistency property (\cref{app:InterVench}), denoted by $R^2_{\mathrm{cy}}$.
\vspace{-2mm}
\paragraph{Results.}
(\emph{i}) Unsurprisingly, no regularization at all is enough to get decent $R^2_{\mathrm{cy}}$. Indeed, $\Ldcy$ already regularizes for SAEs to learn the identity function in one direction. Though the other direction is not explicitly regularized for, it appears to come for free.
(\emph{ii}) By adding the translation regularization term $\Ltr$, the $R^2_{\mathrm{cy}}$ drops significantly to below average levels.
(\emph{iii}) In both synthetic and real embedding settings, we find that regularizing with $L_{\mathrm{cy}}$ alone does not yield any sort of alignment. This suggests that cycle consistency is not a faithful surrogate for alignment, and that the assumptions underlying the duality are not satisfied in practice. The takeaway is blunt: in our setting, cycle consistency is not a faithful surrogate for alignment. It can be high even when alignment is poor---and low despite strong alignment.
Therefore, cycle consistency should not be used as a surrogate for alignment in our framework. Results in \Cref{tab:cycle_consistency_real} and \cref{app:results_ablation}.

\subsubsection{Instance-wise vs Distributional Alignment}
\label{sec:instance_vs_distribution_alignment}
\vspace{-1mm}

\paragraph{Question.} Do distributional objectives behave as faithful surrogates for instance-wise alignment?
\vspace{-2mm}
\paragraph{Setup.} We take the exact same experiment as in \cref{sec:translation_vs_concept_consistency}. We additionally consider SAEs regularized with $\Ltrdist$ only and $\Lcontdist$ only, and focus on comparing the instance-wise and distributional alignment objectives with no regard to the duality between $\Ptr$ and $\Pcont$.
\vspace{-2mm}
\paragraph{Results.}
(\emph{i}) In all settings, we find that, unsurprisingly, instance-wise regularization leads to both instance-wise~(\Cref{tab:instance_vs_distribution_alignment_real}) and distributional alignment~(\cref{app:results_ablation}). 
(\emph{ii}) Perhaps more surprisingly, we find that the converse is true in the synthetic DGP setting: distributional regularization also leads to both instance-wise and distributional alignment (\cref{app:results_ablation}). This, however, is likely due to the synthetic DGP's oversimplification, which can collapse the unidentifiability of the transport problem.
(\emph{iii}) In real embeddings, distributional objectives do not recover instance-wise alignment. This is exactly the failure mode expected from the unidentifiability of the transport problem, and it explains why purely distributional alignment is insufficient in practice. Results in~\Cref{tab:instance_vs_distribution_alignment_real} and \cref{app:results_ablation}.

\begin{table*}[h]
\centering
\begin{subtable}{0.46\textwidth}
  \centering
  \scalebox{0.95}{\begin{tabular}{lcccccc}
    \toprule
    Regularization & $\Mextract$ & $\Mtr$ & $\Mcont$ \\
    \midrule
    $\Ldcy$              & \best{0.913}  & 0.018  & 0.000 \\
    \midrule
    $\Ldcy + \Ltr$       & 0.902  & \best{0.728}  & 0.217 \\
    $\Ldcy + \Ltrdist$   & 0.831  & 0.025  & 0.000 \\
    \midrule
    $\Ldcy + \Lcont$     & 0.576  & 0.055  & \best{0.602} \\
    $\Ldcy + \Lcontdist$ & 0.905  & 0.052  & 0.000 \\
    \bottomrule
  \end{tabular}}
  \caption{\textbf{Distributional objectives} alone are not faithful surrogates for \textbf{instance-wise alignment}}
  \label{tab:instance_vs_distribution_alignment_real}
\end{subtable}
\hfill
\begin{subtable}{0.53\textwidth}
  \centering
  \scalebox{0.95}{\begin{tabular}{lcccccc}
    \toprule
    Regularization & $\Mextract$ & $\Mtr$ & $\Mcont$ \\
    \midrule
    $\Ldcy$                              & \best{0.913}  & 0.018  & 0.000 \\
    \midrule
    $\Ldcy\!+\!\Ltr\!+\!\Lcont$              & 0.836  & \best{0.733}  & 0.619 \\
    $\Ldcy\!+\!\Ltrdist\!+\!\Lcontdist$      & 0.893  & 0.012  & 0.000 \\
    \midrule
    mixed (1 in 1000)                    & \secondbest{0.893}  & \secondbest{0.738}  & \best{0.795} \\
    ~\\
    \bottomrule
  \end{tabular}}
  \caption{\textbf{A few pairs are enough to anchor distributional regularization for instance-wise alignment.}}
  \label{tab:anchoring}
\end{subtable}
\caption{SAE quality on \textbf{vision embeddings} under different regularization regimes.}\vspace{-2mm}
\end{table*}

\subsection{Anchoring Alignment with Scarce Supervision}
\label{sec:anchoring}

\paragraph{}Results from the previous experiments suggest that both translation and concept consistency properties need to be explicitly regularized for, and that distributional objectives are not faithful surrogates for instance-wise alignment. This motivates the need for a combination of regularization terms to achieve maximal alignment quality in the wild. Indeed, a common issue with instance-wise regularization in multimodal settings is the scarcity of high-quality matching pairs. Therefore, it is desirable to rely on distributional regularization for the bulk of the training.

As shown in \cref{sec:instance_vs_distribution_alignment}, these regularizations alone are not sufficient, possibly due to the unidentifiability of the transport problem. However, anchoring them using a few high-quality pairs could be enough to resolve the unidentifiability. In this experiment, we study the behavior of such mixed regimes. Specifically, we compute $\Ltr$ and $\Lcont$ only on a small subset of held-out matching pairs---here, $1$ in $1000$, or to one per batch on average. Refer to \cref{app:mixed} for more details.
\vspace{-2mm}

\paragraph{Question.} Can a few high-quality pairs anchor the transport functions such that distributional regularization is enough to recover instance-wise alignment?
\vspace{-2mm}
\paragraph{Setup.} We compare three regimes. In the first one, all pairs are matched, and SAEs are trained with instance-wise regularization, using both $\Ltr$ and $\Lcont$. In the second one, all pairs are unmatched, and SAEs are trained with distributional regularization, using both $\Ltrdist$ and $\Lcontdist$. Finally, in the third one, only $1$ in $1000$ pairs are matched, and SAEs are trained with both (\emph{i}) instance-wise regularization on the matched pairs and (\emph{ii}) distributional regularization on all pairs. We then evaluate the quality of concept extraction and alignment as before.
\vspace{-2mm}
\paragraph{Results.} A tiny amount of pairing is enough to resolve the ambiguity of purely distributional objectives. With as few as 1 in 1000 matched pairs, we recover strong instance-wise alignment---matching or even exceeding full instance-wise supervision---while preserving concept extraction. See \Cref{tab:anchoring}  and \cref{app:results_ablation}.

\subsection{The Alignment Recipe}
\label{sec:recipe}

\begin{table*}[h]
  \centering
  \vspace{-2mm}
  \scalebox{0.9}{\begin{tabular}{lcccccc}
    \toprule
    Method & Eval & $\Mextract$ ($\uparrow$) & $\Mtr$ ($\uparrow$) & $\Mcont$ ($\uparrow$) \\
    \midrule
    Crosscoder  & shared $z$     & 0.887  & 0.596  & \secondbest{0.794} \\
    Crosscoder  & standalone $z$ & 0.860  & 0.319  & 0.000 \\
    USAE        & --             & \best{0.895}  & \secondbest{0.731}  & 0.435 \\
    CoSAE       & --             & \secondbest{0.893}  & \best{0.738}  & \best{0.795} \\
    \bottomrule
  \end{tabular}}
  \caption{Our method outperforms both crosscoders and USAE.}
  \label{tab:recipe}
  \vspace{-2mm}
\end{table*}

\paragraph{} We conduct a battery of ablation studies to understand the contribution of each regularization term to the overall performance of the CoSAE on both concept extraction and alignment. We train SAEs with all combinations of regularization terms, and evaluate them on our \InterVenchA for both concept extraction and alignment quality. See \cref{app:results_ablation} for the full table of results.
We also include a comparison with the original USAE and crosscoder methods. For the crosscoder, we distinguish two evaluation settings. In the first one, we consider the unmodified crosscoder, i.e., where the codes are the sum of all encoders' standalone codes. In the second evaluation setting, the codes are no longer aggregated, and each encoder is evaluated for its capacity to produce standalone codes.
\vspace{-0mm}

\paragraph{Ablation on Regularization.}
Our first finding is that, unsurprisingly, regularizing with $\Ltr$ and $\Ltrdist$ (resp. $\Lcont$ and $\Lcontdist$) is necessary to get good \emph{translation} (resp. \emph{consistency}) \emph{scores}. Moreover, it seems like overall, including the $\Lcy$ term, always decreases performance.

Finally, the best practice seems to be to include all terms except $\Lcy$. Crucially, a decreasing proportion of matched pairs has only a marginal effect on performance, down to $1$ in $1000$ matched pairs. This is consistent with \cref{sec:anchoring} and suggests that SAEs can still be coupled with only a few anchor points available.
\vspace{-0mm}

\paragraph{Comparison with Existing Methods.}
Our CoSAE outperforms both the USAE and the crosscoder on their respective target metrics ($\Mtr$ and $\Mcont$, respectively). CoSAE achieves the best performance on both, despite crosscoder enforcing \emph{perfect} $\Pcont$ by design through code aggregation. Moreover, when evaluated with standalone codes, the crosscoder's performance drops sharply, including a collapse in $\Mcont$. This indicates that code aggregation can mask weak standalone encoders, and that alignment observed during aggregated evaluation does not necessarily transfer to the individual encoders.
\vspace{-1mm}

\subsection{Cross-Modal Transfer with Scarce Supervision}
\vspace{-1mm}

\begin{wrapfigure}{r}{0.55\textwidth}
\vspace{-5mm}
    \centering
    \includegraphics[width=1.0\linewidth]{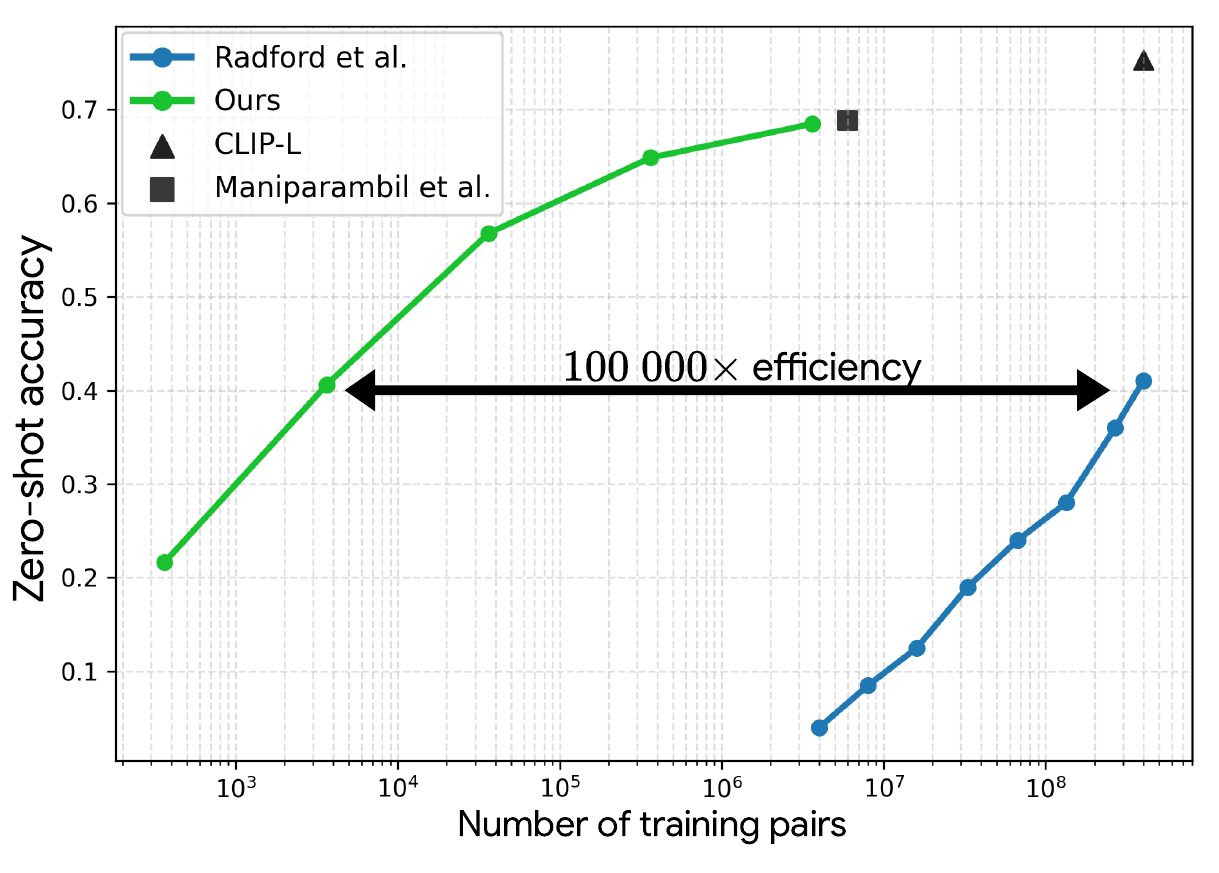}
    \caption{\textbf{Concept alignment in our CoSAE is strong enough to support competitive downstream behavior}. ImageNet zero-shot accuracy of our CoSAE trained on unimodal backbones.}
    \label{fig:cherry}
\end{wrapfigure}

\paragraph{} As a final validation, we evaluate whether the alignment learned by our CoSAE translates into competitive cross-modal transfer. We consider a setting analogous to CLIP training, replacing the vision–vision alignment of \cref{sec:datasets_models} with a vision–text setting (DINOv2 and all-roberta-large), while keeping the same training procedure. We extract CLS tokens on the subset of LAION provided by \citet{maniparambil2025harnessing}. In this regime, the translation from vision to text representations effectively acts as a learned projector, comparable to the MLP heads used by \citet{maniparambil2025harnessing}. We therefore compare these settings.

We evaluate zero-shot ImageNet accuracy obtained from the aligned representations (see  \Cref{fig:cherry}). Although CoSAE is not optimized for contrastive cross-modal training, it matches the reported zero-shot accuracy of \citet{maniparambil2025harnessing}. This suggests that the alignment learned by CoSAE is strong enough to support competitive downstream transfer, despite relying on sparse autoencoders rather than a dense projector.

\section{Discussion}
\label{sec:ccl}
\vspace{-1mm}

\paragraph{}We revisit representational similarity through a concept-based lens and argue that alignment should not be treated as a single objective. Instead, it is a structured combination of properties---translation and concept consistency, each at both instance- and distribution-levels ---and progress depends on keeping these distinctions explicit.
Empirically, widely assumed equivalences break down: translation does not imply concept consistency, cycle consistency is not a reliable surrogate, and distributional objectives alone fail to recover instance-wise alignment. Alignment is therefore inherently multi-dimensional and cannot be captured by any single objective.
\vspace{-1mm}

\paragraph{}
CoSAE is the constructive consequence of this view: rather than betting on a single surrogate, it jointly enforces complementary objectives. Strong alignment emerges only in this regime, with distributional regularization becoming effective once anchored by a few paired examples. In this setting, CoSAE outperforms prior SAE-based approaches while requiring only minimal supervision.
\vspace{-1mm}

\paragraph{}
Our work also has several limitations. First, the framework focuses on shared structure and does not explicitly model idiosyncratic or domain-specific features beyond self-reconstruction. Second, our evaluation in real embeddings necessarily relies on proxy metrics rather than ground-truth concepts, so semantic alignment and functional transfer may not always coincide. This is inherent to unsupervised dictionary learning. Third, the proposed objective combines multiple regularization terms, introducing hyperparameter sensitivity that deserves further study. Finally, our empirical validation, while broad, still covers only a limited set of architectures and datasets. Extending these findings to larger-scale and more heterogeneous settings remains an important direction for future work. 
\vspace{-1mm}

\paragraph{}
Overall, concept alignment is best understood not as a single objective, but as a coupled family of objectives. Meaningful alignment emerges not from optimizing one surrogate well, but from respecting this structure in both evaluation and method design. We hope this perspective clarifies the assumptions underlying existing methods and lays a foundation for more robust, interpretable alignment techniques in the future.

\bibliography{main}
\bibliographystyle{unsrtnat}

\newpage
\appendix

\section{Extended Related Work}
\label{app:extended_RW}

\subsection{Unifying existing methods}

\paragraph{} \Cref{fig:framework} summarizes the different methods for concept extraction, with which combination of properties they are designed to satisfy, as well as whether they require instance-wise correspondences for training and inference. We discuss these methods in more detail below. Our framework can be used to regularize for any combination of properties.

\paragraph{SAE.} The vanilla SAE framework only considers isolated representation instances and therefore only enforces the demi-cycle consistency property $\Pdcy$ through the reconstruction loss $\Ldcy$. This is the most basic concept extraction method, and does not enforce any particular alignment between the concepts extracted from different representation instances.

\paragraph{Transcoders.} \citep{dunefsky2024transcoders} only enforce the translation property $\Ptr$ through the translation loss $\Ltr$. This method was designed to learn a sparse alternative to a dense MLP, not to compare the conceptual alignment between two representations.

\paragraph{Crosscoders.} As discussed above, crosscoders \citep{lindsey2024sparse} were designed for concept alignment. As a result, they enforce $\Pcont$ architecturally and regularize for $\Pdcy$ (and equivalently $\Ptr$). Given their strong architectural constraint for a perfect satisfaction of $\Pcont$, they trivially satisfy the alignment duality of \Cref{eq:duality}. Indeed, they introduce $\psi_i' = \sum_{j \in I} \psi_j$. \citet{minder2025overcoming} finds that in selective encoders such as top-k or batchtopk, enforcing selection on $\psi_i'$ creates a strong pressure for the $\psi_i$ to satisfy $\Pcont$ in the form of an "$L_0$ budget". Splitting is therefore strongly disfavored. However, it is not impossible, for example, in the case where feature activation energies span several orders of magnitude and noise on high energy features outweighs the signal on low energy ones. In such a scenario, high-energy features benefit from splitting in order to perfectly reconstruct their instance-specific noises at the expense of some $L_0$ budget that is therefore not spent on low-energy features. On top of these hypothetical failure modes, we show in \cref{sec:recipe} that the architectural trick of introducing $\psi_i'$ renders $\psi_i$ incapable of acting as an isolated encoder, failing to satisfy all properties of reconstruction $\Pdcy$, translation $\Ptr$, and concept consistency $\Pcont$.

\paragraph{USAE.} The Universal SAE (USAE) \citep{thasarathan2025universal} was introduced as an alternative to crosscoders in the context of vision models, with the same motivation of learning a shared dictionary. In their case, they studied cross-model alignment. Their method discarded the architectural trick of crosscoders and replaced it with $\Ltr$ to regularize for translation. However, as shown by \citet{devillers2024semi}, this is not sufficient to guarantee the concept consistency property $\Pcont$ unless the decoders $\psi^{-1}_i$ are injective. In the case of SAEs, decoders are linear maps from a $K$-dimensional concept space to a $d$-dimensional latent space, with $K > d$, and are therefore not injective on the full space. However, what matters is injectivity on the support of the shared concept space $\bigcup_{i \in I} \operatorname{supp}(\psi_i \# \mu_i)$. In practice, due to the sparsity constraints of the encoders coupled with translation constraints, failing to align codes would require the decoders to have split features and each encoder to use an arbitrary subset of these split features. A sufficient condition to fall in such a case is for the SAE to be locked in a local minimum.
As an illustrative example, take the case where a shared high-energy feature $f$ was independently discovered by both SAEs at indices $k_i$ and $k_j$ respectively, as part of the reconstruction objective. Then, with high probability, $\psi_i$ at index $k_j$ corresponds to a low-energy feature. In such a case, the translation loss dominates the reconstruction one for this feature, such that $\psi_i$ learns to not encode meaningful information there, and the decoder $\psi_i^{-1}$ learns to copy $f$ at index $k_j$. This locks the SAEs in a local minimum where, in order to preserve the constraints of sparsity, reconstruction, and translation, this feature needs to stay split.
Note that there is no equivalent of the $L_0$ budget pressure of crosscoders that would prevent such a failure mode, since here, the encoders' sparsity is enforced separately for all $i \in I$. Adding a regularization term for $\Pcont$ is therefore necessary to guarantee alignment in concept space.

\paragraph{SAE-A.} The Aligned SAE (SAE-A) \citep{dhimoila2026cross} was introduced in the context of multimodal encoders explicitly tasked to learn a shared representation across modalities. As a result, they share the weight of image and text SAEs~: $\psi_{\mathrm{img}} = \psi_{\mathrm{txt}}$ and enforce both $\Pdcy$ for regular concept extraction and $\Pcont$ for alignment. They formulate their DGP and assumptions under the distributional case of $\Pcontdist$ for generality. However, due to the existence of matching pairs between the two modalities in CLIP-like encoders \citep{radford2021}, they operationalize it as a regularization term on the codes of matching pairs. They further show that a very small weight on this regularization term is sufficient to get the necessary inductive bias and align codes.

\subsection{The CoSAE}

Given the unifying framework described above, it is now immediately possible to design new methods with arbitrary combinations of regularizations based on which properties are relevant under a specific experimental setting. One can even consider a mixture of instance-wise and distributional regularization terms, e.g., based on the availability and quality of matching pairs of data points. This framework also allows for possible curriculum learning schemes for maximal sample efficiency.

\section{\InterVench}
\label{app:InterVench}

\paragraph{} The \InterVench is a battery of tests designed to evaluate the quality of concepts extracted by SAEs. It is based on SAEBench~\cite{karvonen2025saebench}, a similar benchmark for evaluating disentanglement and interpretability of learned representations in the context of large language models. We adapt the relevant metrics to our setting and use them to evaluate the quality of the concepts extracted by our CoSAE under different regularization regimes. Specifically, we consider the following metrics, which target \emph{latent geometry} and \emph{concept space} quality, respectively. All of our metrics are intervention-based.

\paragraph{\boldmath$R^2$ reconstruction.} We measure the $R^2$ of the reconstruction of the latent representations from the codes, estimated as $R^2 = 1 - \frac{\|\xvec - \widehat{\xvec}\|^2}{\|\xvec - \bar{\xvec}\|^2}$, where $\xvec$ is the original latent representation, $\widehat{\xvec}$ is the reconstructed representation from the codes, and $\bar{\xvec}$ is the mean of the original representations. This metric's evaluation is in \emph{latent space}.

\paragraph{Faithfulness.} Quantifies how much of the information encoded in the latent representations is captured by the codes. Measured as the drop in model's loss when replacing the original latent representations with the reconstructed ones from the codes: $\frac{\widehat{\mathcal{L}} - \mathcal{L}_0}{\mathcal{L} - \mathcal{L}_0}$, where $\mathcal{L}$ is the original model's loss, $\widehat{\mathcal{L}}$ is the loss of the model composed with the SAE, and $\mathcal{L}_0$ is the same as $\widehat{\mathcal{L}}$ but the codes are patched with zeros. This metric's evaluation is in \emph{latent space}.

In the vision setting, we consider the training loss for SigLIP, and the loss of a linear probe on ImageNet for ViT (its original loss), as well as for DinoV2 (as we do not have access to its original loss).

\paragraph{Sparse Probing.} Measures the extent to which codes capture downstream task-relevant information in a sparse manner. We split this metric into two, one acting in \emph{latent space} and the other in \emph{concept space}. For the former, we parameterize a standard probe $\xvec^T \thetavec$ as a sparse linear combination of dictionary atoms $\thetavec \coloneqq \sum_{i=0}^{K-1} \alpha_i \dvec_i \in \R^d$ - $\mathrm{supp}((\alpha_i)) \ll d$---instead of as a dense combination of canonical basis vectors $\thetavec \coloneqq \sum_{i=0}^{d-1} \alpha_i \evec_i$.
For the latter, the probe becomes $\zvec^T \thetavec$ instead of $\xvec^T \thetavec$, where $\zvec = E(\xvec)$ is the code of $\xvec$. $\thetavec$ is now equal to $(\alpha_i)$, still with $\mathrm{supp}((\alpha_i)) \ll d$. In SAEBench, only the latter is considered. The score we report is the raw accuracy of the probe. In the vision setting, we report the sparse probing score on ImageNet.

All subsequent metrics are probe-based and can therefore be adapted in the same way to both latent space and concept space, with the same distinction as above. They all go beyond sparse probing by measuring not only the amount of downstream task-relevant information captured by dictionary atoms, but also how disentangled this information is across atoms. Due to the necessity for the following metrics to indicate that sparse probing is successful, we force them to zero when the sparse probing score is below a certain threshold.

\paragraph{Unlearning.} Measures whether the SAE grants the ability to selectively unlearn specific information (on a \emph{forget task}, specific) without affecting other information (measured on a \emph{utility task}, general). We measure both (\emph{i}) $u = \frac{\mathrm{acc}^{\mathrm{utility}}_{\mathrm{unlearned}} - \mathrm{acc}^{\mathrm{utility}}_0}{\mathrm{acc}^{\mathrm{utility}}_{\mathrm{original}} - \mathrm{acc}^{\mathrm{utility}}_0}$ the utility score, where $\mathrm{acc}_0$ denotes the accuracy of a probe trained on representations where codes $z$s are patched with zeros, $\mathrm{acc}_{\mathrm{original}}$ the accuracy of a probe trained on original representations, and $\mathrm{acc}_{\mathrm{unlearned}}$ the accuracy of a probe trained on representations where codes labeled as relevant for the forget task are patched with zeros. We also measure (\emph{ii}) $f = 1 - \frac{\mathrm{acc}^{\mathrm{forget}}_{\mathrm{unlearned}} - \mathrm{acc}^{\mathrm{forget}}_0}{\mathrm{acc}^{\mathrm{forget}}_{\mathrm{original}} - \mathrm{acc}^{\mathrm{forget}}_0}$ the forget score. The final unlearning score is then computed as $u \cdot f$.

Other combinations of utility and forget scores are possible, such as $u + f$, where losing one point of utility is worth gaining one point of forget. We do not consider this combination, as it doesn't penalize mediocre tradeoffs where both utility and forget scores are around $0.5$. More generally, the score can be the distance to the Pareto optimal $(1, 1)$: $-\|(u, f) - (1, 1)\|_p$ for some $p \in [1, +\infty]$, with easy geometric interpretations.

In the vision setting, we take ImageNet multiclass classification as the utility task, and CelebA \citep{liu2015deep} multilabel classification as the forget task.

\paragraph{Spurious Correlation Removal (SCR).} Measures whether the SAE disentangles spurious correlations in the data. We report $\mathrm{SCR} = \frac{\mathrm{acc}_{\mathrm{decorr}} - \mathrm{acc}_{\mathrm{original}}}{\mathrm{acc}_{\mathrm{oracle}} - \mathrm{acc}_{\mathrm{original}}}$, where $\mathrm{acc}_{\mathrm{original}}$ is the accuracy of a probe trained on the dataset where the spurious correlation is present (dataset A), $\mathrm{acc}_{\mathrm{oracle}}$ the accuracy of a probe trained on a fully de-correlated dataset (dataset B). For $\mathrm{acc}_{\mathrm{decorr}}$, we take the sparse probe trained on dataset A and evaluate the contribution of each atom to the spurious correlation. We zero out the atoms whose contribution is above a certain threshold, and retrain a probe on the resulting support.

In the vision setting, we select the waterbirds dataset \citep{Sagawa2020Distributionally}, where the spurious correlation is that between the background (water vs land) and the class (waterbird vs land-bird). Dataset A is the training set, while dataset B is a balanced test set where the correlation is removed. For some reason, the models we used, i.e., DinoV2, ViT, and SigLIP, do not seem to catch the spurious correlation in the first place, rendering this metric noisy and uninformative. For this reason, we do not use it in our final evaluation. Future work should explore other known or synthetic spurious correlations in vision, such as color MNIST \citep{arjovsky2019invariant}.

\paragraph{Targeted Probe Perturbation (TPP).} Measures whether the SAE disentangles the features relevant to distinct classes across non-overlapping sets of atoms, in a multiclass classification task. Similar to unlearning, we report $u \cdot f$. In this case, we select $c$ a class of interest, or a set of classes, and $\overline{c}$ its complement. We then consider $c$ as the forget task, and $\overline{c}$ as the utility task.

In the vision setting, we take ImageNet multiclass classification and report the TPP score averaged across subsets $c$ of $10$ classes, selected at random.

\paragraph{\boldmath{$\Mextract$}.} As stated before, we remove the SCR metric from our final evaluation in the vision setting. We additionally zero out the unlearning and TPP scores when the corresponding sparse probing score is below a certain threshold, typically around $0.1$ in the ImageNet multiclass classification setting. We further split the metrics as follows. \InterVenchLatent contains metrics that measure the ability of the dictionary atoms to characterize the latent space. These metrics are $R^2$ reconstruction and faithfulness, as well as the \emph{latent space} version of sparse probing, unlearning, and TPP. \InterVenchConcept contains metrics that evaluate the quality of the concept space as its own representation space. These metrics are the \emph{concept space} version of sparse probing, unlearning, and TPP. $\Mextract$ is then computed as the average of all metrics in \InterVench.

\paragraph{Cycle Consistency.} Separately from the aggregate of the above score, we also report the $R^2$ associated with the cycle consistency. The only difference with the $R^2$ of the reconstruction is that $\widehat{\xvec}$ is replaced by $\widehat{\xvec} \coloneqq \psi_i^{-1} \circ \psi_j \circ \psi_j^{-1} \circ \psi_i (\xvec)$.

\section{\InterVenchA}
\label{app:InterVenchA}

\paragraph{} The \InterVenchA benchmark is a natural extension of the \InterVench to the evaluation of concept alignment quality. It is based on the same metrics as the \InterVench, but adapted to evaluate the quality of the alignment between two SAEs trained on different domains. In order to measure this, we do not rely on the ability of SAEs to capture capabilities, but to \emph{transfer} them. Specifically, the adaptation of \InterVenchLatent metrics measures the quality of \emph{translation} properties, while the adaptation of \InterVenchConcept metrics measures the quality of \emph{concept consistency} properties.

\paragraph{Reconstruction.} The natural extension of $R^2$ reconstruction and faithfulness metrics for alignment is to consider the following. Let $i$ and $j$ index two domains. Let $\xvec_i$ and $\xvec_j$ be joint samples from domains $i$ and $j$ respectively. Let $E_i$ and $D_i$ be the encoder and decoder of the SAE trained on domain $i$, similarly for $E_j$ and $D_j$. We now consider the reconstruction $\widehat{\xvec}_j = D_j(E_i(\xvec_i))$ instead of $\widehat{\xvec}_j = D_j(E_j(\xvec_j))$.

\paragraph{Sparse Probing.} Given domains $i$ and $j$, we train a sparse probe exactly as before on domain $i$. This results in a sparse vector of parameters $\alphavec \in \R^K$, where $K$ is the number of dictionary atoms and $\mathrm{supp}(\alphavec) \ll d$. In the \emph{latent} setting, this corresponds to $\thetavec_i \coloneqq \alphavec^T \Dmat_i$, where $D_i$ is the dictionary of domain $i$. We therefore evaluate the transfer capabilities of jointly trained SAEs by measuring the resulting accuracy of the probe $\thetavec_j \coloneqq \alphavec^T \Dmat_j$ on domain $j$. This measures alignment as the ability of dictionary atoms to jointly characterize each of their domain's geometry. In the \emph{concept} setting, we have $\thetavec_i \coloneqq \alphavec$, evaluated on domain $i$ through the accuracy of the classifier $E_i(\xvec_i)^T \thetavec_i$. We then evaluate the accuracy of $\thetavec_j \coloneqq \thetavec_i$ on domain $j$. This measures \emph{concept consistency} as the ability of the same set of concepts to capture the same task-relevant information across domains.

\paragraph{Unlearning, SCR and TPP}'s extensions naturally follow from the above. We design interventions on domain $i$ and transfer them to domain $j$ to evaluate the quality of the alignment.

\paragraph{\boldmath{$\Malign$}.} All post-processing of the metrics is the same as for $\Mextract$. All metrics now give a matrix of scores. The diagonal, i.e., when $i = j$, corresponds to the original \InterVench metrics on each domain separately. The off-diagonal elements correspond to the transfer capabilities of the SAEs across domains. $\Mextract$ is the average across metrics of the average of the diagonal, while $\Malign$ is averaged across the off-diagonal elements. $\Mtr$ specifically targets translation properties, averaging the $R^2$ and faithfulness metrics as well as the \emph{latent space} version of sparse probing, unlearning and TPP. $\Mcont$ on the other hand specifically targets concept consistency properties, averaging the \emph{concept space} version of sparse probing, unlearning, and TPP.

\paragraph{Note.} In the special setting of multimodal encoders, we select the following metrics. The $R^2$ is unchanged. For the faithfulness, we need to adapt it to the case of two-tower multimodal encoders. We report $\frac{\widehat{\mathcal{L}}_{ij} - \mathcal{L}_0}{\mathcal{L} - \mathcal{L}_0}$, where $\mathcal{L}$ is the original contrastive loss of the two towers, $\widehat{\mathcal{L}}_{ij}$ is the same loss but where the original representations of domain $i$ are translated as $\widehat{x}_j = D_j(E_i(\xvec_i))$, and $\mathcal{L}_0$ is the same as $\widehat{\mathcal{L}}_{ij}$ but where the codes are patched with zeros. This gives a $2\times 2$ matrix of scores. $\Mextract$ and $\Mtr$ are computed using these scores as described above. To get $\Mcont$, we derive the faithfulness as follows. We replace $\widehat{\mathcal{L}}_{ij}$ with $\widehat{\mathcal{L}}$, the contrastive loss of the model composed with the encoders only. This corresponds to using the concept space as a surrogate for the latent space for contrastive learning. This gives a single score, which we use as the faithfulness metric for $\Mcont$. Additionally, we include ImageNet zero-shot classification accuracy as a downstream task for all $\Mextract$, $\Mtr$ and $\Mcont$ respective cases.

\section{Linear case: analytical solutions}
\label{app:linear_analysis}

\paragraph{}In this section, we study the analytical solutions of the proposed framework in the linear setting. By restricting all encoders and decoders to be linear maps, we can explicitly characterize the effect of each objective and identify which components of the model (encoders, decoders, or their compositions) are recovered. This analysis highlights how different combinations of losses induce different notions of coupling across and within domains.

\paragraph{Notation.}
Let $\Xmat \in \R^{n \times d_x}$ and $\Ymat \in \R^{n \times d_y}$ be centered data matrices corresponding to two representation instances. We denote their empirical covariance matrices by
\begin{equation}
\Sigmamat_{xx} = \tfrac{1}{n} \Xmat^\top \Xmat, \quad
\Sigmamat_{yy} = \tfrac{1}{n} \Ymat^\top \Ymat, \quad
\Sigmamat_{xy} = \tfrac{1}{n} \Xmat^\top \Ymat.
\end{equation}
We consider linear encoders and decoders of the form
\begin{equation}
\Zmat_x = \Xmat \Wmat_x, \quad \Zmat_y = \Ymat \Wmat_y, \qquad
\widehat{\Xmat} = \Zmat_x \Vmat_x, \quad \widehat{\Ymat} = \Zmat_y \Vmat_y,
\end{equation}
where $\Wmat \in \R^{d_x \times K}$ are encoder matrices, and the corresponding decoders are given by $\Vmat \in \R^{K \times d_x}$. Translation between the two domains is defined as
\begin{equation}
\widehat{\Ymat} = \Xmat \Wmat_x \Vmat_x, \qquad \widehat{\Xmat} = \Ymat \Wmat_y \Vmat_y.
\end{equation}
When needed, we impose whitening constraints of the form
\begin{equation}
\frac{1}{n} \Zmat_x^\top \Zmat_x = \Imat_K, \qquad \frac{1}{n} \Zmat_y^\top \Zmat_y = \Imat_K.
\end{equation}
All expectations in the losses are understood empirically over the dataset. Objectives are given by the losses defined in \cref{sec:operationalizing}. We analyze these objectives and their combinations in the linear setting, and characterize the corresponding solutions. In particular, we identify which components (encoders, decoders, or their compositions) are uniquely determined, and how different losses induce coupling across and within domains.

\subsection{Reconstruction}
\label{app:linear_dcy}

We begin with the case where no coupling is introduced between $\Xmat$ and $\Ymat$, i.e., when only self-reconstruction objectives are considered. In this setting, the two domains decouple, and each reduces to a standard approximation problem.

\begin{lemma}[PCA]
\label{lem:pca}
Consider the self-reconstruction objectives
\begin{equation}
\Ldcy^x = \frac{1}{n} \|\Xmat - \Xmat \Wmat_x \Vmat_x\|_F^2,
\qquad
\Ldcy^y = \frac{1}{n} \|\Ymat - \Ymat \Wmat_y \Vmat_y\|_F^2.
\end{equation}
Then, for each domain independently, any minimizer satisfies:
\begin{itemize}
    \item The encoder $\Wmat_x$ (resp.\ $\Wmat_y$) spans the top-$K$ principal subspace of $\Sigmamat_{xx}$ (resp.\ $\Sigmamat_{yy}$).
    \item The optimal decoder is given by $\Vmat_x = \Wmat_x^\top$ (resp.\ $\Vmat_y = \Wmat_y^\top$) up to a change of basis.
\end{itemize}
\end{lemma}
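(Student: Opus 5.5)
The plan is to reduce each objective to a low-rank approximation problem and invoke Eckart--Young--Mirsky. Since $\Ldcy^x$ and $\Ldcy^y$ share no parameters, it suffices to treat $\Ldcy^x$. We assume $K \le \operatorname{rank}(\Xmat)$ and that the $K$-th and $(K+1)$-th eigenvalues of $\Sigmamat_{xx}$ are distinct, so that the top-$K$ principal subspace is well defined.

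\textbf{Step 1 (lower bound).} The product $\Mmat = \Wmat_x \Vmat_x \in \R^{d_x \times d_x}$ has rank at most $K$, so $\Xmat\Mmat$ has rank at most $K$. Write the SVD $\Xmat = \Umat \Smat \Qmat^\top$. Eckart--Young--Mirsky then gives
\[
\|\Xmat - \Xmat\Mmat\|_F^2 \ge \sum_{k>K} s_k^2 = n\sum_{k>K}\lambda_k(\Sigmamat_{xx}).
\]
Equality holds exactly when $\Xmat\Mmat = \Xmat \Pmat_K$, where $\Pmat_K = \Qmat_K\Qmat_K^\top$ is the orthogonal projector onto the top-$K$ eigenvectors of $\Sigmamat_{xx}$. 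The bound is attained by $\Wmat_x = \Qmat_K$ and $\Vmat_x = \Qmat_K^\top$, so the minimizers are exactly the factorizations with $\Xmat\Wmat_x\Vmat_x = \Xmat\Pmat_K$.

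\textbf{Step 2 (characterize minimizers).} From $\Xmat\Wmat_x\Vmat_x = \Xmat\Qmat_K\Qmat_K^\top$, the column space of $\Xmat\Wmat_x$ must contain that of $\Xmat\Qmat_K$, which has rank $K$. Hence $\Xmat\Wmat_x$ has rank exactly $K$. Restricted to the row space of $\Xmat$, this forces $\Pmat_{\mathrm{row}}\Wmat_x = \Qmat_K\Amat$ for some invertible $\Amat\in\R^{K\times K}$, and then $\Vmat_x = \Amat^{-1}\Qmat_K^\top$.

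To prove this, multiply the identity on the left by $\Smat^{-1}\Umat^\top$ on the nonzero-singular-value block. The product $\Qmat^\top\Wmat_x\Vmat_x$ restricted to the data span then equals the coordinate projector, and a rank argument gives the claim. So $\Wmat_x$ spans the principal subspace (modulo components in $\ker\Xmat$, which do not affect the loss and are the only genuine non-uniqueness). The decoder is $\Wmat_x^\top$ up to the change of basis $\Amat$; it equals $\Wmat_x^\top$ exactly when $\Amat$ is orthogonal, for instance under the normalization $\Wmat_x^\top\Wmat_x = \Imat_K$.

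\textbf{Main obstacle.} The delicate part is Step 2. One must state precisely the sense of ``spans'' and ``up to a change of basis'', handle directions in $\ker\Xmat$ and eigenvalue ties at index $K$, and carry out the rank argument cleanly. The approach I would try first is to work in the eigenbasis $\Qmat$: the problem becomes block-diagonal there, and the equality condition of Eckart--Young is read off directly.
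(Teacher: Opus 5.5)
Your proposal is correct and takes the same route as the paper: rewrite the objective as a rank-$K$ approximation of $\Xmat$, apply Eckart--Young, and read off the minimizers as $\Wmat_x = Q_K\Amat$ (modulo $\ker\Xmat$) and $\Vmat_x = \Amat^{-1}Q_K^\top$. Your Step 2 supplies the rank argument, with the eigengap and $\ker\Xmat$ caveats, that the paper only asserts, and it gets the decoder right: the paper's $\Rmat^{-1}\Umat_K^\top$ should use the top-$K$ right singular vectors, not the left ones.
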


\begin{proof}
    See \cref{app:proof_pca}.
\end{proof}

\paragraph{Discussion.}
Self-reconstruction identifies both encoders and decoders within each modality, but introduces no coupling between $\Xmat$ and $\Ymat$. In particular, the solutions are invariant under invertible transformations of the latent space, i.e., for any invertible matrix $\Rmat \in \R^{K \times K}$, the transformation
\begin{equation}
\Wmat_x \rightarrow \Wmat_x \Rmat,
\qquad
\Vmat_x \rightarrow \Rmat^{-1} \Vmat_x
\end{equation}
leaves the objective unchanged. As a result, the latent representations are only identifiable up to a change of basis. Imposing whitening constraints selects an orthonormal basis of this subspace, corresponding to the principal directions (i.e., the right singular vectors of $\Xmat$).

\subsection{Linear Coupling}
\label{app:linear_coupling}

We now turn to objectives that introduce coupling between the two domains. Unlike self-reconstruction, which operates independently on each modality, these objectives enforce interactions either at the level of representations (concept consistency) or through predictive mappings (translation). We study how these different forms of coupling affect identifiability and the structure of the learned representations.

\subsubsection{Concept consistency}
\label{app:linear_cont}

We first consider coupling at the level of latent representations, through the concept consistency objective.

\paragraph{Objective.}
The concept consistency loss is defined as
\begin{equation}
\Lcont = \frac{1}{n} \|\Zmat_x - \Zmat_y\|_F^2
= \frac{1}{n} \|\Xmat \Wmat_x - \Ymat \Wmat_y\|_F^2.
\end{equation}

\paragraph{CCA.} We begin by showing that the concept consistency objective alone recovers the canonical correlation analysis (CCA) for the encoders.

\begin{lemma}[Degeneracy of concept consistency]
\label{lem:cont_degenerate}
In the absence of additional constraints, minimizing $\Lcont$ is ill-posed: for any pair $(\Wmat_x, \Wmat_y)$, and any orthogonal matrix $\Rmat \in \R^{K \times K}$, the transformation
\begin{equation}
\Wmat_x \rightarrow \Wmat_x \Rmat, \qquad
\Wmat_y \rightarrow \Wmat_y \Rmat
\end{equation}
leaves the objective unchanged. Moreover, the objective admits degenerate solutions (e.g., collapse to zero).
\end{lemma}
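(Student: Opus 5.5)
The plan is to verify both claims by direct computation on the Frobenius norm, with no appeal to earlier lemmas beyond the definition of $\Lcont$ in the linear setting.

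\textbf{Orthogonal invariance.} I would substitute the transformed encoders. Then
\begin{equation}
\Xmat \Wmat_x \Rmat - \Ymat \Wmat_y \Rmat = (\Xmat \Wmat_x - \Ymat \Wmat_y)\Rmat .
\end{equation}
Right-multiplication by an orthogonal matrix preserves the Frobenius norm. This follows from $\|\Amat\Rmat\|_F^2 = \operatorname{tr}(\Amat\Rmat\Rmat^\top\Amat^\top) = \operatorname{tr}(\Amat\Amat^\top) = \|\Amat\|_F^2$. Hence $\Lcont$ is unchanged.

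It is worth noting that this invariance holds more broadly. For any matrix $\Rmat$, not necessarily orthogonal or even invertible, the difference $\Zmat_x - \Zmat_y$ is mapped to $(\Zmat_x - \Zmat_y)\Rmat$. Consequently, exact minimizers with $\Zmat_x = \Zmat_y$ are preserved under every linear map. In particular, the zero set of $\Lcont$ is a cone that is closed under scaling. This is the source of the ill-posedness.

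\textbf{Degenerate solutions.} I would set $\Wmat_x = 0$ and $\Wmat_y = 0$, which gives $\Lcont = 0$. Since $\Lcont \ge 0$, this pair is a global minimizer, and it carries no information. To make the ill-posedness explicit, I would add two observations:
\begin{itemize}
\item The scaling $\Wmat_x \to t\Wmat_x$, $\Wmat_y \to t\Wmat_y$ multiplies $\Lcont$ by $t^2$. So for any pair with $\Lcont > 0$, the infimum along this ray is $0$ and is attained only in the limit of collapse. Without normalization, optimization therefore drives the encoders toward zero.
\item Further trivial minimizers exist: whenever the column spaces of $\Xmat$ and $\Ymat$ intersect only at $\{0\}$, zero is essentially the only exact minimizer. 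More generally, any pair with $\Xmat\Wmat_x = \Ymat\Wmat_y$ is a minimizer, so the minimizer set is a linear subspace and not a single point.
\end{itemize}

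\textbf{Expected obstacle.} The only subtlety is what ``ill-posed'' should mean precisely. I would state it as non-uniqueness of minimizers together with the existence of the trivial minimizer. I would then remark that this motivates the whitening constraints $\tfrac1n \Zmat_x^\top\Zmat_x = \Imat_K$ (and likewise for $\Zmat_y$) used in the subsequent CCA lemma. These constraints exclude collapse, but they still leave the orthogonal ambiguity shown above, since whitening is itself preserved under $\Rmat \in O(K)$.
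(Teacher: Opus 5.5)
Your proposal is correct and follows the paper's proof essentially step for step. Both substitute $\Wmat_x\Rmat$ and $\Wmat_y\Rmat$, factor $\Rmat$ out of the difference, use the trace identity to get invariance of the Frobenius norm, and exhibit $\Wmat_x=\Wmat_y=0$ (and more generally any pair with $\Xmat\Wmat_x=\Ymat\Wmat_y$) as zero-loss global minimizers.

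One side remark needs correcting, though it does not affect the two claims the lemma actually states. The minimizer set $\{(\Wmat_x,\Wmat_y):\Xmat\Wmat_x=\Ymat\Wmat_y\}$ is a linear subspace, but it can be exactly $\{(0,0)\}$, for instance when $\Xmat$ and $\Ymat$ have full column rank and their column spaces meet only at $0$, which is generic for $n \gg d_x+d_y$. So ``ill-posed'' should be formalized as degeneracy of the minimizer rather than as non-uniqueness.
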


\begin{proof}
See \cref{app:proof_cca}.
\end{proof}

\begin{lemma}[CCA from concept consistency with whitening]
\label{lem:cca}
Consider minimizing $\Lcont$ under the whitening constraints
\begin{equation}
\frac{1}{n} \Zmat_x^\top \Zmat_x = \Imat_K, \qquad
\frac{1}{n} \Zmat_y^\top \Zmat_y = \Imat_K.
\end{equation}
Then the optimal encoders $(\Wmat_x, \Wmat_y)$ are given by the top-$K$ canonical directions between $\Xmat$ and $\Ymat$, i.e., they solve the canonical correlation analysis (CCA) problem.
\end{lemma}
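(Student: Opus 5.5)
Under the whitening constraints, the plan is to expand the objective and reduce it to a trace maximization. Writing $\Zmat_x = \Xmat\Wmat_x$ and $\Zmat_y = \Ymat\Wmat_y$, we get
\begin{equation}
\Lcont = \tfrac{1}{n}\operatorname{tr}(\Zmat_x^\top\Zmat_x) + \tfrac{1}{n}\operatorname{tr}(\Zmat_y^\top\Zmat_y) - \tfrac{2}{n}\operatorname{tr}(\Zmat_x^\top\Zmat_y) = 2K - 2\operatorname{tr}(\Wmat_x^\top\Sigmamat_{xy}\Wmat_y).
\end{equation}
Minimizing $\Lcont$ on the constraint set $\Wmat_x^\top\Sigmamat_{xx}\Wmat_x = \Imat_K$, $\Wmat_y^\top\Sigmamat_{yy}\Wmat_y = \Imat_K$ is therefore equivalent to maximizing $\operatorname{tr}(\Wmat_x^\top\Sigmamat_{xy}\Wmat_y)$ on that set. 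This is the trace form of multi-component CCA. Because the constraints already exclude the trivial collapse, the degeneracy of \Cref{lem:cont_degenerate} reduces to the residual orthogonal invariance $\Wmat_x\Rmat,\Wmat_y\Rmat$.

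Next I would change variables to whitened coordinates. Assuming $\Sigmamat_{xx}$ and $\Sigmamat_{yy}$ are invertible (otherwise I would restrict to their ranges or use pseudo-inverses), set $\Amat = \Sigmamat_{xx}^{1/2}\Wmat_x$ and $\Bmat = \Sigmamat_{yy}^{1/2}\Wmat_y$. The constraints become $\Amat^\top\Amat = \Bmat^\top\Bmat = \Imat_K$, i.e.\ both $\Amat$ and $\Bmat$ have orthonormal columns. The objective becomes $\operatorname{tr}(\Amat^\top\Tmat\Bmat)$ with the coherence matrix
\begin{equation}
\Tmat = \Sigmamat_{xx}^{-1/2}\Sigmamat_{xy}\Sigmamat_{yy}^{-1/2}.
\end{equation}

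The key step is to bound $\operatorname{tr}(\Amat^\top\Tmat\Bmat)$ over pairs with orthonormal columns. This is the main obstacle. I would use von Neumann's trace inequality (equivalently Ky Fan's maximum principle): for $\Tmat = \Umat\mathbf{S}\Vmat^\top$,
\begin{equation}
\operatorname{tr}(\Amat^\top\Tmat\Bmat) \le \sum_{k=1}^K \sigma_k(\Tmat),
\end{equation}
because $\Bmat\Amat^\top$ has singular values at most one and rank at most $K$. Equality holds when $\Amat = \Umat_K\Rmat$ and $\Bmat = \Vmat_K\Rmat$ for an orthogonal $\Rmat$, where $\Umat_K$ and $\Vmat_K$ are the top-$K$ singular vectors. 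If the $K$-th and $(K+1)$-th singular values are distinct, these are the only maximizers. I would also remark that individual nonzero singular values may be degenerate; in that case the equality set is larger, but every maximizer still spans the same top-$K$ canonical subspace.

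Mapping back gives
\begin{equation}
\Wmat_x = \Sigmamat_{xx}^{-1/2}\Umat_K\Rmat, \qquad \Wmat_y = \Sigmamat_{yy}^{-1/2}\Vmat_K\Rmat.
\end{equation}
These are exactly the top-$K$ canonical directions, the standard SVD characterization of CCA. The canonical correlations are the $\sigma_k(\Tmat)$, and the minimum value of the loss is $2\sum_{k}(1-\sigma_k)$. The common orthogonal factor $\Rmat$ is the one left over from \Cref{lem:cont_degenerate}. This proves the statement up to that rotation, which is the sense in which CCA itself is identified.
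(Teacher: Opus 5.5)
Your proposal is correct and follows essentially the same route as the paper. Under whitening you expand $\Lcont$ to $2K - 2\operatorname{tr}(\Wmat_x^\top\Sigmamat_{xy}\Wmat_y)$, substitute $\Sigmamat_{xx}^{1/2}\Wmat_x$ and $\Sigmamat_{yy}^{1/2}\Wmat_y$, and read off the top-$K$ singular vectors of $\Sigmamat_{xx}^{-1/2}\Sigmamat_{xy}\Sigmamat_{yy}^{-1/2}$ up to a common orthogonal $\Rmat$. The only difference is that you justify the SVD step explicitly with von Neumann's trace inequality, which the paper simply asserts; one small correction to your side remark on uniqueness: repeated singular values strictly inside the top $K$ do not enlarge the maximizer set, since that joint rotation is absorbed into $\Rmat$, whereas a tie $\sigma_K = \sigma_{K+1}$ (or a zero among the top $K$) does.
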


\begin{proof}
See \cref{app:proof_cca}.
\end{proof}

\paragraph{Discussion.}
Concept consistency introduces coupling across modalities by enforcing alignment of latent representations. However, without additional constraints, the problem is ill-posed due to scaling and collapse. Whitening resolves these ambiguities and yields CCA, which identifies shared directions maximizing cross-correlation between $\Xmat$ and $\Ymat$.

Notably, this objective only identifies encoders: decoders remain unconstrained, and no reconstruction structure is imposed within each modality.

\paragraph{Identifying the Decoders.} The CCA alone discards the decoders. To identify them, we now add self-reconstruction and consider the combined objective
\begin{equation}
\Lcont + \Ldcy^x + \Ldcy^y.
\end{equation}

\begin{lemma}[Trade-off between CCA and PCA]
\label{lem:cont_dcy}
Minimizing $\Lcont + \lambda \Ldcy$ yields encoders $(\Wmat_x, \Wmat_y)$ that solve a generalized eigenvalue problem interpolating between principal components and canonical directions. After accounting for degeneracies,
\begin{itemize}
    \item (\emph{i}) the solution does not depend on $\lambda$,
    \item (\emph{ii}) only the pure CCA remains,
    \item (\emph{iii}) the effect of the reconstruction term is to identify the decoders as optimal linear readouts
\end{itemize}
\end{lemma}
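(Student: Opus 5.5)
We plan to solve the problem in two stages: first the decoders for fixed encoders, then the reduced problem over the encoders. As in \Cref{lem:cca}, we keep the whitening constraints $\frac1n \Zmat_x^\top\Zmat_x = \frac1n\Zmat_y^\top\Zmat_y = \Imat_K$. Without them, $\Lcont$ collapses (\Cref{lem:cont_degenerate}); with them, the phrase ``after accounting for degeneracies'' becomes precise.

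\textbf{Decoders.} $\Lcont$ does not involve $\Vmat_x,\Vmat_y$, so for fixed encoders the decoders only minimize $\lambda\Ldcy$. Each is an ordinary least-squares problem with solution
\begin{equation}
\Vmat_x = (\Zmat_x^\top\Zmat_x)^{-1}\Zmat_x^\top \Xmat = \Wmat_x^\top\Sigmamat_{xx},
\end{equation}
where we used whitening, and symmetrically $\Vmat_y = \Wmat_y^\top \Sigmamat_{yy}$. This is the optimal linear readout of the codes and gives item (\emph{iii}).

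\textbf{Reduced encoder problem.} We substitute these decoders back into the objective. The residual reconstruction error is
\begin{equation}
\Ldcy^x = \operatorname{tr}\Sigmamat_{xx} - \operatorname{tr}(\Wmat_x^\top\Sigmamat_{xx}^2\Wmat_x).
\end{equation}
The coupling term becomes
\begin{equation}
\Lcont = 2K - 2\operatorname{tr}(\Wmat_x^\top\Sigmamat_{xy}\Wmat_y).
\end{equation}
The encoders therefore maximize
\begin{equation}
2\operatorname{tr}(\Wmat_x^\top\Sigmamat_{xy}\Wmat_y) + \lambda\operatorname{tr}(\Wmat_x^\top\Sigmamat_{xx}^2\Wmat_x) + \lambda\operatorname{tr}(\Wmat_y^\top\Sigmamat_{yy}^2\Wmat_y)
\end{equation}
under the whitening constraints. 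Introducing Lagrange multipliers yields a generalized eigenvalue problem. Its left-hand side is the block matrix with diagonal blocks $\lambda\Sigmamat_{xx}^2$, $\lambda\Sigmamat_{yy}^2$ and off-diagonal blocks $\Sigmamat_{xy}$, $\Sigmamat_{yx}$; its right-hand side is $\operatorname{diag}(\Sigmamat_{xx},\Sigmamat_{yy})$. The diagonal blocks give the PCA part, and the problem reduces to that of \Cref{lem:pca} as the coupling vanishes. This is the ``interpolation'' claimed in the lemma.

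\textbf{Collapse of the PCA part.} To establish (\emph{i}) and (\emph{ii}), we change variables to $\tilde\Wmat_x = \Sigmamat_{xx}^{1/2}\Wmat_x$, so that whitening reads $\tilde\Wmat_x^\top\tilde\Wmat_x = \Imat_K$. The self-reconstruction term then becomes $\operatorname{tr}(\tilde\Wmat_x^\top\Sigmamat_{xx}\tilde\Wmat_x)$. The intended route is to argue that this term is fixed once the subspace is fixed, because within the (full-rank) data span it is absorbed by the constraint. The whitened domains would then contribute a constant, and only $\operatorname{tr}(\tilde\Wmat_x^\top \Sigmamat_{xx}^{-1/2}\Sigmamat_{xy}\Sigmamat_{yy}^{-1/2}\tilde\Wmat_y)$ would remain. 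That is the CCA objective, maximized by the top-$K$ singular vectors as in \Cref{lem:cca}, independently of $\lambda$.

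\textbf{Main obstacle.} The difficulty is that this absorption argument is not literally valid. The term $\operatorname{tr}(\tilde\Wmat_x^\top\Sigmamat_{xx}\tilde\Wmat_x)$ is constant only if $K = d_x$, or if whitening is applied to the data beforehand so that $\Sigmamat_{xx} = \Imat$. I would therefore first state that convention explicitly. Concretely, the argument invokes whitened (preprocessed) representations, or restricts to the degenerate-equivalence class in which $\Rmat$-rotations of \Cref{lem:cont_degenerate} are quotiented out. Under that convention the reconstruction term is constant and the claims (\emph{i})–(\emph{ii}) follow. I would flag that without such a normalization the solution genuinely depends on $\lambda$, and the lemma should be read with that caveat.
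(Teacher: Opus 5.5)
Your route (least-squares decoders, substitution, whitening, then $\widetilde{\Wmat}_x=\Sigmamat_{xx}^{1/2}\Wmat_x$) is the paper's. Your formulas $\Vmat_x=\Wmat_x^\top\Sigmamat_{xx}$ and $\Ldcy^x=\Tr\Sigmamat_{xx}-\Tr(\Wmat_x^\top\Sigmamat_{xx}^2\Wmat_x)$ are correct.

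The paper closes the argument by asserting that under whitening $\Tr(\Pmat_x\Sigmamat_{xx})=\Tr(\Wmat_x^\top\Sigmamat_{xx}\Wmat_x)=K$. But with $\Pmat_x=\Wmat_x\Wmat_x^\top\Sigmamat_{xx}$, it is $\Tr(\Pmat_x)$ that equals $K$. The quantity $\Tr(\Pmat_x\Sigmamat_{xx})=\Tr(\Wmat_x^\top\Sigmamat_{xx}^2\Wmat_x)$ is exactly your subspace-dependent term. The paper's own Step~3 even notes that maximizing it selects the top-$K$ eigenspace, so it cannot be constant. Your ``main obstacle'' is therefore the precise point where the paper's proof breaks, and claims (i)--(ii) are false without extra normalization.

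A minimal witness: take $d_x=d_y=2$, $K=1$, $\Sigmamat_{xx}=\Sigmamat_{yy}=\mathrm{diag}(a,b)$ with $a>b>0$, and $\Sigmamat_{xy}=\mathrm{diag}(0,b)$. That is, the high-variance coordinates are mutually uncorrelated and one low-variance coordinate is shared. Write $\widetilde{\Wmat}_x,\widetilde{\Wmat}_y$ as unit vectors at angles $\theta_x,\theta_y$ from $\evec_1$. Your maximand is then $2\sin\theta_x\sin\theta_y+\lambda\sum_{v\in\{x,y\}}(a\cos^2\theta_v+b\sin^2\theta_v)$. Its maximum is $2\lambda a+2\max\{0,1-\lambda(a-b)\}$. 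It is attained at the canonical pair $\widetilde{\Wmat}_x=\widetilde{\Wmat}_y=\evec_2$ when $\lambda<1/(a-b)$, and at the principal pair $\evec_1$ when $\lambda>1/(a-b)$.

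Two smaller corrections.

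First, your alternative escape, quotienting out the rotations of \Cref{lem:cont_degenerate}, does not help. The term $\Tr(\widetilde{\Wmat}_x^\top\Sigmamat_{xx}\widetilde{\Wmat}_x)$ is already invariant under $\widetilde{\Wmat}_x\mapsto\widetilde{\Wmat}_x\Rmat$, since it depends only on the column space. Only $\Sigmamat_{xx},\Sigmamat_{yy}\propto\Imat$ (or $K=d_x=d_y$) makes it constant. Under that convention every subspace is principal, so the PCA side of the trade-off is empty and (i)--(ii) become nearly tautological. The faithful general statement is the lemma's first sentence: the encoders depend on $\lambda$, moving from CCA as $\lambda\to 0$ toward PCA as $\lambda\to\infty$.

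Second, with one whitening constraint per domain you get two separate multipliers. For $K=1$ they equal $\Wmat_x^\top\Sigmamat_{xy}\Wmat_y+\lambda\Wmat_x^\top\Sigmamat_{xx}^2\Wmat_x$ and $\Wmat_x^\top\Sigmamat_{xy}\Wmat_y+\lambda\Wmat_y^\top\Sigmamat_{yy}^2\Wmat_y$, which need not coincide once $\lambda>0$. Your block system is therefore a generalized eigenvalue problem only in a loose, two-parameter sense.
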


\begin{proof}
See \Cref{lem:cont_dcy_proof}.
\end{proof}

\paragraph{Discussion.}
While the combined objective highlights a fundamental trade-off between shared and idiosyncratic information, this trade-off disappears after solving for degenerate identification of the shared subspace. Only shared information remains, and the effect of self-reconstruction is merely to provide solutions to the decoders, as the raw CCA only considers encoders.

\subsubsection{Translation}
\label{app:linear_tr}

We now consider coupling through translation, which enforces predictive mappings between the two domains. Unlike concept consistency, which aligns latent representations, translation constrains the composition of encoders and decoders across modalities, and therefore couples representations through their ability to reconstruct one domain from the other.

In contrast with concept consistency, which leaves decoders entirely unconstrained and requires self-reconstruction to identify them, translation introduces a different form of degeneracy: it constrains only cross-modal compositions (e.g., $\Wmat_x \Vmat_y$), but does not couple encoders and decoders within each modality. As a result, the individual components remain underdetermined. As before, adding self-reconstruction restores this intra-modality coupling and leads to well-identified solutions.

\paragraph{RRR.}First, consider the translation objective alone. This corresponds to the reduced-rank regression (RRR) and only identifies the composition of encoders from one domain to decoders of the other domain. Encoders and decoders within one domain are decoupled, and all operators are defined up to some transforms.

\begin{lemma}[Translation and reduced-rank regression]
\label{lem:tr_rrr}
Consider the (one-directional) translation objective
\begin{equation}
\Ltr^{x \to y}(\Wmat_x, \Vmat_y)
=
\frac{1}{n} \|\Ymat - \Xmat \Wmat_x \Vmat_y\|_F^2.
\end{equation}
Then minimizing $\Ltr^{x \to y}$ over $(\Wmat_x, \Vmat_y)$ is equivalent to RRR of $\Ymat$ onto $\Xmat$ with rank constraint $K$. The factorization $(\Wmat_x, \Vmat_y)$ is not identifiable: only the product $\Wmat_x \Vmat_y$ is. One such factorization recovers the CCA.
\end{lemma}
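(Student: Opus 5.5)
The plan is to reduce the objective to a low-rank approximation problem on the fitted values, and then read off both the non-identifiability and the CCA factorization from the resulting solution.

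\textbf{Step 1: reduce to a rank-constrained regression.} Write $\Bmat \coloneqq \Wmat_x \Vmat_y \in \R^{d_x \times d_y}$. As $(\Wmat_x,\Vmat_y)$ range over $\R^{d_x\times K}\times\R^{K\times d_y}$, $\Bmat$ ranges exactly over matrices of rank at most $K$, since any such matrix admits a rank factorization padded with zeros. The objective depends on the factors only through $\Bmat$. Hence minimizing $\Ltr^{x\to y}$ is the same as minimizing $\frac1n\|\Ymat-\Xmat\Bmat\|_F^2$ subject to $\operatorname{rank}\Bmat\le K$, which is the definition of RRR.

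\textbf{Step 2: solve the RRR.} Assume $\Sigmamat_{xx}\succ 0$. Let $\widehat\Bmat_{\mathrm{OLS}}=\Sigmamat_{xx}^{-1}\Sigmamat_{xy}$. Using the orthogonality of the residual $\Ymat-\Xmat\widehat\Bmat_{\mathrm{OLS}}$ to the column space of $\Xmat$, the objective decomposes as
\begin{equation}
\tfrac1n\|\Ymat-\Xmat\widehat\Bmat_{\mathrm{OLS}}\|_F^2+\tfrac1n\|\Xmat(\widehat\Bmat_{\mathrm{OLS}}-\Bmat)\|_F^2 .
\end{equation}
The problem therefore reduces to the best rank-$K$ approximation of the fitted values $\Xmat\widehat\Bmat_{\mathrm{OLS}}$, because the map $\Bmat\mapsto\Xmat\Bmat$ preserves rank when $\Xmat$ has full column rank. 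Eckart--Young then gives $\Bmat^\star=\widehat\Bmat_{\mathrm{OLS}}\Pmat_K$, where $\Pmat_K$ projects onto the top-$K$ eigenvectors of $\Sigmamat_{yx}\Sigmamat_{xx}^{-1}\Sigmamat_{xy}$. The product is unique whenever there is an eigengap.

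\textbf{Step 3: non-identifiability.} For any invertible $\Rmat\in\R^{K\times K}$, the map $(\Wmat_x\Rmat,\Rmat^{-1}\Vmat_y)$ leaves $\Bmat$ and the loss unchanged. So only the product $\Wmat_x \Vmat_y$ is identified, and the factor pairs form an orbit under $GL_K$.

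\textbf{Step 4: exhibit a CCA factorization.} This is the main obstacle, because unweighted RRR's subspace is governed by $\Sigmamat_{yx}\Sigmamat_{xx}^{-1}\Sigmamat_{xy}$ rather than the whitened CCA matrix $\Sigmamat_{yy}^{-1/2}\Sigmamat_{yx}\Sigmamat_{xx}^{-1}\Sigmamat_{xy}\Sigmamat_{yy}^{-1/2}$.

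First, the $x$-side column space of $\Bmat^\star$ is $\Sigmamat_{xx}^{-1}\Sigmamat_{xy}\Pmat_K$. One would like to identify it with the span of the top-$K$ CCA directions $\Wmat_x^{\mathrm{CCA}}$. These directions arise from the SVD of $\Sigmamat_{xx}^{-1/2}\Sigmamat_{xy}\Sigmamat_{yy}^{-1/2}$.

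The two subspaces coincide exactly when the response metric is $\Sigmamat_{yy}^{-1}$ (weighted RRR), or when $\Ymat$ is pre-whitened. I would state the result under this weighting, or under $\Sigmamat_{yy}\propto \Imat$. The general claim would then be read as "one factorization, after choosing the basis $\Rmat$ that whitens $\Zmat_x$, recovers CCA" in the whitened-$\Ymat$ coordinates.

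Under that assumption, take $\Wmat_x=\Wmat_x^{\mathrm{CCA}}$, so that $\frac1n\Zmat_x^\top\Zmat_x=\Imat_K$. Then the optimal readout is $\Vmat_y=\frac1n\Zmat_x^\top\Ymat=\Wmat_x^{\mathrm{CCA}\top}\Sigmamat_{xy}$. It remains to check that this product equals $\Bmat^\star$, by comparing singular vectors.
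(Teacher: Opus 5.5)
Your proposal is correct. Steps 1--3 follow the paper's own route: substitute $\Bmat=\Wmat_x\Vmat_y$, split off the OLS residual by Pythagoras, and reduce to a best rank-$K$ approximation. Your minimizer $\Bmat^\star=\Sigmamat_{xx}^{-1}\Sigmamat_{xy}\Pmat_K$ is the right one for the objective as stated.

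You part ways with the paper only at the CCA clause, and there you are right and the paper's proof is not. The paper rewrites the excess term as $\|\Dmat-\Mmat\|^2_{F,\Sigmamat_{yy}}$ with $\Mmat=\Sigmamat_{xx}^{-1/2}\Sigmamat_{xy}\Sigmamat_{yy}^{-1/2}$, and then reads off the truncated SVD of $\Mmat$. But that weighted norm is only a change of variables back to $\|\Cmat-\Sigmamat_{xx}^{-1/2}\Sigmamat_{xy}\|_F^2$. The truncated SVD of $\Mmat$ minimizes the unweighted distance to $\Mmat$, not the weighted one, unless $\Sigmamat_{yy}\propto\Imat$.

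Write $\Mmat=\Umat\mathbf{S}\mathbf{Q}^\top$. Then the paper's formula $\Sigmamat_{xx}^{-1/2}\Umat_K\mathbf{S}_K\mathbf{Q}_K^\top\Sigmamat_{yy}^{1/2}$ solves the $\Sigmamat_{yy}^{-1}$-weighted RRR, which is precisely the hypothesis you propose to add. That hypothesis is genuinely needed. Take $K=1$, $\Sigmamat_{xx}=\Imat$, $\Sigmamat_{xy}=\mathrm{diag}(0.5,0.4)$ and $\Sigmamat_{yy}=\mathrm{diag}(1,0.25)$. This is a valid joint covariance, since the Schur complement is $\mathrm{diag}(0.75,0.09)$. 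The canonical correlations are $0.5$ and $0.8$, so the top canonical direction is $e_2$. Yet the rank-one RRR solution is $\mathrm{diag}(0.5,0)$, with excess loss $0.16$ against $0.25$ for the paper's $\mathrm{diag}(0,0.4)$. Every optimal factorization therefore has $\Wmat_x\propto e_1$, and none recovers CCA.

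The check you leave open is one line. If $\Sigmamat_{yy}=c\,\Imat$, then $\Sigmamat_{xx}^{-1/2}\Sigmamat_{xy}=\sqrt{c}\,\Mmat$ and $\Sigmamat_{yx}\Sigmamat_{xx}^{-1}\Sigmamat_{xy}=c\,\Mmat^\top\Mmat$, so $\Pmat_K=\mathbf{Q}_K\mathbf{Q}_K^\top$. Using $\Mmat\mathbf{Q}_K=\Umat_K\mathbf{S}_K$ and $\Umat_K^\top\Mmat=\mathbf{S}_K\mathbf{Q}_K^\top$, both $\Bmat^\star$ and $\Wmat_x^{\mathrm{CCA}}\Wmat_x^{\mathrm{CCA}\top}\Sigmamat_{xy}$ equal $\sqrt{c}\,\Sigmamat_{xx}^{-1/2}\Umat_K\mathbf{S}_K\mathbf{Q}_K^\top$.

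The weighted case reduces to this. Apply the case $c=1$ to $\Ymat\Sigmamat_{yy}^{-1/2}$, which leaves $\Mmat$ unchanged, then right-multiply the decoder by $\Sigmamat_{yy}^{1/2}$. This reproduces the paper's decoder $\mathbf{S}_K\mathbf{Q}_K^\top\Sigmamat_{yy}^{1/2}$, which again equals your readout $\Wmat_x^{\mathrm{CCA}\top}\Sigmamat_{xy}$.

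Three small corrections:
\begin{itemize}
\item Your conditions are sufficient but not necessary for the RRR and CCA subspaces to coincide, so drop ``exactly when''. For instance, $\Sigmamat_{yy}=\mathbf{Q}\,\mathrm{diag}(\lambda_i)\,\mathbf{Q}^\top$ also works whenever the $K$ largest values of $s_i\sqrt{\lambda_i}$ sit at the same indices as the $K$ largest $s_i$, where the $s_i$ are the diagonal entries of $\mathbf{S}$.
\item The optimal factor pairs form a single $GL_K$ orbit only when $\rank(\Bmat^\star)=K$.
\item CCA additionally requires $\Sigmamat_{yy}\succ 0$.
\end{itemize}
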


\begin{proof}
    See \cref{lem:tr_rrr_proof}.
\end{proof}

\paragraph{Adding self-reconstruction.}
We now add self-reconstruction objectives, which introduce coupling within each modality:
\begin{equation}
\Ltr + \lambda (\Ldcy^x + \Ldcy^y).
\end{equation}
While translation alone only constrains the cross-modal compositions, self-reconstruction couples encoders and decoders within each domain. This restores intra-modality structure and yields well-defined decoders given the encoders, but does not fully resolve the non-identifiability of the encoders. Further whitening is still required, as in the concept consistency case.

\begin{lemma}[Translation with reconstruction]
\label{lem:tr_dcy}
Under this combined objective, the optimal encoders are again given by the CCA, while the decoders are given by an interpolation between the solutions of \Cref{lem:tr_rrr} and \Cref{lem:cont_dcy}.
\end{lemma}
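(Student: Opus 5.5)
Under the whitening constraints $\tfrac1n \Zmat_x^\top\Zmat_x=\tfrac1n\Zmat_y^\top\Zmat_y=\Imat_K$, I would first solve for the decoders with the encoders fixed. Then I would substitute them back and reduce the resulting problem in the encoders to the trace maximisation already handled in \Cref{lem:cca,lem:cont_dcy}.

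\textbf{Step 1: closed-form decoders.} The objective is
\begin{equation}
\tfrac1n\|\Ymat-\Xmat\Wmat_x\Vmat_y\|_F^2+\tfrac1n\|\Xmat-\Ymat\Wmat_y\Vmat_x\|_F^2+\lambda\bigl(\tfrac1n\|\Xmat-\Xmat\Wmat_x\Vmat_x\|_F^2+\tfrac1n\|\Ymat-\Ymat\Wmat_y\Vmat_y\|_F^2\bigr).
\end{equation}
It is quadratic in $(\Vmat_x,\Vmat_y)$ and decouples into one problem for each. The normal equations for $\Vmat_y$ read
\begin{equation}
\bigl(\tfrac1n\Zmat_x^\top\Zmat_x+\lambda\,\tfrac1n\Zmat_y^\top\Zmat_y\bigr)\Vmat_y=\tfrac1n\Zmat_x^\top\Ymat+\lambda\,\tfrac1n\Zmat_y^\top\Ymat .
\end{equation}
Whitening makes the left matrix $(1+\lambda)\Imat_K$, so
\begin{equation}
\Vmat_y=\tfrac{1}{1+\lambda}\,\Wmat_x^\top\Sigmamat_{xy}+\tfrac{\lambda}{1+\lambda}\,\Wmat_y^\top\Sigmamat_{yy}.
\end{equation}
The first term is the RRR readout of \Cref{lem:tr_rrr}, namely the regression of $\Ymat$ on $\Zmat_x$. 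The second is the self-reconstruction readout appearing in \Cref{lem:cont_dcy}. The convex combination is exactly the claimed interpolation, and $\Vmat_x$ is obtained symmetrically.

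\textbf{Step 2: reduced encoder objective.} I substitute the optimal decoders back in. Each least-squares residual equals $\tfrac1n\|\text{target}\|^2$ minus the explained part, and that explained part is a quadratic form in the cross-covariances. For the $\Ymat$ block it is
\begin{equation}
\tfrac{1}{1+\lambda}\bigl\|\Wmat_x^\top\Sigmamat_{xy}+\lambda\Wmat_y^\top\Sigmamat_{yy}\bigr\|_F^2
\end{equation}
up to constants. After expanding, the terms $\|\Wmat_x^\top\Sigmamat_{xy}\|_F^2$ and $\|\Wmat_y^\top\Sigmamat_{yy}\|_F^2$ are the RRR and PCA pieces. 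The cross term equals $\operatorname{tr}(\Wmat_x^\top\Sigmamat_{xy}\Sigmamat_{yy}\Wmat_y)$, which is a coupling term.

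\textbf{Step 3: reduction to CCA (the main obstacle).} I now need to show that, over whitened encoders, the maximiser is the top-$K$ CCA pair, possibly after modding out degeneracies as in \Cref{lem:cont_dcy}. My plan is to change variables to $\Amat=\Sigmamat_{xx}^{1/2}\Wmat_x$ and $\Bmat=\Sigmamat_{yy}^{1/2}\Wmat_y$. Whitening then makes $\Amat$ and $\Bmat$ orthonormal frames, and the coherence matrix $\Cmat=\Sigmamat_{xx}^{-1/2}\Sigmamat_{xy}\Sigmamat_{yy}^{-1/2}$ appears in the cross terms.

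The difficulty is that the PCA-like terms involve $\Sigmamat_{yy}$ directly and not only $\Cmat$. A clean CCA answer therefore requires an argument that these terms are invariant, or subordinate, on the relevant set. As in \Cref{lem:cont_dcy}, I would argue that the self-reconstruction contributions only fix a basis within the subspace of the code space, which is exactly the degeneracy being accounted for. The cross-modal terms are then maximised by von Neumann's trace inequality when $\Amat$ and $\Bmat$ are the top-$K$ left and right singular vectors of $\Cmat$, that is, at the CCA solution.

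If this invariance fails in general, I would fall back on the same accounting of degeneracies used in \Cref{lem:cont_dcy}, or on a mild assumption such as $\Sigmamat_{xx}$ and $\Sigmamat_{yy}$ being whitened or isotropic. Under such an assumption the PCA terms become constant and the CCA conclusion is exact.
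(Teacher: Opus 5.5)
Your Step~1 coincides with the paper's decoder elimination, and your factor $\tfrac{1}{1+\lambda}$ is the correct one (the paper's extra $\tfrac1n$ is a slip). Your Step~2 expansion is also right. The gap is Step~3: neither argument you give for general covariances works.

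The self-reconstruction piece does more than fix a basis. In your variables $\|\Wmat_y^\top\Sigmamat_{yy}\|_F^2=\Tr(\Bmat^\top\Sigmamat_{yy}\Bmat)$. This is invariant under $\Bmat\mapsto\Bmat\Rmat$, but it is precisely the PCA objective on $\mathrm{span}(\Bmat)$. It therefore competes with the coupling terms for the \emph{subspace}, with weight $\lambda^2/(1+\lambda)$.

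The cross-modal terms are also not functions of $\Cmat$ alone. They are $\Tr(\Amat^\top\Cmat\Sigmamat_{yy}\Cmat^\top\Amat)$, $\Tr(\Bmat^\top\Cmat^\top\Sigmamat_{xx}\Cmat\Bmat)$, $2\lambda\Tr(\Amat^\top\Cmat\Sigmamat_{yy}\Bmat)$ and $2\lambda\Tr(\Bmat^\top\Cmat^\top\Sigmamat_{xx}\Amat)$. Von Neumann's inequality for $\Cmat$ therefore does not bound them.

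This cannot be patched, because for general covariances the statement is false. Take $d_x=d_y=2$, $K=1$, $\Sigmamat_{xx}=\Sigmamat_{yy}=\mathrm{diag}(0.5,5)$ and $\Sigmamat_{xy}=\mathrm{diag}(0.45,2.5)$. Then $\Cmat=\mathrm{diag}(0.9,0.5)$, and CCA selects $\evec_1$. For the whitened encoders $\Wmat_x=\Wmat_y=\evec_k/\sqrt{(\Sigmamat_{xx})_{kk}}$, the explained part from your Step~2 is $(0.9+\lambda)^2/(1+\lambda)$ for $k=1$. For $k=2$ it is $10(0.5+\lambda)^2/(1+\lambda)$, which is larger for every $\lambda\ge 0$. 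So the CCA pair is never optimal. At $\lambda=0$ this is the familiar fact that reduced-rank regression with an unweighted loss differs from CCA in general.

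The paper's proof does not fill this in. After the same decoder elimination, it asserts that the reduced objective is $\Tr(\Wmat_x^\top\Sigmamat_{xy}\Wmat_y)+\lambda\bigl(\Tr(\Wmat_x^\top\Sigmamat_{xx}\Wmat_x)+\Tr(\Wmat_y^\top\Sigmamat_{yy}\Wmat_y)\bigr)$ with a constant $\lambda$-part. Your own Step~2 contradicts this: the self terms carry $\Sigmamat^2$, the coupling carries an extra covariance factor, and the RRR terms are absent.

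The ``accounting of degeneracies'' in \Cref{lem:cont_dcy}, which you planned to fall back on, rests on the same slip. Under whitening, $\Tr(\Pmat_x\Sigmamat_{xx})=\Tr(\Wmat_x^\top\Sigmamat_{xx}^2\Wmat_x)$, which is not constant; it is $\Tr(\Pmat_x)$ that equals $K$.

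So the only sound route is your last one, and it should be stated as a hypothesis rather than a fallback. If $\Sigmamat_{xx}=s_x\Imat$ and $\Sigmamat_{yy}=s_y\Imat$, then $(1+\lambda)$ times the explained part is
\[
s_y\Tr(\Amat^\top\Cmat\Cmat^\top\Amat)+s_x\Tr(\Bmat^\top\Cmat^\top\Cmat\Bmat)+2\lambda(s_x+s_y)\Tr(\Amat^\top\Cmat\Bmat)+\lambda^2(s_x+s_y)K .
\]
Ky Fan's maximum principle handles the first two terms and von Neumann's inequality the third. Together they show all terms are maximised simultaneously at $\Amat=\Umat_K\Rmat$ and $\Bmat=\Vmat_K\Rmat$, with $\Umat_K,\Vmat_K$ the top-$K$ singular vectors of $\Cmat$. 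For $\lambda>0$ the coupling term forces a common rotation $\Rmat$. That is exactly the CCA pair, and your Step~1 then supplies the interpolating decoders.
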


\begin{proof}
    See \cref{lem:tr_dcy_proof}.
\end{proof}

\subsubsection{Distributional alignment}
\label{app:linear_dist}

\paragraph{} Distributional alignment enforces matching of pushforward measures rather than individual samples. Intuitively, it aligns global statistics (e.g., projections or moments) of the representations across domains. However, this does not introduce any structural coupling between $X$ and $Y$: many unrelated mappings can induce identical distributions. As a result, the correlation structure may align in aggregate, but the underlying representations remain uncoupled, and no meaningful joint factorization is recovered. This is true for both translation and concept consistency.

\subsubsection{Cycle consistency}
\label{app:linear_cy}

\paragraph{} Cycle consistency enforces that $\Xmat \simeq \Xmat \Amat \Bmat$ and $\Ymat \simeq \Ymat \Bmat \Amat$ with $\Amat = \Wmat_X \Vmat_Y$ and $\Bmat = \Wmat_Y \Vmat_X$, introducing an apparent coupling between the two domains. Without coupling, the optimal solution reduces to independent PCA decompositions, say $P_x$ and $P_y$. With cycle consistency, one recovers the same PCA solution up to an arbitrary invertible transform: $\Amat = \Pmat_x \Rmat \Pmat_y^\top$ and $\Bmat = \Pmat_y \Rmat^{-1} \Pmat_x^\top$ (\Cref{lem:cycle_proof}). The factorization remains ambiguous up to an invertible transform, and the optimal solution is still equivalent to PCA, meaning the apparent coupling vanishes upon solving the problem.

\section{Platonic and Linear Representation Hypothesis}
\label{app:prh_lrh}

\paragraph{Platonic Representation Hypothesis (PRH).} The PRH is a universal statement that any representation learning method will converge to the same shared representation, up to some transform \citep{huh2024platonic}. This can be formalized in various ways. We chose to write this hypothesis under a generative framework as it allows for a formulation that is ready to be plugged into our above framework. The generative view of the PRH assumes the existence of a platonic latent space $X$ such that all real world observations, whatever their domain $\mathcal{X}^i$, stem from this platonic latent space through a generator $g_i : X \to \mathcal{X}^i$. Then, all feature extractors $\phi_i : \mathcal{X}^i \to X^i$ invert this generative process up to a transform $\psi_i$. In the context laid out above, adding this extra generative step allows for intermediate observation spaces $(\mathcal{X}^i)_{i \in I}$ between the platonic latent space $X$ and the feature extractors $\phi_i$. This is especially relevant when the $g_i$s are allowed to introduce distribution shifts or even modality shifts. For simplicity, we can further add the restriction that $X = \R^K$.

\begin{hypothesis}[Platonic Generative Process]
\label{hyp:generative_process}
$\exists K\!\in\!\N, \exists X\!\subseteq\!\R^{\K}$ s.t. $\forall i\!\in\!I$
\begin{align}
    &\exists g_i:X\!\to\!\mathcal{X}^i\\
    &\forall \phi_i:\mathcal{X}^i\!\to\!X^i, \exists \psi_i:X^i\!\to\!X,\text{ s.t. }\psi_i\circ\phi_i\circ g_i=\operatorname{id}_{X}
\end{align}
\end{hypothesis}

\paragraph{Linear Representation Hypothesis (LRH).} In practice, most of the recent representation alignment and concept extraction methods implicitly assume the existence of such a generative process. In other words, they assume that representation learning inverts the generative process. In the case of representation alignment, the class of $\psi_i$ under consideration is further constrained, allowing invariance up to translation, scaling, orthogonal transformations, etc. Dictionary learning-based methods fall under a different assumption, more realistic, and grounded in neurosciences: that platonic representations are \emph{sparse} (\Cref{hyp:sparse_inversion}), imposing different constraints on $\psi_i$. This sparse coding framework often comes with additional linear or close to linear constraints. The reason is (\emph{i}) that it allows for GPU-friendly implementations through the extremely popular Sparse Autoencoder (SAE) framework, (\emph{ii}) that representation learning is explicitly designed to produce latent spaces with meaningful Euclidean---or at least linear---geometry, with downstream operations reading linearly from $X^i$ before optionally applying some nonlinearities.

\begin{hypothesis}[Sparse Generative Process]
\label{hyp:sparse_inversion}
$\xvec \in X \sim \mu=\prod_{k=1}^{\K}\marginal$ with $|\mathrm{supp}(\xvec)|\!\ll\!\K$.
\end{hypothesis}

Essentially, this generative formulation of the PRH coupled with the SAE operationalization of the LRH grounds our concept alignment framework. It does so by assuming the existence of $X$ and of the $\phi_i$, now decomposed into an additional generative step $g_i$ and an inversion step $\phi_i$, and finally, by assuming that the class of autoencoders under consideration provides the final inversion step, aligning all representations back in a shared concept space through the $\psi_i$. Our framework additionally provides for a natural distributional version of the PRH. We call the element-wise version the \emph{strong} PRH, and the distributional version the \emph{weak} PRH.

\section{Distributional losses.}
\label{app:L_dist}

\paragraph{}
When instance-wise correspondences are unavailable, enforcing $\Ptrdist$ or $\Pcontdist$ requires matching distributions rather than individual samples. A natural approach is to minimize a discrepancy between pushforward measures, for instance, using optimal transport. However, distributional distances like the Wasserstein distance are both computationally expensive and poorly behaved in high dimensions and mini-batch settings, making them ill-suited for large-scale SAE training.

A practical alternative is provided by the Cramer-Wold theorem, which states that two distributions are equal if and only if their one-dimensional projections along all directions are equal. This motivates the use of sliced distributional objectives, which compare 1D projections of the distributions along randomly sampled directions. We take inspiration from \citet{balestriero2025lejepa}, who use the Epps-Pulley test on random 1D slices to regularize for Gaussianity of their embedding space~:
\begin{align}
\label{eq:sliced_EP}
\mathrm{EP}(\mu_i) = b \int_{-\infty}^{\infty} \left|\widehat{\varphi}_{\mu_i}(t) - \varphi(t)\right|^2 w(t)\,\mathrm{d}t
\end{align}
where $\widehat{\varphi}_{\mu_i} = \frac{1}{b}\E_{\xvec \sim \mu_i} e^{it \langle \xvec, \uvec \rangle} = \sum_{k=1}^b e^{it \langle \xvec_{i, k}, \uvec \rangle}$ is the empirical characteristic function (ECF) of $\mu_i$ where a batch of $b$ samples are projected onto a random direction $\uvec \in \mathcal{S}^{d-1}$; $\varphi = e^{-t^2/2}$ is the characteristic function of the normal distribution, and $w$ is a weighting function---typically a gaussian. In the case of $\Ptrdist$, the target is not the normal distribution but another of our $\mu_j$, and the source becomes $\psi^{-1} \circ \psi_i \# \mu_i$. We can therefore define the sliced translation loss as follows~:
\begin{align}
\label{eq:sliced_translation_loss}
\Ltrdist
&\coloneqq \E_{\uvec \sim \mathcal{S}^{d-1}} b \int_{-\infty}^{\infty} \left|\widehat{\varphi}_{(\psi^{-1}_j \circ \psi_i) \# \mu_i}(t) - \widehat{\varphi}_{\mu_j}(t)\right|^2 w(t)\,\mathrm{d}t\\
&\simeq \frac{1}{|\mathcal{U}|} \sum_{\uvec \in \mathcal{U}} b \int_{-\infty}^{\infty} \left|\widehat{\varphi}_{(\psi^{-1}_j \circ \psi_i) \# \mu_i}(t) - \widehat{\varphi}_{\mu_j}(t)\right|^2 w(t)\,\mathrm{d}t
\end{align}
where $\mathcal{U} \in (\mathcal{S}^{d-1})^m$ is a finite set of random directions, resampled at each training step. While any single slice provides only a weak constraint, aggregating over many random projections yields a consistent approximation of the target distributional alignment. Considering that $\mathcal{U}$ is resampled at each training step, the effective number of slices seen over training becomes even larger. Crucially, \citet{balestriero2025lejepa} show that a small coverage of $\mathcal{S}^{d-1}$ per step is sufficient to achieve good performance, with theoretical error bounds when the target distribution is gaussian. These theoretical bounds do not apply in our case since we do not control the target distribution. However, this approach is equivalent to regularizing with a sliced Maximum Mean Discrepancy (MMD) (\Cref{prop:EP_is_MMD}), which is widely used in practice for distributional alignment \citep{gretton2012kernel,long2015learning}, especially as an alternative to adversarial training \citep{li2015generative,binkowski2018demystifying}.

In the concept space $\mathbb{R}^K$, this perspective admits a particularly simple interpretation. In arbitrary latent spaces, canonical directions are meaningless, only the overall Euclidean geometry matters, which motivates arbitrary random slices for full coverage of the space. In contrast, in the concept space, canonical directions correspond to dictionary atoms, which are assumed to be meaningful, atomic units of information. Therefore, in concept space, we do not consider arbitrary random slices $u \in \mathcal{S}^{K-1}$, but rather canonical slices along the coordinate axes. This yields the following loss for distributional alignment of concepts~:
\begin{align}
\label{eq:sliced_contrastive_loss}
\Lcontdist
&\coloneqq \frac{1}{|\mathcal{U}^*|} \sum_{\uvec \in \mathcal{U}^*} b \int_{-\infty}^{\infty} \left|\widehat{\varphi}_{\psi_i \# \mu_i}(t) - \widehat{\varphi}_{\psi_j \# \mu_j}(t)\right|^2 w(t)\,\mathrm{d}t
\end{align}
where $\mathcal{U}^* = \{\evec_1, \dots, \evec_K\}$ is the set of canonical basis vectors in $\R^K$. In practice, due to the sparsity of activations in concept space, we instead take $\mathcal{U}^*$ to be the set of active atoms across the batch. This formulation of $\Lcontdist$ is further motivated by \citet{dhimoila2026cross}'s IsoEnergy assumption.%

\section{Additional Experimental Details}
\label{app:experiment_detail}

\subsection{Synthetic Data Generating Process}
\label{app:synthetic_DGP}

\paragraph{} We design a synthetic data-generating process (DGP) that follows precisely our model. First, we sample $N$ i.i.d. samples of normally distributed latent variables $\xvec' \in \R^K$. Then, for each sample, we select its top $k$ latent variables, set them to $1$ and the rest to $0$, to get the points $\xvec \in X$. Then, for each domain $i$, we construct $\phi_i$ as the composition of the following. First, we apply some dropout to $\xvec$. Then, we apply a random white noise on the active latent variables, with standard deviation $\sigma = 0.1$ clipped between $-1+\sigma$. Finally, we apply a succession of random linear transformations interleaved with ReLU nonlinearities, with a target dimensionality of $d < K$ and a bottleneck layer of dimension $d_{\mathrm{bottleneck}} < d$.

To measure SAE quality, we monitor the explained variance ($R^2$) in latent space, as well as the distance of both the dictionary and codes to ground truth using a standard matching algorithm to control for permutation invariance of the codes~\citep{marks2024enhancingneuralnetworkinterpretability,hindupur2025projecting,dhimoila2026cross}. To measure alignment quality, we use the $R^2$ of the translation and concept consistency properties, as well as the distance between the pushforward measures in both concept space and latent space using our losses. See \cref{app:training_details} for details on the SAE architecture and training procedure.

\subsection{Hyperparameters and Training Details}
\label{app:training_details}

\paragraph{SAE architecture.} 
All variants of the SAE framework rely on specific hidden assumptions on the underlying data-generating process, on the shape of receptive fields~\cite{hindupur2025projecting}, and on the type of atoms to be discovered \citep{fel2025archetypal,dhimoila2026cross}. The purpose of this work is not to show which of these particular methods and assumptions best fit empirical latent spaces, but rather to focus on concept alignment properties. Thus, throughout this work, we adopt the \emph{batchtopk} architecture \citep{bussmann2024batchtopk} as implemented by \emph{overcomplete} \citep{fel2025archetypal} for all SAEs across all experiments, with a dictionary size of $K = 8d$ and a sparsity level of $k = \frac{d}{20}$. These constant also hold for the synthetic setting, where we take $d = 384$, $K = 3072$ and $k = 19$.

\paragraph{Training.} In the vision setting, we train all SAEs on $N=1.2 \times 10^6$ unique samples from ImageNet \citep{imagenet2009}, with a batch size of $b=1024$ and a total number of tokens seen during training of $T=5 \times 10^7$. This corresponds to approximately $E = 40$ epochs. During hyperparameter tuning, we limit the number of epochs to $E = 4$. The learning rate ($\sim 10^{-3}$) is calibrated such that the $\Ldcy$ only run converges to $1.1$ times its final value in $E = 4$ epochs. We keep the same learning rate for all other runs, which all converge well within $E = 40$ epochs. We do not observe significant variance across runs with different seeds.

In the multimodal setting, we sub-sample LAION-400M \citep{schuhmann2021laion} to get $N\sim10^6$ unique image-text pairs, and keep the same other parameters as for ImageNet.
In the synthetic setting, we train all SAEs on $N=10^6$ unique samples, with a batch size of $b=10^3$ and a total number of tokens seen during training of $T=10^7$, which corresponds to $E = 10$ epochs.

\subsection{Mixed Training Regime}
\label{app:mixed}

\paragraph{} In the mixed training regime, we split the dataset into two subsets. Let $p \in [0, 1]$ be the proportion of matched pairs. We randomly select $p \cdot N$ pairs to be in the matched subset $X^i_A$, and the rest to be in the unmatched subset $X^i_B$. The token datasets are then the concatenated $X^i = X^i_A \cup X^i_B$. At this point, in all settings, all domains are perfectly aligned. We then shuffle the unmatched subsets $X^i_B$ independently to break alignment. We keep track of matched and unmatched pairs using binary labels $\bm l \in \{0, 1\}^N$, where $\bm l_n = 1$ if the $n$-th pair is matched and $0$ otherwise. During training, batches are randomly sampled from the whole dataset. The distributional regularization terms $\Ltrdist$ and $\Lcontdist$ are computed on the whole batch. The instance-wise regularization terms $\Ltr$ and $\Lcont$ are computed only on the matched pairs, i.e. those for which $\bm l_n = 1$.

\subsection{Compute}
\label{app:compute}

\paragraph{}All final experiment runs were run on a single H100 GPU. It takes about $40$min to train a CoSAE on $50$M precomputed tokens. All experiments were run on an RTX3060 for early testing, where training a similar CoSAE on $5$M precomputed tokens takes a few minutes.

\section{Proofs}
\label{app:proofs}

\subsection{Alignment Dualities}

\paragraph{}Our first proof is that of the duality between latent and concept alignment, or translation and concept consistency, under the assumption that decoders are injective.

\begin{proposition}
\label{prop:PtrPcont}
Suppose injectivity of the decoders, and suppose $\Pdcy$. Then
\[
    \Ptr \Leftrightarrow\Pcont
\]
\end{proposition}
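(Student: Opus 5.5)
Both directions reduce to pointwise identities holding for every $x \in X$ (or $\mu$-almost every $x$), with $\xvec_i = \phi_i(x)$ and $\xvec_j = \phi_j(x)$. We use $\Pdcy$ for both domains, i.e.\ $\psi_i^{-1}(\psi_i(\xvec_i)) = \xvec_i$ and $\psi_j^{-1}(\psi_j(\xvec_j)) = \xvec_j$. We use injectivity of $\psi_j^{-1}$ on the relevant set of codes, namely the union $\operatorname{supp}(\psi_i \# \mu_i) \cup \operatorname{supp}(\psi_j \# \mu_j)$, as discussed in \cref{app:extended_RW}. If the decoders are injective on all of $\R^K$, this caveat disappears.

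\emph{($\Pcont \Rightarrow \Ptr$).} Assume $\psi_i(\xvec_i) = \psi_j(\xvec_j)$. Applying $\psi_j^{-1}$ to both sides and using $\Pdcy$ for domain $j$ gives
\[
(\psi_j^{-1}\circ\psi_i)(\xvec_i) = (\psi_j^{-1}\circ\psi_j)(\xvec_j) = \xvec_j,
\]
which is $\Ptr$. This direction uses only reconstruction, not injectivity.

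\emph{($\Ptr \Rightarrow \Pcont$).} Assume $\psi_j^{-1}(\psi_i(\xvec_i)) = \xvec_j$. By $\Pdcy$, $\xvec_j = \psi_j^{-1}(\psi_j(\xvec_j))$, so
\[
\psi_j^{-1}(\psi_i(\xvec_i)) = \psi_j^{-1}(\psi_j(\xvec_j)).
\]
Both arguments lie in the code set on which $\psi_j^{-1}$ is injective: $\psi_j(\xvec_j) \in \operatorname{supp}(\psi_j\#\mu_j)$ and $\psi_i(\xvec_i) \in \operatorname{supp}(\psi_i\#\mu_i)$. Injectivity then yields $\psi_i(\xvec_i) = \psi_j(\xvec_j)$, which is $\Pcont$. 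The symmetric translation $j \to i$ is handled identically by swapping indices.

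The main obstacle is the domain of injectivity. SAE decoders map $\R^K \to \R^d$ with $K > d$, so they are never globally injective. The hypothesis must therefore be read as injectivity on the union of the code supports, as suggested in \cref{app:extended_RW}. I would state this restricted form explicitly at the start of the proof. If the properties are meant only $\mu$-almost surely, I would also note that all equalities above are taken on a common full-measure set of $x$, which is harmless since only finitely many a.s.\ statements are intersected.
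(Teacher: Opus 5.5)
Your proof is correct and matches the paper's argument. As in the paper, you get $\Pcont \Rightarrow \Ptr$ by applying $\psi_j^{-1}$ and using $\Pdcy$ (no injectivity needed), and $\Ptr \Rightarrow \Pcont$ by rewriting $\xvec_j = \psi_j^{-1}(\psi_j(\xvec_j))$ and then cancelling $\psi_j^{-1}$ by injectivity. Restricting injectivity to the code supports is a sensible sharpening of the paper's global hypothesis. Note, though, that $\psi_i(\xvec_i) \in \operatorname{supp}(\psi_i\#\mu_i)$ holds only for $\mu$-a.e.\ $x$, so either keep your a.s.\ reading or use the images $\psi_i(X^i)\cup\psi_j(X^j)$ instead of the supports.
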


\begin{proof}
\underline{$\Ptr \Leftarrow\Pcont$:} This direction does not require the injectivity assumption. Let $x \in X$ and $i, j \in I$. Suppose $\Pcont$. Then:
\begin{align*}
\Pcont : \psi_i (\xvec_i) = \psi_j (\xvec_j) &\Rightarrow \psi^{-1}_j ( \psi_i (\xvec_i)) = \psi^{-1}_j ( \psi_j (\xvec_j))\\
&\Rightarrow \psi^{-1}_j ( \psi_i (\xvec_i)) = \xvec_j: \Ptr& \text{by $\Pdcy$}
\end{align*}

\underline{$\Ptr \Rightarrow\Pcont$:} Let $x \in X$ and $i, j \in I$. Suppose $\Ptr$. Then :
\begin{align*}
\Ptr : (\psi^{-1}_j \circ \psi_i) (\xvec_i) = x_j &\Rightarrow \psi^{-1}_j ( \psi_i (\xvec_i)) = \psi^{-1}_j ( \psi_j (\xvec_j)) & \text{by $\Pdcy$}\\
&\Rightarrow \psi_i (\xvec_i) = \psi_j (\xvec_j) : \Pcont & \text{by injectivity of $\psi_j^{-1}$}
\end{align*}
\end{proof}

\paragraph{}Then, we prove in \Cref{prop:PtrPtrdist} the trivial result that pair-wise alignment implies distributional alignment.

\begin{proposition}
\label{prop:PtrPtrdist} $\Ptr \Rightarrow \Ptrdist \quad \land \quad \Pcont \Rightarrow \Pcontdist$
\end{proposition}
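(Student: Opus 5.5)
The plan is to prove both implications by a single pushforward argument. If two measurable maps agree $\mu$-almost everywhere on $X$, then their pushforwards of $\mu$ coincide. The only work is to express each side of $\Ptrdist$ and $\Pcontdist$ as a pushforward of the common base measure $\mu$ on $X$, and then use the instance-wise property pointwise in $x$.

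For the first implication, I will use that $\mu_i = \phi_i \# \mu$ and that pushforward is functorial: $(f\circ g)\#\mu = f\#(g\#\mu)$. This gives
\[
(\psi^{-1}_j \circ \psi_i)\#\mu_i = (\psi^{-1}_j \circ \psi_i \circ \phi_i)\#\mu .
\]
Similarly, $\mu_j = \phi_j\#\mu$. The property $\Ptr$ says that for every $x \in X$ (or $\mu$-a.e.) we have $(\psi^{-1}_j\circ\psi_i)(\phi_i(x)) = \phi_j(x)$, i.e.\ $\psi^{-1}_j\circ\psi_i\circ\phi_i = \phi_j$ as maps on $X$. Pushing $\mu$ forward through two equal maps yields $(\psi^{-1}_j \circ \psi_i)\#\mu_i = \mu_j$, which is $\Ptrdist$.

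The second implication is identical with $\psi_i\circ\phi_i$ and $\psi_j\circ\phi_j$ in place of these maps. We have $\psi_i\#\mu_i = (\psi_i\circ\phi_i)\#\mu$ and $\psi_j\#\mu_j = (\psi_j\circ\phi_j)\#\mu$. The property $\Pcont$ states $\psi_i(\phi_i(x)) = \psi_j(\phi_j(x))$ for all $x$, so the two pushforwards agree, giving $\Pcontdist$.

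The main point to handle, though hardly an obstacle, is measure-theoretic bookkeeping. I must assume the maps $\phi_i,\psi_i,\psi^{-1}_j$ are measurable, so that pushforwards are defined. I also need to note that equality on the support of $\mu$ (or $\mu$-a.e.) suffices, since $(f\#\mu)(B) = \mu(f^{-1}(B))$ and $f^{-1}(B)$, $g^{-1}(B)$ differ by a $\mu$-null set when $f=g$ $\mu$-a.e. I will state this one-line fact explicitly and then invoke it twice.
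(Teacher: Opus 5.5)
Your proposal is correct and takes essentially the paper's route. The paper unwinds the pushforward on a measurable set $A$ and uses $\Ptr$ to get $\phi_i^{-1}((\psi_j^{-1}\circ\psi_i)^{-1}(A)) = \phi_j^{-1}(A)$, and does the analogous set identity for $\Pcont$; this is exactly your identity $(\psi_j^{-1}\circ\psi_i)\#\mu_i = (\psi_j^{-1}\circ\psi_i\circ\phi_i)\#\mu$ combined with equality of the composite maps. Your remark that $\mu$-a.e.\ equality suffices is a harmless slight strengthening of the paper's pointwise version.
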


\begin{proof}
\underline{$\Ptr \Rightarrow \Ptrdist$:} Assume $\Ptr$, i.e. for all sample $x \in X$ and domains $i, j \in I$,
\[
(\psi_j^{-1} \circ \psi_i)(\xvec_i) = \xvec_j,
\quad \text{where } \xvec_i = \phi_i(x), \; \xvec_j = \phi_j(x).
\]
Let $A \subseteq X^j$ be measurable. Then, by definition of the pushforward measure,
\begin{align*}
\big((\psi_j^{-1} \circ \psi_i)\# \mu_i\big)(A)
&= \mu_i\big((\psi_j^{-1} \circ \psi_i)^{-1}(A)\big) \\
&= \mu\big(\phi_i^{-1}((\psi_j^{-1} \circ \psi_i)^{-1}(A))\big).
\end{align*}
For $x \in X$, using $\Ptr$,
\[
x \in \phi_i^{-1}((\psi_j^{-1} \circ \psi_i)^{-1}(A))
\;\Leftrightarrow\;
(\psi_j^{-1} \circ \psi_i)(\phi_i(x)) \in A
\;\Leftrightarrow\;
\phi_j(x) \in A.
\]
Hence
\[
\phi_i^{-1}((\psi_j^{-1} \circ \psi_i)^{-1}(A)) = \phi_j^{-1}(A),
\]
and therefore
\begin{align*}
\big((\psi_j^{-1} \circ \psi_i)\# \mu_i\big)(A)
&= \mu(\phi_j^{-1}(A)) \\
&= \mu_j(A).
\end{align*}
Thus $(\psi_j^{-1} \circ \psi_i)\# \mu_i = \mu_j$.

\underline{$\Pcont \Rightarrow \Pcontdist$:} Assume $\Pcont$, i.e. for all sample $x \in X$ and domains $i, j \in I$,
\[
\psi_i(\xvec_i) = \psi_j(\xvec_j),
\quad \text{where } \xvec_i = \phi_i(x), \; \xvec_j = \phi_j(x).
\]
Let $B \subseteq \R^K$ be measurable. Then
\begin{align*}
(\psi_i \# \mu_i)(B)
&= \mu_i(\psi_i^{-1}(B)) \\
&= \mu(\phi_i^{-1}(\psi_i^{-1}(B))).
\end{align*}
For $x \in X$, using $\Pcont$,
\[
x \in \phi_i^{-1}(\psi_i^{-1}(B))
\;\Leftrightarrow\;
\psi_i(\phi_i(x)) \in B
\;\Leftrightarrow\;
\psi_j(\phi_j(x)) \in B
\;\Leftrightarrow\;
x \in \phi_j^{-1}(\psi_j^{-1}(B)).
\]
Thus
\[
\phi_i^{-1}(\psi_i^{-1}(B)) = \phi_j^{-1}(\psi_j^{-1}(B)),
\]
and therefore
\begin{align*}
(\psi_i \# \mu_i)(B)
&= \mu(\phi_j^{-1}(\psi_j^{-1}(B))) \\
&= \mu_j(\psi_j^{-1}(B)) \\
&= (\psi_j \# \mu_j)(B).
\end{align*}
Hence $\psi_i \# \mu_i = \psi_j \# \mu_j$.
\end{proof}

\subsection{Our distribution losses are MMDs}

\begin{proposition}
\label{prop:EP_is_MMD}
Let $\mu,\nu$ be probability measures on $\R^d$, and let $w : \R \to \R_+$ be an integrable weighting function. For $\uvec \in \mathcal{S}^{d-1}$, define
\[
\mathrm{EP}_{\uvec}(\mu,\nu)
\coloneqq \int_{-\infty}^{\infty}
\left|
\varphi_{\mu,\uvec}(t) - \varphi_{\nu,\uvec}(t)
\right|^2 w(t)\,dt,
\]
where $\varphi_{\mu,\uvec}(t) \coloneqq \E_{x \sim \mu} e^{it\langle \xvec,\uvec\rangle}$ is the characteristic function of the one-dimensional projection of $\mu$ along $\uvec$.
Then $\mathrm{EP}_{\uvec}(\mu,\nu)$ is a Maximum Mean Discrepancy:
\[
\mathrm{EP}_{\uvec}(\mu,\nu)
= \mathrm{MMD}^2_{k_{\uvec}}(\mu,\nu),
\]
with respect to the positive definite kernel
\[
k_{\uvec}(x,y)
\coloneqq \int_{-\infty}^{\infty}
e^{it\langle \xvec-\yvec,\uvec\rangle} w(t)\,dt.
\]
\end{proposition}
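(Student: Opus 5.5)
The plan is to expand the squared modulus and use Fubini, so that the weighted $L^2$ distance between characteristic functions becomes a double expectation of a kernel. Write $\varphi_{\mu,\uvec}(t) = \E_{\xvec\sim\mu} e^{it\langle \xvec,\uvec\rangle}$ and use $|a-b|^2 = a\bar a - a\bar b - \bar a b + b\bar b$. Each product of characteristic functions becomes a double expectation over independent copies. For example,
\[
\varphi_{\mu,\uvec}(t)\overline{\varphi_{\nu,\uvec}(t)} = \E_{\xvec\sim\mu,\,\yvec\sim\nu}\, e^{it\langle \xvec-\yvec,\uvec\rangle},
\]
and similarly for the $\mu\mu$ and $\nu\nu$ terms.

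Next, integrate against $w(t)\,dt$ and swap the $t$-integral with the expectations. This is justified by Fubini--Tonelli: the integrand is bounded in modulus by $w(t)$, which is integrable, and $\mu,\nu$ are probability measures. This gives
\[
\mathrm{EP}_{\uvec}(\mu,\nu) = \E_{\mu\otimes\mu} k_{\uvec}(\xvec,\xvec') - \E_{\mu\otimes\nu} k_{\uvec}(\xvec,\yvec) - \E_{\nu\otimes\mu} k_{\uvec}(\yvec,\xvec) + \E_{\nu\otimes\nu} k_{\uvec}(\yvec,\yvec').
\]
This is exactly the standard closed form of $\mathrm{MMD}^2_{k}$ for a kernel $k$ (Gretton et al.), once $k_{\uvec}$ is known to be a genuine positive definite kernel. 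The two cross terms are complex conjugates of each other. Since $k_{\uvec}(\yvec,\xvec) = \overline{k_{\uvec}(\xvec,\yvec)}$, they combine into $2\,\mathrm{Re}$ of one of them, and the expression is real.

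The remaining step, which I expect to be the only real subtlety, is that $k_{\uvec}$ is a valid real, symmetric, positive definite kernel.

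For positive definiteness, I would show directly that for any points $\xvec_1,\dots,\xvec_n$ and coefficients $c_1,\dots,c_n\in\mathbb{C}$,
\[
\sum_{a,b} c_a\bar c_b\, k_{\uvec}(\xvec_a,\xvec_b) = \int \Bigl|\sum_a c_a e^{it\langle \xvec_a,\uvec\rangle}\Bigr|^2 w(t)\,dt \ge 0,
\]
which uses $w\ge 0$ (this is Bochner's theorem in one direction).

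For realness and symmetry, note that $k_{\uvec}(\xvec,\yvec) = \hat w(\langle \yvec-\xvec,\uvec\rangle)$, up to sign convention, is the Fourier transform of $w$ evaluated at a projected difference. It is real and symmetric when $w$ is even, e.g. the Gaussian weight used in practice. In general the kernel is Hermitian, $k(\xvec,\yvec)=\overline{k(\yvec,\xvec)}$, and the MMD identity still holds with the RKHS taken to be complex. I would state that remark, or add the mild assumption that $w$ is even, to keep everything real.

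Finally, I would identify the feature map $\xvec\mapsto (t\mapsto e^{it\langle \xvec,\uvec\rangle})\in L^2(w\,dt)$. The mean embeddings are then exactly $\varphi_{\mu,\uvec}$ and $\varphi_{\nu,\uvec}$, so $\mathrm{EP}_{\uvec}$ is the squared distance between mean embeddings, i.e. $\mathrm{MMD}^2_{k_{\uvec}}$. This gives a second, more conceptual, proof of the same identity.
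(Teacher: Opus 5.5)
Your proposal is correct and follows the same route as the paper: expand $|\varphi_{\mu,\uvec}-\varphi_{\nu,\uvec}|^2$, use independence and Fubini to turn each product of characteristic functions into a double expectation of $k_{\uvec}$, and get positive definiteness from $w \ge 0$ (the easy direction of Bochner). Your treatment of the cross terms is slightly more precise than the paper's, whose final line drops the real part: you combine them as twice the real part and note that $k_{\uvec}$ is real only when $w$ is even and Hermitian otherwise.
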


\begin{proof}
We expand the squared difference:
\begin{align*}
\mathrm{EP}_{\uvec}(\mu,\nu)
&= \int \left|
\varphi_{\mu,\uvec}(t) - \varphi_{\nu,\uvec}(t)
\right|^2 w(t)\,dt \\
&= \int \left(
\varphi_{\mu,\uvec}(t)\overline{\varphi_{\mu,\uvec}(t)}
+ \varphi_{\nu,\uvec}(t)\overline{\varphi_{\nu,\uvec}(t)}
- 2 \Re\big(\varphi_{\mu,\uvec}(t)\overline{\varphi_{\nu,\uvec}(t)}\big)
\right) w(t)\,dt.
\end{align*}
Using independence of samples and Fubini's theorem,
\begin{align*}
\int \varphi_{\mu,\uvec}(t)\overline{\varphi_{\mu,\uvec}(t)} w(t)\,dt
&= \E_{\xvec,\xvec' \sim \mu}
\int e^{it\langle \xvec-\xvec',\uvec\rangle} w(t)\,dt \\
&= \E_{\xvec,\xvec' \sim \mu} k_{\uvec}(\xvec,\xvec').
\end{align*}
Similarly,
\[
\int \varphi_{\nu,\uvec}(t)\overline{\varphi_{\nu,\uvec}(t)} w(t)\,dt
= \E_{\yvec,\yvec' \sim \nu} k_{\uvec}(\yvec,\yvec'),
\]
and
\[
\int \varphi_{\mu,\uvec}(t)\overline{\varphi_{\nu,\uvec}(t)} w(t)\,dt
= \E_{\xvec \sim \mu,\, \yvec \sim \nu} k_{\uvec}(\xvec,\yvec).
\]
Therefore,
\[
\mathrm{EP}_{\uvec}(\mu,\nu)
= \E_{\xvec,\xvec' \sim \mu} k_{\uvec}(\xvec,\xvec')
+ \E_{\yvec,\yvec' \sim \nu} k_{\uvec}(\yvec,\yvec')
- 2 \E_{\xvec \sim \mu,\, \yvec \sim \nu} k_{\uvec}(\xvec,\yvec),
\]
which is exactly $\mathrm{MMD}^2_{k_{\uvec}}(\mu,\nu)$.

Finally, since $k_{\uvec}$ is the Fourier transform of the nonnegative function $w$, it is positive definite by Bochner's theorem.
\end{proof}

\subsection{Linear Alignment}

This section contains the analytical solutions in the linear regression case, along with the corresponding proofs.

\subsubsection{Reconstruction}
\label{app:proof_pca}

Recall \Cref{lem:pca} :

\begin{lemma*}
Consider the self-reconstruction objectives
\begin{equation}
\Ldcy^x = \frac{1}{n} \|\Xmat - \Xmat \Wmat_x \Vmat_x\|_F^2,
\qquad
\Ldcy^y = \frac{1}{n} \|\Ymat - \Ymat \Wmat_y \Vmat_y\|_F^2.
\end{equation}
Then, for each domain independently, any minimizer satisfies:
\begin{itemize}
    \item The encoder $\Wmat_x$ (resp.\ $\Wmat_y$) spans the top-$K$ principal subspace of $\Sigmamat_{xx}$ (resp.\ $\Sigmamat_{yy}$).
    \item The optimal decoder is given by $\Vmat_x = \Wmat_x^\top$ (resp.\ $\Vmat_y = \Wmat_y^\top$) up to a change of basis.
\end{itemize}
\end{lemma*}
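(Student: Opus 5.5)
We reduce the problem to the Eckart--Young--Mirsky theorem, applied separately to each domain; the $y$ case is identical to the $x$ case. The product $\Mmat \coloneqq \Wmat_x \Vmat_x \in \R^{d_x \times d_x}$ has rank at most $K$, so $\Xmat \Wmat_x \Vmat_x = \Xmat \Mmat$ is a matrix of rank at most $K$. Eckart--Young therefore bounds $\Ldcy^x$ below by $\frac{1}{n}\sum_{k>K}\sigma_k^2$, where $\sigma_k$ are the singular values of $\Xmat$. Writing the SVD $\Xmat = \Umat \Smat \Qmat^\top$, this bound is attained by the choice $\Wmat_x = \Qmat_K$ and $\Vmat_x = \Qmat_K^\top$, where $\Qmat_K$ collects the top-$K$ right singular vectors. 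These vectors are the top-$K$ eigenvectors of $\Sigmamat_{xx} = \frac{1}{n}\Xmat^\top\Xmat$. This shows that the minimum value is exactly the PCA residual and that the PCA pair is optimal.

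The main step is to characterize \emph{every} minimizer, not just to exhibit one. We would proceed in two stages.

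\textbf{Stage 1: optimal decoder for a fixed encoder.} Fix $\Wmat_x$ and set $\Zmat_x = \Xmat\Wmat_x$. The problem in $\Vmat_x$ is then ordinary least squares, so $\Vmat_x = (\Zmat_x^\top\Zmat_x)^{+}\Zmat_x^\top\Xmat$. The resulting reconstruction $\Xmat\Wmat_x\Vmat_x$ is the orthogonal projection of $\Xmat$ onto the column space of $\Zmat_x$. The loss becomes $\frac{1}{n}\|\Xmat\|_F^2 - \frac{1}{n}\operatorname{tr}(\Pmat_{\Zmat}\Xmat\Xmat^\top)$, where $\Pmat_{\Zmat}$ is that projection.

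\textbf{Stage 2: optimizing over the encoder.} Maximizing $\operatorname{tr}(\Pmat_{\Zmat}\Xmat\Xmat^\top)$ over projections of rank at most $K$ is a Ky Fan problem. Assuming a spectral gap $\sigma_K > \sigma_{K+1}$, it is solved exactly when the column space of $\Zmat_x$ equals the span of the top-$K$ left singular vectors. Equivalently, $\Wmat_x$ restricted to the row space of $\Xmat$ spans the top-$K$ principal subspace of $\Sigmamat_{xx}$. Components of $\Wmat_x$ lying in the null space of $\Xmat$ are irrelevant, which is what ``spans up to'' should be read as.

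For the decoder identification, write $\Wmat_x = \Qmat_K \Rmat$ with $\Rmat$ invertible (after discarding null-space components). Substituting into the least-squares formula gives $\Vmat_x = \Rmat^{-1}\Qmat_K^\top$. This is $\Wmat_x^\top$ up to the change of basis $\Rmat$, and it equals $\Wmat_x^\top$ exactly when $\Rmat$ is orthogonal, for instance under the whitening/orthonormality normalization discussed after the lemma.

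The main obstacle is the precise statement of uniqueness. Without an eigengap, the top-$K$ subspace is itself not unique, and the claim must then be phrased as ``spans some top-$K$ principal subspace''. In addition, the $\Rmat$ ambiguity forces the hedge ``up to a change of basis''. We would state the gap assumption explicitly and handle rank deficiency through the pseudoinverse.
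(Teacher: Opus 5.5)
Your proposal is correct and follows the same route as the paper, which rewrites $\Ldcy^x$ as a rank-$K$ approximation of $\Xmat$ and invokes Eckart--Young. The paper then only asserts that every minimizer has the form $\Wmat_x=\Vmat_K\Rmat$ (with $\Vmat_K$ the top-$K$ right singular vectors) and decoder $\Rmat^{-1}\Umat_K^\top$, which is a dimensional slip for $\Rmat^{-1}\Vmat_K^\top$, i.e.\ your $\Rmat^{-1}\mathbf{Q}_K^\top$; your decoder-elimination plus Ky Fan argument, with its explicit eigengap and null-space caveats, is what actually justifies the ``any minimizer'' part of the statement.
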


\begin{proof}
We treat the $\Xmat$ domain. The $\Ymat$ domain is identical. We omit the $x$ sub- and superscripts for ease of notation, and rewrite the objective
\[
\min_{\Wmat, \Vmat} \frac{1}{n} \|\Xmat - \Xmat \Wmat \Vmat\|_F^2
\]
as
\[
\min_{\rank(\Mmat) \le K} \|\Xmat - \Mmat\|_F^2,
\]
where $\rank(\Mmat)$ denotes the rank of $\Mmat$. Let $\Xmat = \Umat \Sigmamat \Vmat^\top$ be the singular value decomposition of $\Xmat$, where $\Umat \in \R^{n \times n}$ and $\Vmat \in \R^{d_x \times d_x}$ are orthogonal, and $\Sigmamat \in \R^{n \times d_x}$ is diagonal with nonnegative entries. Then, by the Eckart-Young theorem, the optimal solution is given by the truncated SVD of $\Xmat$:
\[
\Mmat = \Umat_K \Sigmamat_K \Vmat_K^\top,
\]
where $\Umat_K \in \R^{n \times K}$ and $\Vmat_K \in \R^{d_x \times K}$ are the matrices of the top-$K$ left and right singular vectors, and $\Sigmamat_K \in \R^{K \times K}$ is the diagonal matrix of the top-$K$ singular values. Therefore, any optimal factorization verifies
\[
\Xmat \Wmat \Vmat = \Umat_K \Sigmamat_K \Vmat_K^\top.
\]
The general solution is given by $\Wmat = \Vmat_K \Rmat$ and $\Vmat = \Rmat^{-1} \Umat_K^\top$ for any invertible matrix $\Rmat \in \R^{K \times K}$.
\end{proof}

\subsubsection{Concept Consistency}
\label{app:proof_cca}

\begin{lemma*}[Degeneracy of concept consistency]
In the absence of additional constraints, minimizing $\Lcont$ is ill-posed: for any pair $(\Wmat_x, \Wmat_y)$, and any orthogonal matrix $\Rmat \in \R^{K \times K}$, the transformation
\begin{equation}
\Wmat_x \rightarrow \Wmat_x \Rmat, \qquad
\Wmat_y \rightarrow \Wmat_y \Rmat
\end{equation}
leaves the objective unchanged. Moreover, the objective admits degenerate solutions (e.g., collapse to zero).
\end{lemma*}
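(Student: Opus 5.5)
The proof is a direct computation on the objective $\Lcont = \frac{1}{n}\|\Xmat\Wmat_x - \Ymat\Wmat_y\|_F^2$, and I would split it into its two claims: the orthogonal invariance and the existence of degenerate minimizers.

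\textbf{Invariance.} I would factor the transformed residual as $\Xmat\Wmat_x\Rmat - \Ymat\Wmat_y\Rmat = (\Xmat\Wmat_x - \Ymat\Wmat_y)\Rmat$. For any matrix $\Amat$ and orthogonal $\Rmat$, the Frobenius norm is unchanged by right multiplication:
\[
\|\Amat\Rmat\|_F^2 = \operatorname{tr}(\Amat\Rmat\Rmat^\top\Amat^\top) = \operatorname{tr}(\Amat\Amat^\top) = \|\Amat\|_F^2 .
\]
So $\Lcont$ takes the same value at $(\Wmat_x\Rmat, \Wmat_y\Rmat)$ as at $(\Wmat_x, \Wmat_y)$. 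Minimizers therefore come in orbits under the orthogonal group $O(K)$, and the code basis is not identified. I would also note that the same factorization shows the minimum value is preserved under any invertible $\Rmat$ applied to a zero-loss pair. This is what makes the problem ill-posed in the stronger sense.

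\textbf{Degeneracy.} Since $\Lcont \ge 0$, any pair attaining $0$ is a global minimizer. The choice $\Wmat_x = 0$, $\Wmat_y = 0$ gives $\Zmat_x = \Zmat_y = 0$ and hence $\Lcont = 0$. This collapse carries no information about $\Xmat$ or $\Ymat$.

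\textbf{Main obstacle.} The only point needing care is to show that collapse is not an isolated artifact. I would exhibit the scaling family $(c\Wmat_x, c\Wmat_y)$, whose loss is $c^2\Lcont(\Wmat_x, \Wmat_y) \to 0$ as $c \to 0$. This shows the objective has no scale normalization, so every direction is pushed toward collapse. This scale freedom is exactly what the whitening constraints of \Cref{lem:cca} will remove.
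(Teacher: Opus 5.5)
Your proposal is correct and follows the paper's proof essentially step for step. You factor the transformed residual as $(\Xmat\Wmat_x - \Ymat\Wmat_y)\Rmat$ and use orthogonal invariance of the Frobenius norm via the trace; you then note that $\Wmat_x = \Wmat_y = 0$ is a global minimizer because $\Lcont \ge 0$. More generally, any pair with $\Xmat\Wmat_x = \Ymat\Wmat_y$ is a global minimizer and stays zero-loss under any invertible $\Rmat$. Your extra observation $\Lcont(c\Wmat_x, c\Wmat_y) = c^2\Lcont(\Wmat_x,\Wmat_y)$ is a correct supplement, and it is more accurate than the paper's later description of this as ``invariant to joint rescaling.''
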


\begin{proof}\textbf{Invariance under change of basis.}
Let $\Rmat \in \R^{K \times K}$ be orthogonal, and define
\[
\widetilde{\Wmat}_x = \Wmat_x \Rmat, 
\qquad
\widetilde{\Wmat}_y = \Wmat_y \Rmat.
\]
Then the corresponding latent representations satisfy
\[
\widetilde{\Zmat}_x = \Xmat \widetilde{\Wmat}_x = \Zmat_x \Rmat,
\qquad
\widetilde{\Zmat}_y = \Ymat \widetilde{\Wmat}_y = \Zmat_y \Rmat.
\]

Therefore,
\begin{equation}
\begin{split}
\|\widetilde{\Zmat}_x - \widetilde{\Zmat}_y\|_F^2
&= \|(\Zmat_x - \Zmat_y)\Rmat\|_F^2 \\
&= \Tr\!\left(\Rmat^\top (\Zmat_x - \Zmat_y)^\top (\Zmat_x - \Zmat_y)\Rmat\right) \\
&= \Tr\!\left((\Zmat_x - \Zmat_y)^\top (\Zmat_x - \Zmat_y)\right) \\
&= \|\Zmat_x - \Zmat_y\|_F^2.
\end{split}
\end{equation}

\paragraph{Degeneracy.}
Consider $\Wmat_x = 0$ and $\Wmat_y = 0$. Then $\Zmat_x = \Zmat_y = 0$, and
\begin{equation}
\Lcont = 0,
\end{equation}
which is the global minimum. More generally, any pair $(\Wmat_x, \Wmat_y)$ such that $\Xmat \Wmat_x = \Ymat \Wmat_y$ achieves zero loss, regardless of the structure or rank of the representations.

\paragraph{Conclusion.}
The objective admits infinitely many minimizers due to invariance under invertible transformations, and includes degenerate solutions such as collapse to zero. Therefore, it is ill-posed without additional constraints.
\end{proof}

\begin{lemma*}[CCA from concept consistency with whitening]
Consider minimizing $\Lcont$ under the whitening constraints
\begin{equation}
\frac{1}{n} \Zmat_x^\top \Zmat_x = \Imat_K, \qquad
\frac{1}{n} \Zmat_y^\top \Zmat_y = \Imat_K.
\end{equation}
Then the optimal encoders $(\Wmat_x, \Wmat_y)$ are given by the top-$K$ canonical directions between $\Xmat$ and $\Ymat$, i.e., they solve the canonical correlation analysis (CCA) problem.
\end{lemma*}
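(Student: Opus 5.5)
The plan is to expand the squared Frobenius norm and use the whitening constraints to remove every term except the cross-correlation term. This reduces the minimization of $\Lcont$ to a trace maximization, which is exactly the CCA problem.

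Writing $\Zmat_x = \Xmat\Wmat_x$ and $\Zmat_y = \Ymat\Wmat_y$, we have
\begin{equation*}
\Lcont = \tfrac{1}{n}\Tr(\Zmat_x^\top\Zmat_x) + \tfrac{1}{n}\Tr(\Zmat_y^\top\Zmat_y) - \tfrac{2}{n}\Tr(\Zmat_x^\top\Zmat_y).
\end{equation*}
Under the constraints, the first two terms both equal $K$. Hence
\begin{equation*}
\Lcont = 2K - 2\Tr(\Wmat_x^\top\Sigmamat_{xy}\Wmat_y),
\end{equation*}
and minimizing $\Lcont$ is equivalent to maximizing $\Tr(\Wmat_x^\top\Sigmamat_{xy}\Wmat_y)$ subject to $\Wmat_x^\top\Sigmamat_{xx}\Wmat_x = \Imat_K$ and $\Wmat_y^\top\Sigmamat_{yy}\Wmat_y = \Imat_K$. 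This is the standard multi-component CCA formulation.

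To solve it, I would assume $\Sigmamat_{xx}$ and $\Sigmamat_{yy}$ are invertible (adding a small ridge or restricting to their ranges otherwise) and change variables to $\Amat = \Sigmamat_{xx}^{1/2}\Wmat_x$ and $\Bmat = \Sigmamat_{yy}^{1/2}\Wmat_y$. The constraints then become $\Amat^\top\Amat = \Imat_K$ and $\Bmat^\top\Bmat = \Imat_K$. The objective becomes $\Tr(\Amat^\top\Tmat\Bmat)$ with the coherence matrix $\Tmat = \Sigmamat_{xx}^{-1/2}\Sigmamat_{xy}\Sigmamat_{yy}^{-1/2}$.

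The key step is showing that the maximum of $\Tr(\Amat^\top\Tmat\Bmat)$ over pairs of Stiefel matrices is the sum of the top-$K$ singular values of $\Tmat$. I would invoke von Neumann's trace inequality:
\begin{equation*}
\Tr(\Amat^\top\Tmat\Bmat) = \Tr(\Tmat\Bmat\Amat^\top) \le \sum_k \sigma_k(\Tmat)\,\sigma_k(\Bmat\Amat^\top).
\end{equation*}
The singular values of $\Bmat\Amat^\top$ are $K$ ones followed by zeros, so the bound is $\sum_{k\le K}\sigma_k(\Tmat)$. Equality is attained at $\Amat = \Umat_K$ and $\Bmat = \Vmat_K$, the top-$K$ left and right singular vectors of $\Tmat$. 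Back-substituting gives $\Wmat_x = \Sigmamat_{xx}^{-1/2}\Umat_K$ and $\Wmat_y = \Sigmamat_{yy}^{-1/2}\Vmat_K$. These are exactly the top-$K$ canonical directions, and the $\sigma_k$ are the canonical correlations.

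I expect the main obstacle to be characterizing all maximizers rather than just exhibiting one. The first point is that, consistent with \Cref{lem:cont_degenerate}, optimal encoders are determined only up to a common orthogonal rotation $\Wmat_x\Rmat$, $\Wmat_y\Rmat$, which preserves both the constraints and the trace. The second is handling ties among the singular values. I would run the equality case of von Neumann's inequality to show that any maximizer satisfies $\Amat = \Umat_K\Rmat$ and $\Bmat = \Vmat_K\Rmat$ for some orthogonal $\Rmat$ when $\sigma_K > \sigma_{K+1}$. The statement "given by the top-$K$ canonical directions" should then be read as identifying the canonical subspaces up to this rotation.
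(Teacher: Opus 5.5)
Your proposal is correct and follows the paper's proof essentially step for step: you expand $\Lcont$, use whitening to fix the two quadratic terms at $K$, change variables to $\Sigmamat_{xx}^{1/2}\Wmat_x$ and $\Sigmamat_{yy}^{1/2}\Wmat_y$, and read off the top-$K$ singular vectors of $\Sigmamat_{xx}^{-1/2}\Sigmamat_{xy}\Sigmamat_{yy}^{-1/2}$. Your von Neumann trace bound and its equality case (maximizers unique up to a common rotation when $\sigma_K > \sigma_{K+1}$) make rigorous the optimality of the SVD solution and the ``up to an arbitrary orthogonal transform'' claim, both of which the paper asserts without proof.
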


\begin{proof}
Recall that
\begin{equation}
\Lcont 
= \frac{1}{n} \|\Xmat \Wmat_x - \Ymat \Wmat_y\|_F^2.
\end{equation}
Expanding the Frobenius norm yields
\begin{align}
\Lcont
&= \Tr(\Wmat_x^\top \Sigmamat_{xx} \Wmat_x)
+ \Tr(\Wmat_y^\top \Sigmamat_{yy} \Wmat_y)
- 2\,\Tr(\Wmat_x^\top \Sigmamat_{xy} \Wmat_y).
\end{align}

Under the whitening constraints
\begin{equation}
\Wmat_x^\top \Sigmamat_{xx} \Wmat_x = \Imat_K,
\qquad
\Wmat_y^\top \Sigmamat_{yy} \Wmat_y = \Imat_K,
\end{equation}
the first two terms are constant and equal to $K$. Therefore, minimizing $\mathcal{L}_{\mathrm{cont}}$ is equivalent to maximizing
\begin{equation}
\Tr(\Wmat_x^\top \Sigmamat_{xy} \Wmat_y).
\end{equation}
subject to the whitening constraints. This is exactly the CCA problem, whose solution is given by the top-$K$ canonical directions between $\Xmat$ and $\Ymat$.

\paragraph{Closed-form solution.} Let $\widetilde{\Wmat}_x = \Sigmamat_{xx}^{\frac{1}{2}} \Wmat_x$ and $\widetilde{\Wmat}_y = \Sigmamat_{yy}^{\frac{1}{2}} \Wmat_y$. Then the whitening constraints become $\widetilde{\Wmat}_x^\top \widetilde{\Wmat}_x = \Imat_K$ and $\widetilde{\Wmat}_y^\top \widetilde{\Wmat}_y = \Imat_K$, and the objective becomes
\begin{equation}
\Tr(\widetilde{\Wmat}_x^\top \Mmat \widetilde{\Wmat}_y),
\end{equation}
where $\Mmat = \Sigmamat_{xx}^{-\frac{1}{2}} \Sigmamat_{xy} \Sigmamat_{yy}^{-\frac{1}{2}}$. Let $\Mmat = \Umat \Sigmamat \Vmat^\top$ be the singular value decomposition of $\Mmat$. Then the optimal solution is given by $\widetilde{\Wmat}_x = \Umat_K$ and $\widetilde{\Wmat}_y = \Vmat_K$, where $\Umat_K$ and $\Vmat_K$ are the matrices of the top-$K$ left and right singular vectors of $\Mmat$. Therefore, the optimal encoders are given by $\Wmat_x = \Sigmamat_{xx}^{-\frac{1}{2}} \Umat_K$ and $\Wmat_y = \Sigmamat_{yy}^{-\frac{1}{2}} \Vmat_K$ up to an arbitrary orthogonal transform $\Rmat \in \R^{K \times K}$.
\end{proof}

\begin{lemma*}[Concept consistency with reconstruction]
\label{lem:cont_dcy_proof}
Consider the objective
\begin{equation}
\mathcal{L}(\Wmat_x, \Wmat_y, \Vmat_x, \Vmat_y)
=
\Lcont
+ \lambda \left(
\Ldcy^x
+
\Ldcy^y
\right).
\end{equation}

\textbf{(i) General case.}
After optimizing over $(\Vmat_x, \Vmat_y)$, the objective depends only on the subspaces spanned by $(\Wmat_x, \Wmat_y)$ and combines:
\begin{itemize}
    \item a consistency term favoring alignment between $\Xmat \Wmat_x$ and $\Ymat \Wmat_y$,
    \item reconstruction terms favoring principal subspaces of $\Sigmamat_{xx}$ and $\Sigmamat_{yy}$.
\end{itemize}
However, the problem is ill-posed due to scaling degeneracies in the consistency term.

\textbf{(ii) Whitening.}
If we impose the whitening constraints
\begin{equation}
\Wmat_x^\top \Sigmamat_{xx} \Wmat_x = \Imat_K, \qquad
\Wmat_y^\top \Sigmamat_{yy} \Wmat_y = \Imat_K,
\end{equation}
then:
\begin{itemize}
    \item the reconstruction terms become constant and do not influence the encoders,
    \item the objective reduces to $\max \Tr(\Wmat_x^\top \Sigmamat_{xy} \Wmat_y)$, whose solution is given by the top-$K$ canonical directions (CCA),
    \item the optimal decoders are uniquely determined (up to invariances) as the corresponding least-squares readouts.
\end{itemize}
In particular, under whitening, the solution is independent of $\lambda$.
\end{lemma*}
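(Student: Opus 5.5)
The plan is to first eliminate the decoders in closed form and then study the reduced objective over the encoders alone, treating (i) and (ii) in turn.

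\textbf{Decoder elimination.} For fixed $(\Wmat_x, \Wmat_y)$, $\Lcont$ does not involve the decoders. The reconstruction terms decouple into two ordinary least-squares problems, $\min_{\Vmat_x} \frac{1}{n}\|\Xmat - \Zmat_x \Vmat_x\|_F^2$ and its $\Ymat$ analogue.

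\begin{itemize}
\item The minimizer is $\Vmat_x = (\Zmat_x^\top \Zmat_x)^{+} \Zmat_x^\top \Xmat = (\Wmat_x^\top \Sigmamat_{xx} \Wmat_x)^{+} \Wmat_x^\top \Sigmamat_{xx}$.
\item The residual is $\Tr(\Sigmamat_{xx}) - \Tr\big(\Pmat_{\Zmat_x} \Xmat \Xmat^\top\big)/n$, where $\Pmat_{\Zmat_x}$ is the orthogonal projector onto the column span of $\Zmat_x = \Xmat\Wmat_x$.
\end{itemize}

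Hence the profiled reconstruction term depends only on $\mathrm{span}(\Zmat_x)$. Equivalently, it depends only on the subspace $\mathrm{span}(\Wmat_x)$ modulo $\ker \Xmat$, and it is invariant under $\Wmat_x \to \Wmat_x \Rmat$ for invertible $\Rmat$. Maximizing $\Tr(\Pmat_{\Zmat_x}\Xmat\Xmat^\top)$ is the PCA criterion of \Cref{lem:pca}, so this term favors the top principal subspace.

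\textbf{Part (i).} The consistency term, expanded as in the proof of \Cref{lem:cca}, is $\Tr(\Wmat_x^\top \Sigmamat_{xx}\Wmat_x) + \Tr(\Wmat_y^\top \Sigmamat_{yy}\Wmat_y) - 2\Tr(\Wmat_x^\top \Sigmamat_{xy}\Wmat_y)$. This term rewards alignment of $\Xmat\Wmat_x$ and $\Ymat\Wmat_y$.

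\begin{itemize}
\item Replace $(\Wmat_x,\Wmat_y)$ by $(c\Wmat_x, c\Wmat_y)$ with $c \to 0$. Then $\Lcont$ scales as $c^2 \to 0$.
\item The profiled reconstruction terms are unchanged for every $c \neq 0$, since the spans are unchanged.
\item So the infimum of $\mathcal{L}$ is $\lambda$ times the sum of the two PCA reconstruction optima.
\item This infimum is approached along rescalings of any pair of principal subspaces but is not attained in a meaningful way.
\end{itemize}

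This is the scaling degeneracy inherited from \Cref{lem:cont_degenerate}, which establishes ill-posedness.

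\textbf{Part (ii).} Under the whitening constraints, $\Wmat_x^\top\Sigmamat_{xx}\Wmat_x = \Imat_K$, so $\Zmat_x$ has full column rank $K$.

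\begin{itemize}
\item The profiled reconstruction residual becomes $\Tr(\Sigmamat_{xx}) - \Tr(\Wmat_x^\top \Sigmamat_{xx}^2 \Wmat_x)$.
\item This is not obviously constant. This is the main obstacle, and the claim "reconstruction terms become constant" must be justified.
\item First I would try the regime $K \ge \mathrm{rank}(\Xmat)$ (or $K = d_x$). There $\mathrm{span}(\Zmat_x)$ is the full column space of $\Xmat$, so the residual is zero, hence constant.
\item Otherwise I would interpret "after accounting for degeneracies" as the statement's intended reading: whitening fixes the scale, and the decoder contribution is profiled out. The encoder objective is then defined by the consistency term under the constraint, and I would state this assumption explicitly rather than claim constancy in general.
\end{itemize}

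With the reconstruction terms constant, the first two terms of the expanded $\Lcont$ equal $K$. The problem reduces to $\max \Tr(\Wmat_x^\top\Sigmamat_{xy}\Wmat_y)$ under whitening, which is solved by CCA via \Cref{lem:cca}'s SVD of $\Sigmamat_{xx}^{-1/2}\Sigmamat_{xy}\Sigmamat_{yy}^{-1/2}$.

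Substituting into the least-squares formula gives the decoders $\Vmat_x = \Wmat_x^\top\Sigmamat_{xx}$ and $\Vmat_y = \Wmat_y^\top\Sigmamat_{yy}$. These are unique up to the orthogonal $\Rmat$ inherited from the encoders. Since $\lambda$ multiplies only constant terms and the decoders are fixed by first-order optimality regardless of $\lambda>0$, the solution is independent of $\lambda$.
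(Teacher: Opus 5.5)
Your decoder elimination and part (i) are correct and follow the same route as the paper: least-squares readouts, expansion of $\Lcont$, and the scaling degeneracy. Your version of the degeneracy is actually the accurate one. $\Lcont$ scales as $c^2\to 0$ under $(c\Wmat_x,c\Wmat_y)$, whereas the paper calls it rescaling-invariant.

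The problem is (ii). Your suspicion is right, and the step you flagged cannot be filled in, because it is false. Under whitening the profiled residual is $\Tr(\Sigmamat_{xx})-\Tr(\Wmat_x^\top\Sigmamat_{xx}^2\Wmat_x)$, and this varies over the constraint set. The paper's proof gets constancy by writing $\Tr(\Pmat_x\Sigmamat_{xx})=\Tr(\Wmat_x^\top\Sigmamat_{xx}\Wmat_x)=K$. But under whitening $\Pmat_x=\Wmat_x\Wmat_x^\top\Sigmamat_{xx}$, so the quantity equal to $K$ is $\Tr(\Pmat_x)$, the trace of a rank-$K$ idempotent. Meanwhile $\Tr(\Pmat_x\Sigmamat_{xx})=\Tr(\Wmat_x^\top\Sigmamat_{xx}^2\Wmat_x)$. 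A rescaling $\Xmat\to c\Xmat$ already rules out a residual of the form $\Tr(\Sigmamat_{xx})-K$.

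Set $\widetilde{\Wmat}_x=\Sigmamat_{xx}^{1/2}\Wmat_x$, $\widetilde{\Wmat}_y=\Sigmamat_{yy}^{1/2}\Wmat_y$ and $\Mmat=\Sigmamat_{xx}^{-1/2}\Sigmamat_{xy}\Sigmamat_{yy}^{-1/2}$. Then minimizing $\mathcal{L}$ under whitening is equivalent to
\[
\max_{\widetilde{\Wmat}_x^\top\widetilde{\Wmat}_x=\widetilde{\Wmat}_y^\top\widetilde{\Wmat}_y=\Imat_K}\ \Tr(\widetilde{\Wmat}_x^\top\Mmat\widetilde{\Wmat}_y)+\tfrac{\lambda}{2}\Big(\Tr(\widetilde{\Wmat}_x^\top\Sigmamat_{xx}\widetilde{\Wmat}_x)+\Tr(\widetilde{\Wmat}_y^\top\Sigmamat_{yy}\widetilde{\Wmat}_y)\Big),
\]
which is a genuinely $\lambda$-dependent trade-off between CCA and PCA.

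Here is a concrete counterexample. Take $K=1$, $\Sigmamat_{xx}=\Sigmamat_{yy}=\mathrm{diag}(1,s)$ with $s>1$, and $\Sigmamat_{xy}=\mathrm{diag}(\rho_1,\rho_2 s)$ with $1\ge\rho_1>\rho_2\ge 0$. This is a valid joint covariance, with $\Mmat=\mathrm{diag}(\rho_1,\rho_2)$.
\begin{itemize}
\item The CCA pair $\widetilde{\Wmat}_x=\widetilde{\Wmat}_y=\evec_1$ gives $\Ldcy^x=\Ldcy^y=s$ and $\mathcal{L}=2-2\rho_1+2\lambda s$.
\item The principal pair $\evec_2$ gives $\Ldcy^x=\Ldcy^y=1$ and $\mathcal{L}=2-2\rho_2+2\lambda$.
\end{itemize}
Once $\lambda(s-1)>\rho_1-\rho_2$, the CCA pair is not optimal. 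The reconstruction terms are not constant, the minimizer is not the top canonical pair, and it depends on $\lambda$.

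So instead of adopting an ``intended reading'' or assuming constancy, your write-up should say that (ii) fails as stated and give hypotheses under which your argument is valid. Your fallback $K\ge\mathrm{rank}\,\Xmat$ is nearly vacuous. Whitening forces $\mathrm{rank}(\Xmat\Wmat_x)=K$, so the fallback amounts to $K=\mathrm{rank}\,\Xmat=\mathrm{rank}\,\Ymat$, where no top-$K$ selection happens. A meaningful sufficient condition is isotropic covariances, $\Sigmamat_{xx}=a\Imat$ and $\Sigmamat_{yy}=b\Imat$ (for example, pre-whitened inputs). Then the $\lambda$-term above is constant on orthonormal frames. The rest of your proof then goes through verbatim: CCA encoders, decoders $\Vmat_x=\Wmat_x^\top\Sigmamat_{xx}$, and $\lambda$-independence.
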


\begin{proof}
We eliminate the decoders and derive the reduced objective.

\paragraph{Step 1: Optimal decoders.}
For fixed encoders, the reconstruction terms are independent least squares problems. As in the PCA case, the optimal decoders are
\begin{equation}
\Vmat_x^* = (\Wmat_x^\top \Sigmamat_{xx} \Wmat_x)^{-1} \Wmat_x^\top \Sigmamat_{xx},
\qquad
\Vmat_y^* = (\Wmat_y^\top \Sigmamat_{yy} \Wmat_y)^{-1} \Wmat_y^\top \Sigmamat_{yy},
\end{equation}
assuming invertibility.

Plugging back, the reconstructions correspond to projections onto the column spaces of $\Xmat \Wmat_x$ and $\Ymat \Wmat_y$, and the losses reduce to
\begin{equation}
\frac{1}{n} \|\Xmat - \widehat{\Xmat}\|_F^2
= \Tr(\Sigmamat_{xx}) - \Tr(\Pmat_x \Sigmamat_{xx}),
\quad
\frac{1}{n} \|\Ymat - \widehat{\Ymat}\|_F^2
= \Tr(\Sigmamat_{yy}) - \Tr(\Pmat_y \Sigmamat_{yy}),
\end{equation}
where
\begin{equation}
\Pmat_x = \Wmat_x (\Wmat_x^\top \Sigmamat_{xx} \Wmat_x)^{-1} \Wmat_x^\top \Sigmamat_{xx},
\quad
\Pmat_y = \Wmat_y (\Wmat_y^\top \Sigmamat_{yy} \Wmat_y)^{-1} \Wmat_y^\top \Sigmamat_{yy}.
\end{equation}

\paragraph{Step 2: Reduced objective.}
Up to constants independent of $(\Wmat_x, \Wmat_y)$, the objective becomes
\begin{equation}
\mathcal{L}_{\mathrm{red}}(\Wmat_x, \Wmat_y)
=
\frac{1}{n} \|\Xmat \Wmat_x - \Ymat \Wmat_y\|_F^2
- \lambda \left(
\Tr(\Pmat_x \Sigmamat_{xx})
+
\Tr(\Pmat_y \Sigmamat_{yy})
\right).
\end{equation}

Expanding the first term,
\begin{equation}
\frac{1}{n} \|\Xmat \Wmat_x - \Ymat \Wmat_y\|_F^2
=
\Tr(\Wmat_x^\top \Sigmamat_{xx} \Wmat_x)
+
\Tr(\Wmat_y^\top \Sigmamat_{yy} \Wmat_y)
- 2\,\Tr(\Wmat_x^\top \Sigmamat_{xy} \Wmat_y).
\end{equation}

\paragraph{Step 3: Variational characterization.}
We characterize the optimal subspaces without differentiating the reduced objective.

First, observe that the reconstruction terms depend only on the projection matrices $\Pmat_x$ and $\Pmat_y$, and therefore only on the column spaces of $\Wmat_x$ and $\Wmat_y$. As in the PCA case, maximizing
\[
\Tr(\Pmat_x \Sigmamat_{xx})
\]
selects the top-$K$ eigenspace of $\Sigmamat_{xx}$, and similarly for $\Sigmamat_{yy}$.

On the other hand, similarly to the pure CCA case, the consistency term
$\|\Xmat \Wmat_x - \Ymat \Wmat_y\|_F^2$
is ill-posed without additional constraints: it is invariant to joint rescaling of $(\Wmat_x, \Wmat_y)$ and admits degenerate solutions (e.g., $\Wmat_x = \Wmat_y = 0$).

To remove this degeneracy, we impose whitening constraints
\[
\Wmat_x^\top \Sigmamat_{xx} \Wmat_x = \Imat_K, \qquad
\Wmat_y^\top \Sigmamat_{yy} \Wmat_y = \Imat_K.
\]
Under these constraints, minimizing the consistency term is equivalent to maximizing
\[
\Tr(\Wmat_x^\top \Sigmamat_{xy} \Wmat_y),
\]
which yields the canonical directions, as in CCA.

Importantly, under whitening, the reconstruction terms become constant:
\[
\Tr(\Pmat_x \Sigmamat_{xx}) = \Tr(\Wmat_x^\top \Sigmamat_{xx} \Wmat_x) = K,
\]
and similarly for $\Ymat$. Therefore, the reconstruction objective no longer influences the solution, regardless of the value of $\lambda$. Its only effect is to identify the decoders.

As a result, the combined objective reduces exactly to CCA, and the optimal encoders are given by the top-$K$ canonical directions between $\Xmat$ and $\Ymat$.
\end{proof}

\subsubsection{Translation}

\begin{lemma*}[Translation and reduced-rank regression]
\label{lem:tr_rrr_proof}
Consider the (one-directional) translation objective
\begin{equation}
\Ltr^{x \to y}(\Wmat_x, \Vmat_y)
=
\frac{1}{n} \|\Ymat - \Xmat \Wmat_x \Vmat_y\|_F^2.
\end{equation}
Then minimizing $\Ltr^{x \to y}$ over $(\Wmat_x, \Vmat_y)$ is equivalent to RRR of $\Ymat$ onto $\Xmat$ with rank constraint $K$. The factorization $(\Wmat_x, \Vmat_y)$ is not identifiable: only the product $\Wmat_x \Vmat_y$ is. One such factorization recovers the CCA.
\end{lemma*}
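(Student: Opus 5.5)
The approach is to reduce the minimization to a low-rank approximation problem in whitened coordinates, mirroring the proof of \Cref{lem:pca}.

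\textbf{Step 1: rewrite as a rank-constrained problem.} Set $\Mmat = \Wmat_x \Vmat_y \in \R^{d_x \times d_y}$. As $(\Wmat_x, \Vmat_y)$ range over $\R^{d_x\times K}\times\R^{K\times d_y}$, $\Mmat$ ranges exactly over matrices of rank at most $K$. Hence
\[
\min_{\Wmat_x,\Vmat_y}\Ltr^{x\to y}=\min_{\rank(\Mmat)\le K}\tfrac1n\|\Ymat-\Xmat\Mmat\|_F^2 ,
\]
which is by definition RRR of $\Ymat$ onto $\Xmat$ with rank $K$.

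\textbf{Step 2: solve the RRR in closed form.} I assume $\Sigmamat_{xx}\succ 0$. Decompose $\Ymat = \Xmat\Bmat_{\mathrm{OLS}} + \Emat$ with $\Bmat_{\mathrm{OLS}}=\Sigmamat_{xx}^{-1}\Sigmamat_{xy}$. The residual satisfies $\Xmat^\top\Emat=0$, so by Pythagoras
\[
\tfrac1n\|\Ymat-\Xmat\Mmat\|_F^2=\mathrm{const}+\|\Sigmamat_{xx}^{1/2}(\Bmat_{\mathrm{OLS}}-\Mmat)\|_F^2 .
\]
Substituting $\widetilde{\Mmat}=\Sigmamat_{xx}^{1/2}\Mmat$ preserves rank, since $\Sigmamat_{xx}^{1/2}$ is invertible. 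The problem therefore becomes Eckart--Young applied to
\[
\Nmat=\Sigmamat_{xx}^{-1/2}\Sigmamat_{xy}=\Umat\Smat\Qmat^\top ,
\]
and the optimum is $\Mmat^*=\Sigmamat_{xx}^{-1/2}\Umat_K\Smat_K\Qmat_K^\top$.

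\textbf{Step 3: non-identifiability.} The loss depends on $(\Wmat_x,\Vmat_y)$ only through the product $\Wmat_x\Vmat_y$. For any invertible $\Rmat\in\R^{K\times K}$, the map $(\Wmat_x,\Vmat_y)\mapsto(\Wmat_x\Rmat,\Rmat^{-1}\Vmat_y)$ preserves the product and hence the loss. Thus only $\Mmat^*$ is determined, and the factors are determined only up to $\mathrm{GL}_K$ (when the $K$-th singular gap is positive).

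\textbf{Step 4: exhibit the CCA factorization.} This is the main obstacle, since the RRR singular vectors are those of $\Sigmamat_{xx}^{-1/2}\Sigmamat_{xy}$, not of the CCA matrix $\Sigmamat_{xx}^{-1/2}\Sigmamat_{xy}\Sigmamat_{yy}^{-1/2}$. The natural choice is $\Wmat_x=\Sigmamat_{xx}^{-1/2}\Umat_K$, which satisfies the whitening constraint, with $\Vmat_y=\Smat_K\Qmat_K^\top$.

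The first thing I would try is to justify the CCA claim under the normalization implicit in the paper's CCA statements, namely $\Sigmamat_{yy}=\Imat$ (whitened targets), or equivalently the weighted loss $\|(\Ymat-\Xmat\Mmat)\Sigmamat_{yy}^{-1/2}\|_F^2$. Under that normalization $\Nmat$ coincides with the CCA matrix of \Cref{lem:cca}, so $\Wmat_x$ is exactly the top-$K$ canonical encoder. The decoder $\Vmat_y$ is then the least-squares readout of $\Ymat$ from the canonical variates, as in \Cref{lem:cont_dcy_proof}.

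Without that normalization, the claim that the CCA is recovered does not hold exactly in general. I would state this caveat explicitly and present the weighted (or $\Sigmamat_{yy}=\Imat$) case as the precise sense in which the statement holds.
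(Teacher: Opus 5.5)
Your proposal is correct. Through Step~3 it follows the paper's argument: the same reduction to rank-$K$ regression, the same Pythagorean split around $\Sigmamat_{xx}^{-1}\Sigmamat_{xy}$, and the same substitution by $\Sigmamat_{xx}^{1/2}$. You also make the $\mathrm{GL}_K$ ambiguity explicit, which the paper only asserts.

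The two part ways at the CCA clause, and there your version is the right one. Write $[\,\cdot\,]_K$ for the rank-$K$ truncated SVD. The paper factors $\Nmat=(\Nmat\Sigmamat_{yy}^{-1/2})\Sigmamat_{yy}^{1/2}$ and substitutes $\Cmat=\Dmat\Sigmamat_{yy}^{1/2}$. It then declares the truncated CCA matrix $[\Nmat\Sigmamat_{yy}^{-1/2}]_K$ to be the minimizer of $\|(\Dmat-\Nmat\Sigmamat_{yy}^{-1/2})\Sigmamat_{yy}^{1/2}\|_F^2$. That step is invalid. With a right weight, Eckart--Young must be applied after absorbing the weight, so the minimizer is $[\Nmat]_K\Sigmamat_{yy}^{-1/2}$. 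This simply undoes the substitution and returns your $\Mmat^*=\Sigmamat_{xx}^{-1/2}[\Nmat]_K$. The paper's formula $\Sigmamat_{xx}^{-1/2}[\Nmat\Sigmamat_{yy}^{-1/2}]_K\Sigmamat_{yy}^{1/2}$ instead minimizes the $\Sigmamat_{yy}^{-1}$-weighted loss $\frac1n\|(\Ymat-\Xmat\Wmat_x\Vmat_y)\Sigmamat_{yy}^{-1/2}\|_F^2$. That is the classical characterization of CCA as weighted reduced-rank regression, and exactly the normalization you propose.

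You can turn your caveat from an assertion into a proof. If $\sigma_K(\Nmat)>0$, then $\Mmat^*$ has rank $K$. Every factorization $\Wmat_x\Vmat_y=\Mmat^*$ then has $\mathrm{col}(\Wmat_x)=\mathrm{col}(\Mmat^*)=\Sigmamat_{xx}^{-1/2}\mathrm{span}(\Umat_K)$, with $\Umat_K$ as in your Step~2. A CCA factorization therefore requires the top-$K$ left singular subspaces of $\Nmat$ and $\Nmat\Sigmamat_{yy}^{-1/2}$ to coincide, and they need not.

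For instance, take $d_x=d_y=2$, $K=1$, $\Sigmamat_{xx}=\Imat_2$, $\Sigmamat_{yy}=\mathrm{diag}(1,0.01)$ and $\Sigmamat_{xy}=\mathrm{diag}(0.5,0.09)$. This is a valid joint covariance, with Schur complement $\mathrm{diag}(0.75,0.0019)$.
\begin{itemize}
\item The RRR optimum is $0.5\,e_1e_1^\top$, which forces $\Wmat_x\propto e_1$.
\item The canonical correlations are $0.5$ and $0.9$, so the CCA encoder is $\propto e_2$.
\item The paper's formula gives $0.09\,e_2e_2^\top$, with excess loss $0.25$ versus $0.0081$ for yours.
\end{itemize}
So the last clause of the statement is false as written for the unweighted objective. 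Your qualified version (whitened targets, or the $\Sigmamat_{yy}^{-1}$-weighted loss) is the one that can actually be proved.
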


\begin{proof}
Let $\Bmat = \Wmat_x \Vmat_y$. Then the problem reduces to
\begin{equation}
\min_{\rank(\Bmat) \le K} \frac{1}{n} \|\Ymat - \Xmat \Bmat\|_F^2,
\end{equation}
which is exactly RRR of $\Ymat$ onto $\Xmat$ with rank constraint $K$. The optimal unconstrained $\Bmat$ is given by $\Bmat_{\mathrm{OLS}} = \Sigmamat_{xx}^{-1} \Sigmamat_{xy}$.

We rewrite the problem as
\begin{equation}
\min_{\rank(\Bmat) \le K} \|\Ymat - \Xmat \Bmat_\mathrm{OLS}\|_F^2 + \|\Xmat (\Bmat - \Bmat_\mathrm{OLS})\|_F^2,
\end{equation}
where we used the Pythagorean theorem for least squares problems. The first term is constant. Using the definition of $\Sigmamat_{xx}$, we have
\begin{align*}
\|\Xmat (\Bmat - \Bmat_\mathrm{OLS})\|_F^2
&= n \|\Sigmamat_{xx}^{\frac{1}{2}} (\Bmat - \Bmat_\mathrm{OLS})\|_F^2\\
&= n \|\Cmat - \Sigmamat_{xx}^{-\frac{1}{2}} \Sigmamat_{xy}\|_F^2.
\end{align*}
with $\Cmat \coloneqq \Sigmamat_{xx}^{\frac{1}{2}} \Bmat$. We want to make the CCA appear. The only thing missing is the $\Sigmamat_{yy}^{-\frac{1}{2}}$ term. We introduce a whitening step: $\Sigmamat_{xx}^{-\frac{1}{2}} \Sigmamat_{xy} = \Mmat \Sigmamat_{yy}^{\frac{1}{2}}$, where $\Mmat = \Sigmamat_{xx}^{-\frac{1}{2}} \Sigmamat_{xy} \Sigmamat_{yy}^{-\frac{1}{2}}$, and $\Cmat = \Dmat \Sigmamat_{yy}^{\frac{1}{2}}$. Then, the objective becomes
\begin{equation}
\min_{\rank(\Dmat) \le K} \|\Dmat - \Mmat\|_{F,\Sigmamat_{yy}}^2
\end{equation}
Conveniently, $\Mmat$ is exactly the same as in CCA. We can express the solution in terms of the truncated SVD $\Umat_K \Sigmamat_K \Vmat_K^\top + \epsilon$ of $\Mmat$:
\begin{equation}
\Bmat = \Sigmamat_{xx}^{-\frac{1}{2}} \Umat_K \Sigmamat_K \Vmat_K^\top \Sigmamat_{yy}^{\frac{1}{2}}
\end{equation}
With this decomposition, we can identify a particular factorization $\Wmat_x = \Sigmamat_{xx}^{-\frac{1}{2}} \Umat_K$ and $\Vmat_y = \Sigmamat_K \Vmat_K^\top \Sigmamat_{yy}^{\frac{1}{2}}$. This factorization recovers the CCA solution for the encoders and the corresponding least-squares decoder. This factorization is not unique.
\end{proof}

\begin{lemma*}[Translation with reconstruction]
\label{lem:tr_dcy_proof}
Under the combined objective, the optimal encoders are again given by the CCA, while the decoders are given by an interpolation between the solutions of \Cref{lem:tr_rrr} and \Cref{lem:cont_dcy}.
\end{lemma*}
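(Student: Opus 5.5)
We will mirror the structure of the proof of \Cref{lem:cont_dcy_proof}: eliminate the decoders first, then characterize the encoders under the same whitening constraints used there. Throughout, $\Ltr$ is taken to be bidirectional, so the full objective is
\[
\frac{1}{n}\|\Ymat-\Xmat\Wmat_x\Vmat_y\|_F^2+\frac{1}{n}\|\Xmat-\Ymat\Wmat_y\Vmat_x\|_F^2+\frac{\lambda}{n}\|\Xmat-\Xmat\Wmat_x\Vmat_x\|_F^2+\frac{\lambda}{n}\|\Ymat-\Ymat\Wmat_y\Vmat_y\|_F^2 .
\]
We also impose $\Wmat_x^\top\Sigmamat_{xx}\Wmat_x=\Imat_K$ and $\Wmat_y^\top\Sigmamat_{yy}\Wmat_y=\Imat_K$. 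The text preceding the lemma states explicitly that whitening is still required in this case.

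\textbf{Step 1: decoders.} For fixed encoders, $\Vmat_y$ appears in one translation term and one reconstruction term, both least squares in $\Vmat_y$, and similarly for $\Vmat_x$. Setting gradients to zero and using whitening, so that the Gram matrices are $\Imat_K$, the normal equations are
\[
\Vmat_y^*=\tfrac{1}{1+\lambda}\bigl(\Wmat_x^\top\Sigmamat_{xy}+\lambda\,\Wmat_y^\top\Sigmamat_{yy}\bigr),
\qquad
\Vmat_x^*=\tfrac{1}{1+\lambda}\bigl(\Wmat_y^\top\Sigmamat_{yx}+\lambda\,\Wmat_x^\top\Sigmamat_{xx}\bigr).
\]
This is already the claimed interpolation. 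The first summand is the RRR least-squares readout of \Cref{lem:tr_rrr}, which equals $\Sigmamat_K\Vmat_K^\top\Sigmamat_{yy}^{1/2}$ at the CCA encoders. The second summand is the self-reconstruction readout of \Cref{lem:cont_dcy}. The weights are $1/(1+\lambda)$ and $\lambda/(1+\lambda)$.

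\textbf{Step 2: reduced objective.} We substitute $\Vmat^*$ back in. Writing $\Amat=\Wmat_x^\top\Sigmamat_{xy}$ and $\Bmat=\Wmat_y^\top\Sigmamat_{yy}$, the $y$-part equals, up to constants,
\[
-\tfrac{1}{1+\lambda}\|\Amat+\lambda\Bmat\|_F^2 \;+\; \|\Amat\|_F^2+\lambda\|\Bmat\|_F^2 \;-\;\text{const}.
\]
This uses the identity for a sum of quadratics sharing a common Hessian, and it simplifies to $\tfrac{\lambda}{1+\lambda}\|\Amat-\Bmat\|_F^2$ plus terms that whitening fixes. Expanding $\|\Amat-\Bmat\|_F^2$ then requires care, because $\|\Wmat_y^\top\Sigmamat_{yy}\|_F^2$ and $\|\Wmat_x^\top\Sigmamat_{xy}\|_F^2$ are not constant under whitening alone.

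\textbf{Main obstacle.} The step we expect to be hardest is showing that the residual non-constant terms do not move the optimum away from CCA. A cleaner route is to pass to whitened coordinates $\widetilde\Wmat_x=\Sigmamat_{xx}^{1/2}\Wmat_x$ and $\widetilde\Wmat_y=\Sigmamat_{yy}^{1/2}\Wmat_y$, both with orthonormal columns. With $\Mmat=\Sigmamat_{xx}^{-1/2}\Sigmamat_{xy}\Sigmamat_{yy}^{-1/2}$, the cross term becomes $-2\Tr(\widetilde\Wmat_x^\top\Mmat\widetilde\Wmat_y)$, which is exactly the CCA objective. The remaining terms are quadratic forms in the $\Sigmamat^{1/2}$-weighted norms.

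We then argue in one of two ways. Either we verify that the top-$K$ singular pair $(\Umat_K,\Vmat_K)$ of $\Mmat$ is a stationary point that dominates; alternatively, following the style of \Cref{lem:cont_dcy_proof}, we note that the reconstruction contributions depend only on the column spaces. If the first route fails, we will assume isotropic within-domain covariances after whitening, consistent with the paper's treatment, so that these terms become constant. Under either route the trace maximization is solved by CCA, giving $\Wmat_x=\Sigmamat_{xx}^{-1/2}\Umat_K$ and $\Wmat_y=\Sigmamat_{yy}^{-1/2}\Vmat_K$ up to orthogonal $\Rmat$.

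Plugging these encoders into Step 1 yields the explicit interpolated decoders. The limits check the statement: $\lambda\to0$ recovers the factorization of \Cref{lem:tr_rrr}, and $\lambda\to\infty$ recovers the readouts of \Cref{lem:cont_dcy}.
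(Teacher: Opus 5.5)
\textbf{There is a genuine gap in the encoder step.} Your Step 1 is correct and is the paper's first step. Under whitening, $\Vmat_y^*=\frac{1}{1+\lambda}(\Wmat_x^\top\Sigmamat_{xy}+\lambda\Wmat_y^\top\Sigmamat_{yy})$, and symmetrically for $\Vmat_x^*$, which gives the decoder interpolation. The encoder claim, however, is false without further assumptions, so neither of your first two routes can close it.

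After substituting $\Vmat^*$, the $y$-part is $(1+\lambda)\Tr(\Sigmamat_{yy})-\frac{1}{1+\lambda}\|\Amat+\lambda\Bmat\|_F^2$. Your display adds $\|\Amat\|_F^2+\lambda\|\Bmat\|_F^2$, which are exactly the terms whitening does \emph{not} fix. Minimizing the loss therefore means maximizing
\[
F=\|\Wmat_x^\top\Sigmamat_{xy}+\lambda\Wmat_y^\top\Sigmamat_{yy}\|_F^2+\|\Wmat_y^\top\Sigmamat_{xy}^\top+\lambda\Wmat_x^\top\Sigmamat_{xx}\|_F^2 .
\]
Here is a counterexample:
\begin{itemize}
\item Take $d_x=d_y=2$, $K=1$, $\Sigmamat_{xx}=\Sigmamat_{yy}=\mathrm{diag}(1,s)$ and $\Sigmamat_{xy}=\mathrm{diag}(\rho_1,\rho_2 s)$ with $1>\rho_1>\rho_2>0$. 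Then $\Mmat=\mathrm{diag}(\rho_1,\rho_2)$.
\item The CCA pair, unique up to sign, is $\Wmat_x=\Wmat_y=e_1$, with $F=2(\rho_1+\lambda)^2$.
\item The whitened pair $\Wmat_x=\Wmat_y=e_2/\sqrt{s}$ gives $F=2s(\rho_2+\lambda)^2$. This is larger once $s>(\rho_1+\lambda)^2/(\rho_2+\lambda)^2$.
\end{itemize}
The losses are measured in raw output coordinates, so high-variance directions win. At $\lambda=0$ this is the familiar fact that RRR and CCA differ in general when $\Sigmamat_{yy}\not\propto\Imat$.

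The column-space route does not help either. The decoders are shared between translation and reconstruction. Moreover, even a pure reconstruction readout captures $\Tr(\widetilde\Wmat_x^\top\Sigmamat_{xx}\widetilde\Wmat_x)$, which is not constant under whitening.

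\textbf{What is needed.} Isotropy is a necessary hypothesis, not a fallback, and it must hold in the original coordinates: $\Sigmamat_{xx}=\sigma_x^2\Imat$ and $\Sigmamat_{yy}=\sigma_y^2\Imat$. Isotropy ``after whitening'' holds automatically and buys nothing. Even under this hypothesis, the leftover terms are not constant, contrary to your claim. In whitened coordinates,
\[
F=\sigma_y^2\Tr(\widetilde\Wmat_x^\top\Mmat\Mmat^\top\widetilde\Wmat_x)+\sigma_x^2\Tr(\widetilde\Wmat_y^\top\Mmat^\top\Mmat\widetilde\Wmat_y)+2\lambda(\sigma_x^2+\sigma_y^2)\Tr(\widetilde\Wmat_x^\top\Mmat\widetilde\Wmat_y)+\lambda^2K(\sigma_x^2+\sigma_y^2).
\]
Ky Fan's theorem handles the first two terms and von Neumann's trace inequality the third. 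Together they show that $(\Umat_K\Rmat,\Vmat_K\Rmat)$ maximizes all three terms at once. With that hypothesis and that argument, your proof goes through.

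\textbf{Comparison with the paper.} The paper's proof does your Step 1 and then asserts that substitution leaves $\Tr(\Wmat_x^\top\Sigmamat_{xy}\Wmat_y)$ plus a $\lambda$-term that is constant under whitening. That is the expansion of $\Lcont$, not of the objective with $\Vmat^*$ plugged in. So the paper skips exactly the step you flagged and does not supply the missing hypothesis either.
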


\begin{proof}
As before, we start by eliminating the decoders given $\Wmat_x$ and $\Wmat_y$. Due to the combined effect of self-reconstruction and translation, we get, assuming invertibility,
\begin{equation}
\Vmat_x^* = (\Wmat_y^\top \Sigmamat_{yy} \Wmat_y + \lambda \Wmat_x^\top \Sigmamat_{xx} \Wmat_x)^{-1} (\Wmat_y^\top \Sigmamat_{xy}^\top + \lambda \Wmat_x^\top \Sigmamat_{xx})
\end{equation}
and similarly for $\Vmat_y^*$.

The next step is to assume whitening. We get
\begin{equation}
\Vmat_x^* = \frac{1}{n(1+\lambda)} \left(\Wmat_y^\top \Sigmamat_{xy}^\top + \lambda \Wmat_x^\top \Sigmamat_{xx}\right),
\end{equation}
and similarly for $\Vmat_y^*$. Plugging back, and thanks to whitening, the objective reduces to
\begin{equation}
\max_{\Wmat_x, \Wmat_y} \Tr(\Wmat_x^\top \Sigmamat_{xy} \Wmat_y) + \lambda \left(\Tr(\Wmat_x^\top \Sigmamat_{xx} \Wmat_x) + \Tr(\Wmat_y^\top \Sigmamat_{yy} \Wmat_y)\right),
\end{equation}
The term in $\lambda$ becomes constant again. We add a whitening step to the $\Wmat$ to make the CCA appear. Let $\widetilde{\Wmat}_x = \Sigmamat_{xx}^{\frac{1}{2}} \Wmat_x$ and $\widetilde{\Wmat}_y = \Sigmamat_{yy}^{\frac{1}{2}} \Wmat_y$, and define $\Mmat = \Sigmamat_{xx}^{-\frac{1}{2}} \Sigmamat_{xy} \Sigmamat_{yy}^{-\frac{1}{2}}$. The objective reduces to
\begin{equation}
\max_{\widetilde{\Wmat}_x, \widetilde{\Wmat}_y} \Tr(\widetilde{\Wmat}_x^\top \Mmat \widetilde{\Wmat}_y),
\end{equation}
and we again get the CCA solution for the encoders. The difference is that now, the decoders interpolate between self-projections and cross-projections, with a weighting controlled by $\lambda$.
\end{proof}

\subsubsection{Cycle consistency}

\begin{lemma}
\label{lem:cycle_proof}
Under the $\Lcy$ objective, the encoders and decoders recover the PCA.
\end{lemma}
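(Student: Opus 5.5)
The plan is to reduce the cycle objective to two decoupled low-rank approximation problems. We then show that their independent optima can be attained simultaneously, and finally read off the structure of the composite maps $\Amat = \Wmat_x \Vmat_y \in \R^{d_x \times d_y}$ and $\Bmat = \Wmat_y \Vmat_x \in \R^{d_y \times d_x}$. Using the linear notation, the objective is
\begin{equation*}
\Lcy = \tfrac{1}{n}\|\Xmat - \Xmat\Amat\Bmat\|_F^2 + \tfrac{1}{n}\|\Ymat - \Ymat\Bmat\Amat\|_F^2 .
\end{equation*}
Both $\Amat$ and $\Bmat$ have rank at most $K$, so $\Amat\Bmat$ and $\Bmat\Amat$ do as well. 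We assume $\Sigmamat_{xx}, \Sigmamat_{yy}$ are full rank, with a strict eigengap between the $K$-th and $(K+1)$-th eigenvalues, and $K \le \min(d_x,d_y)$.

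\textbf{Step 1 (lower bound).} The first term depends on $(\Amat,\Bmat)$ only through $\Mmat_x = \Amat\Bmat$, which has rank at most $K$. Exactly as in the proof of \Cref{lem:pca}, Eckart--Young bounds this term below by the PCA residual of $\Xmat$. The bound is attained if and only if $\Xmat\Mmat_x$ equals the truncated SVD of $\Xmat$. By full rank of $\Xmat$ and the eigengap, this is equivalent to $\Amat\Bmat = \Pmat_x\Pmat_x^\top$, where $\Pmat_x$ collects the top-$K$ principal directions. Symmetrically, the second term is bounded by the PCA residual of $\Ymat$, with equality if and only if $\Bmat\Amat = \Pmat_y\Pmat_y^\top$. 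So $\Lcy$ is at least the sum of the two independent PCA losses.

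\textbf{Step 2 (attainability).} For any invertible $\Rmat \in \R^{K\times K}$, take $\Amat = \Pmat_x\Rmat\Pmat_y^\top$ and $\Bmat = \Pmat_y\Rmat^{-1}\Pmat_x^\top$. Using $\Pmat_y^\top\Pmat_y = \Pmat_x^\top\Pmat_x = \Imat_K$, we get $\Amat\Bmat = \Pmat_x\Pmat_x^\top$ and $\Bmat\Amat = \Pmat_y\Pmat_y^\top$. Hence the lower bound is achieved, and the minimizers are exactly the pairs satisfying both equalities. These composites are realized by $\Wmat_x = \Pmat_x\Rmat$, $\Vmat_y = \Pmat_y^\top$, $\Wmat_y = \Pmat_y$, $\Vmat_x = \Rmat^{-1}\Pmat_x^\top$, i.e.\ PCA encoders and decoders up to a change of basis.

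\textbf{Step 3 (characterization, the main step).} We show that every minimizer has this form.
\begin{itemize}
\item From $\Amat\Bmat = \Pmat_x\Pmat_x^\top$, the column space of $\Amat$ contains $\mathrm{span}(\Pmat_x)$. Since $\rank \Amat \le K$, it equals $\mathrm{span}(\Pmat_x)$.
\item From $\Bmat\Amat = \Pmat_y\Pmat_y^\top$, the row space of $\Amat$ contains $\mathrm{span}(\Pmat_y)$, hence equals it.
\item Therefore $\Amat = \Pmat_x\Rmat\Pmat_y^\top$ for some $K\times K$ matrix $\Rmat$. The same argument with roles swapped gives $\Bmat = \Pmat_y\Smat\Pmat_x^\top$.
\item Substituting gives $\Pmat_x\Rmat\Smat\Pmat_x^\top = \Pmat_x\Pmat_x^\top$, so $\Rmat\Smat = \Imat_K$. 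Thus $\Rmat$ is invertible and $\Smat = \Rmat^{-1}$.
\end{itemize}
This is the stated form. The apparent cross-domain coupling is only the arbitrary invertible $\Rmat$, which carries no information linking $\Xmat$ and $\Ymat$. The solution therefore coincides with two independent PCAs, as claimed.

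The main obstacle I expect is the equality case of Eckart--Young in Step 1. Passing from "$\Xmat\Amat\Bmat$ is the truncated SVD" to "$\Amat\Bmat = \Pmat_x\Pmat_x^\top$" needs injectivity of $\Xmat$ on $\R^{d_x}$ and uniqueness of the principal subspace. If $\Sigmamat_{xx}$ is singular or the eigengap fails, I would weaken the statement: the identity then holds only on the row space of $\Xmat$, and $\Pmat_x$ may be any orthonormal basis of a top-$K$ eigenspace. I would state these nondegeneracy assumptions explicitly rather than try to remove them.
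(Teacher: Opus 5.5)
Your proposal is correct and follows the paper's argument: you split $\Lcy$ into two Eckart--Young problems in $\Amat\Bmat$ and $\Bmat\Amat$, take the sum of the two PCA residuals as a lower bound, and attain it with $\Amat = \Pmat_x\Rmat\Pmat_y^\top$, $\Bmat = \Pmat_y\Rmat^{-1}\Pmat_x^\top$. Your Step~3 and your explicit full-rank and eigengap assumptions go beyond the paper, which only exhibits optimizers of this form. The paper neither shows that every minimizer has this form nor states the nondegeneracy conditions under which $\Pmat_x\Pmat_x^\top$ is the unique optimum.
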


\begin{proof}
Let $\Amat \coloneqq \Wmat_x \Vmat_y$ and $\Bmat \coloneqq \Wmat_y \Vmat_x$. Let $\Mmat \coloneqq \Amat \Bmat$ and $\Nmat \coloneqq \Bmat \Amat$. Then the cycle consistency objective can be rewritten as
\begin{equation}
\min_{\rank(\Amat) \le K, \rank(\Bmat) \le K} \frac{1}{n} \|\Xmat - \Xmat \Mmat\|_F^2 + \frac{1}{n} \|\Ymat - \Ymat \Nmat\|_F^2.
\end{equation}
Without the coupling constraint between $\Mmat$ and $\Nmat$, the problem would reduce to two independent PCA problems, whose solutions are given by the top-$K$ principal subspaces of $\Sigmamat_{xx}$ and $\Sigmamat_{yy}$: $\Mmat^* = \Pmat_x \Pmat_x^\top$ and $\Nmat^* = \Pmat_y \Pmat_y^\top$. This gives a lower bound on the objective. We now show that this lower bound is achievable, and therefore optimal. Let $\Rmat \in \R^{K \times K}$ be an arbitrary invertible matrix, and define
\begin{equation}
\Amat = \Pmat_x \Rmat \Pmat_y^\top, \qquad
\Bmat = \Pmat_y \Rmat^{-1} \Pmat_x^\top.
\end{equation}
Then $\Mmat = \Amat \Bmat = \Pmat_x \Rmat \Pmat_y^\top \Pmat_y \Rmat^{-1} \Pmat_x^\top = \Pmat_x \Pmat_x^\top = \Mmat^*$, and similarly $\Nmat = \Pmat_y \Pmat_y^\top = \Nmat^*$.
\end{proof}

\section{Additional results and ablations.}
\label{app:results_ablation}

\begin{table*}[h]
  \centering
  \scalebox{0.9}{\begin{tabular}{lcccccc}
    \toprule
    Regularization & $\Mextract$ ($\uparrow$) & $\Mtr$ ($\uparrow$) & $\Mcont$ ($\uparrow$) \\
    \midrule
    $\Ldcy$              & 0.99  & 0.00  & 0.00 \\
    \midrule
    $\Ldcy + \Ltr$       & 0.99  & 0.99  & 0.00 \\
    $\Ldcy + \Lcont$     & 1.00  & 0.98  & 0.99 \\
    \midrule
    $\Ldcy + \Ltrdist$   & 0.99  & 0.99  & 0.00 \\
    $\Ldcy + \Lcontdist$ & 1.00  & 0.98  & 0.99 \\
    \bottomrule
  \end{tabular}}
  \caption{The $\Pcont$ and $\Ptr$ duality \textbf{does not} hold. The instance-wise vs distributional duality \textbf{does} hold. SAE quality on \textbf{synthetic DGP} under different regularization regimes.}
  \label{tab:translation_vs_concept_consistency_synthetic}
\end{table*}

\begin{table*}[h]
  \centering
  \scalebox{0.9}{\begin{tabular}{lcccccc}
    \toprule
    Regularization & $\Mextract$ ($\uparrow$) & $\Mtr$ ($\uparrow$) & $\Mcont$ ($\uparrow$) \\
    \midrule
    $\Ldcy$              & \best{0.9128}  & 0.0177  & 0.0000 \\
    \midrule
    $\Ldcy + \Ltr$       & 0.9016  & 0.7283  & 0.2174 \\
    $\Ldcy + \Lcont$     & 0.5760  & 0.0554  & 0.6023 \\
    \midrule
    $\Ldcy + \Ltrdist$   & 0.8308  & 0.0250  & 0.0000 \\
    $\Ldcy + \Lcontdist$ & 0.9048  & 0.0522  & 0.0000 \\
    \midrule
    $\Ldcy + \Ltr + \Lcont$              & 0.8360  & \best{0.7331}  & 0.6192 \\
    $\Ldcy + \Ltrdist + \Lcontdist$      & 0.8926  & 0.0116  & 0.0000 \\
    mixed (1 in 1000)                    & 0.8933  & \secondbest{0.7317}  & \best{0.7946} \\
    \bottomrule
  \end{tabular}}
  \caption{\textbf{Ablation studies on the effect of regularization terms.} SAE quality on \textbf{vision embeddings} under different regularization regimes.}
  \label{tab:ablation_studies_vision}
\end{table*}

\begin{table*}[h]
  \centering
  \scalebox{0.9}{\begin{tabular}{lcccccc}
    \toprule
    Regularization & $\Mextract$ ($\uparrow$) & $\Mtr$ ($\uparrow$) & $\Mcont$ ($\uparrow$) \\
    \midrule
    $\Ldcy$              & \best{0.8956}  & 0.0000  & 0.0000 \\
    \midrule
    $\Ldcy + \Ltr$       & 0.7391  & 0.2974  & 0.2377 \\
    $\Ldcy + \Lcont$     & 0.8786  & 0.0000  & 0.3271 \\
    \midrule
    $\Ldcy + \Ltrdist$   & 0.7658  & 0.000  & 0.0000 \\
    $\Ldcy + \Lcontdist$ & 0.8879  & 0.000  & 0.0000 \\
    \midrule
    $\Ldcy + \Ltr + \Lcont$              & 0.7496  & 0.2765  & 0.3454 \\
    $\Ldcy + \Ltrdist + \Lcontdist$      & 0.8889  & 0.000  & 0.0000 \\
    mixed (1 in 1000)                    & 0.7814  & \best{0.5053}  & \best{0.5535} \\
    \bottomrule
  \end{tabular}}
  \caption{\textbf{Ablation studies on the effect of regularization terms.} SAE quality on \textbf{multimodal embeddings} under different regularization regimes.}
  \label{tab:ablation_studies_multimodal}
\end{table*}

In order to measure distributional alignment, we used a battery of distributional distances (Wasserstein, sliced Wasserstein, KL, ...) but found them all to be equally uninformative. We therefore rely on our distributional losses to provide statistical tests of distributional alignment. The distributional loss of element-wise regularised CoSAEs systematically converges to values of $\Ltr$ and $\Lcont$ below that of the distributionally-regularised CoSAEs.

\newpage

\end{document}